\documentclass[11pt]{article}
\usepackage[utf8]{inputenc}
\usepackage[margin=1in]{geometry}
\usepackage[T1]{fontenc} %
\linespread{1.09}
\usepackage{graphicx}
\usepackage{mathpazo}
\usepackage{hyperref}
\hypersetup{
  colorlinks = true,  
  urlcolor = {blueGrotto},
  linkcolor = {royalBlue},
  citecolor = {navyBlue}
}
\usepackage{setspace}
\usepackage{booktabs} 
\usepackage{multirow} 
\usepackage{multicol}

\usepackage{xcolor}
\definecolor{niceRed}{RGB}{190,38,38}
\definecolor{blueGrotto}{HTML}{059DC0}
\definecolor{royalBlue}{HTML}{057DCD}
\definecolor{navyBlue}{HTML}{0B579C}
\definecolor{limeGreen}{HTML}{81B622}
\definecolor{nicePurple}{HTML}{9c27b0}
\definecolor{lightRoyalBlue}{HTML}{def2ff} 
\definecolor{gold}{HTML}{ffa300}

\usepackage[utf8]{inputenc} %
\usepackage[T1]{fontenc}    %
\usepackage{hyperref}       %
\usepackage{url}            %
\usepackage{booktabs}       %
\usepackage{amsfonts}       %
\usepackage{nicefrac}       %
\usepackage{microtype}      %
\usepackage{xcolor}         %

\usepackage[
  backref=true,
  backend=biber,
  natbib=true,
  style=alphabetic,
  sorting=alphabeticlabel,
  sortcites=true,
  minbibnames=3,
  maxbibnames=999,
  mincitenames=1,
  maxcitenames=3,
  minalphanames=3,
  maxalphanames=4,
  doi=false, url=false, eprint=false, isbn=false, 
]{biblatex}

\DeclareSortingTemplate{alphabeticlabel}{
  \sort[final]{%
    \field{labelalpha} %
  }
  \sort{%
    \field{year}   %
  }
  \sort{%
    \field{title}  %
  }
}

\AtBeginRefsection{\GenRefcontextData{sorting=ynt}}
\AtEveryCite{\localrefcontext[sorting=ynt]}
\addbibresource{refs.bib}

\usepackage{xcolor}
\definecolor{niceRed}{RGB}{190,38,38}
\definecolor{blueGrotto}{HTML}{059DC0}
\definecolor{royalBlue}{HTML}{057DCD}
\definecolor{navyBlue}{HTML}{0B579C}
\definecolor{limeGreen}{HTML}{81B622}
\definecolor{nicePurple}{HTML}{9c27b0}
\definecolor{lightRoyalBlue}{HTML}{def2ff} 
\definecolor{gold}{HTML}{ffa300}

\usepackage[utf8]{inputenc} %
\usepackage[T1]{fontenc}    %
\usepackage{hyperref}       %
\usepackage{url}            %
\usepackage{booktabs}       %
\usepackage{amsfonts}       %
\usepackage{nicefrac}       %
\usepackage{microtype}      %
\usepackage{xcolor}         %

\usepackage{microtype}
\usepackage{booktabs} %
 
\usepackage{hyperref}

\usepackage{algpseudocode} 
\usepackage[linesnumbered,ruled,vlined]{algorithm2e}
\SetKwInOut{Oracles}{Query Access}

\usepackage{amsmath}
\usepackage{amssymb}
\usepackage{mathtools}
\usepackage{amsthm}
\usepackage{bbm}
\usepackage[capitalize,noabbrev,nameinlink]{cleveref}
\usepackage{nicefrac}
\usepackage{thm-restate}

\usepackage[textsize=small]{todonotes}
\usepackage{mathrsfs}
\usepackage{mathtools} 
\usepackage{dsfont}
\usepackage[framemethod=TikZ]{mdframed}
\usepackage{pifont} %
\usepackage{color,multicol} 
\usepackage[inline]{enumitem} 
\usepackage{multirow}

\usepackage{centernot}

\usepackage[many]{tcolorbox} 

\usepackage[normalem]{ulem}

\usepackage{subcaption}   %
\usepackage{overpic,xcolor}
\newcommand{\subfigtag}[1]{(#1)} %

\captionsetup[subfigure]{%
  justification=raggedright,   %
  singlelinecheck=false,       %
  margin=0pt,                  %
}

\theoremstyle{plain}
\newtheorem{inftheorem}{Informal Theorem}[section]
\newtheorem{theorem}{Theorem}[section]
\newtheorem{proposition}[theorem]{Proposition}
\newtheorem{lemma}[theorem]{Lemma}
\newtheorem{corollary}[theorem]{Corollary} 
\newtheorem{definition}[theorem]{Definition}
\theoremstyle{remark}
\newtheorem{remark}[theorem]{Remark}
\newtheorem{example}[theorem]{Example}
\newtheorem{observation}[theorem]{Observation}

\usepackage[T1]{fontenc} %
\usepackage{xcolor}                 %
\definecolor{bubblegreen}{RGB}{190,38,38}
\definecolor{bubblegray}{RGB}{241,240,240}
 
\usepackage{mathrsfs,dsfont,bbm}    %
\usepackage{multicol,multirow}      %
\usepackage[inline]{enumitem}       %
\usepackage[many]{tcolorbox}        %
\usepackage[normalem]{ulem}         %

\crefname{section}{Section}{Sections}
\crefname{theorem}{Theorem}{Theorems}
\crefname{assumption}{Assumption}{Assumptions}
\crefname{lemma}{Lemma}{Lemmas}
\crefname{definition}{Definition}{Definitions}
\crefname{conjecture}{Conjecture}{Conjectures}
\crefname{corollary}{Corollary}{Corollaries}
\crefname{construction}{Construction}{Constructions}
\crefname{conjecture}{Conjecture}{Conjectures}
\crefname{claim}{Claim}{Claims}
\crefname{observation}{Observation}{Observations}
\crefname{proposition}{Proposition}{Propositions}
\crefname{fact}{Fact}{Facts}
\crefname{question}{Question}{Questions}
\crefname{problem}{Problem}{Problems}
\crefname{remark}{Remark}{Remarks}
\crefname{example}{Example}{Examples}
\crefname{equation}{Equation}{Equations}
\crefname{appendix}{Appendix}{Appendices}
\crefname{algorithm}{Algorithm}{Algorithms}
\crefname{model}{Model}{Models}
\crefname{figure}{Figure}{Figures}

\AfterEndEnvironment{definition}{\noindent\ignorespaces}
\AfterEndEnvironment{infdefinition}{\noindent\ignorespaces}
\AfterEndEnvironment{example}{\noindent\ignorespaces}
\AfterEndEnvironment{assumption}{\noindent\ignorespaces}
\AfterEndEnvironment{lemma}{\noindent\ignorespaces}
\AfterEndEnvironment{theorem}{\noindent\ignorespaces}
\AfterEndEnvironment{proposition}{\noindent\ignorespaces}
\AfterEndEnvironment{fact}{\noindent\ignorespaces}
\AfterEndEnvironment{question}{\noindent\ignorespaces}
\AfterEndEnvironment{corollary}{\noindent\ignorespaces}
\AfterEndEnvironment{model}{\noindent\ignorespaces}
\AfterEndEnvironment{remark}{\noindent\ignorespaces}
\AfterEndEnvironment{proof}{\noindent\ignorespaces}
\AfterEndEnvironment{fact}{\noindent\ignorespaces}
\AfterEndEnvironment{minftheorem}{\noindent\ignorespaces}
\AfterEndEnvironment{inftheorem}{\noindent\ignorespaces}
\AfterEndEnvironment{maintheorem}{\noindent\ignorespaces}
\AfterEndEnvironment{restatable}{\noindent\ignorespaces}

\newcommand{\Stackrel}[2]{\stackrel{\mathmakebox[\widthof{\ensuremath{#2}}]{#1}}{#2}}
\newcommand{\eat}[1]{}
\newcommand{\yesnum}{\addtocounter{equation}{1}\tag{\theequation}}

\makeatletter
\newcommand{\customlabel}[2]{%
\protected@write \@auxout {}{\string \newlabel {#1}{{#2}{\thepage}{#2}{#1}{}} }%
\hypertarget{#1}{}
}
\makeatother

\newcommand{\quadtext}[1]{\quad\text{#1}\quad}
\newcommand{\qquadtext}[1]{\qquad\text{#1}\qquad}
 
\newcommand{\quadand}{\quadtext{and}}
\newcommand{\qquadand}{\qquadtext{and}}

\newcommand{\qquadwhere}{\qquadtext{where}}

\def\abs#1{\left| #1 \right|}
\def\sabs#1{| #1 |}

\newcommand{\sinparen}[1]{(#1)}
\newcommand{\sinbrace}[1]{\{#1\}}

\newcommand{\inbrace}[1]{\left\{#1\right\}}

\newcommand{\inparen}[1]{\left(#1\right)}
\newcommand{\insquare}[1]{\left[#1\right]}

\newcommand{\N}{\mathbb{N}}

\newcommand{\Z}{\mathbb{Z}}

\newcommand{\E}{\operatornamewithlimits{\mathbb{E}}}

\newcommand\ind{\mathds{1}}

\newcommand{\nfrac}[2]{\nicefrac{#1}{#2}}

\renewcommand{\epsilon}{\varepsilon}

\makeatletter
\newcommand*{\tran}{{\mathpalette\@tran{}}}
\newcommand*{\@tran}[2]{\raisebox{\depth}{$\m@th#1\intercal$}}
\makeatother

\newcommand{\wh}[1]{\widehat{#1}}

\def\<{\langle}
\def\>{\rangle}

\DeclareMathAlphabet{\mathpzc}{OT1}{pzc}{m}{it}

\newcommand{\customcal}[1]{\euscr{#1}}

\newcommand{\cC}{\customcal{C}}

\newcommand{\cI}{\customcal{I}}

\newcommand{\cL}{\customcal{L}}

\newcommand{\cP}{\customcal{P}}

\newcommand{\cX}{\customcal{X}}

\newcommand{\er}{\mathrm{er}}

\DeclareMathAlphabet{\mathdutchcal}{U}{dutchcal}{m}{n}
\SetMathAlphabet{\mathdutchcal}{bold}{U}{dutchcal}{b}{n}
\DeclareMathAlphabet{\mathdutchbcal}{U}{dutchcal}{b}{n}
 
\DeclareMathAlphabet\urwscr{U}{urwchancal}{b}{n}%
\DeclareMathAlphabet\rsfscr{U}{rsfso}{m}{n}
\DeclareMathAlphabet\euscr{U}{eus}{m}{n}
\DeclareFontEncoding{LS2}{}{}
\DeclareFontSubstitution{LS2}{stix}{m}{n}
\DeclareMathAlphabet\stixcal{LS2}{stixcal}{m} {n}

\newcommand{\itparagraph}[1]{\medskip \noindent\textit{#1}~~}
\renewcommand{\paragraph}[1]{\bigskip  \noindent\textbf{#1}~~}

\newcommand{\ie}{\textit{i.e.}}
\newcommand{\eg}{\textit{e.g.}}

\newcommand{\supp}{\operatorname{supp}}

\newcommand{\iid}{i.i.d.}

\renewcommand{\d}{{\rm d}} 

        \usepackage{tikz}
\usetikzlibrary{patterns}

\newcommand{\algo}[1]{\mathpzc{#1}}
\newcommand{\generator}{\algo{G}}

\renewcommand{\hat}{\widehat}

\usepackage{array}
\newcolumntype{L}[1]{>{\raggedright\let\newline\\\arraybackslash\hspace{0pt}}m{#1}}
\newcolumntype{C}[1]{>{\centering\let\newline\\\arraybackslash\hspace{0pt}}m{#1}}
\newcolumntype{R}[1]{>{\raggedleft\let\newline\\\arraybackslash\hspace{0pt}}m{#1}}

\setlist[enumerate]{leftmargin=20pt}
\setlist[itemize]{leftmargin=20pt}

\usepackage{soul}

\newcommand{\negspacePost}{\vspace{0mm}}
\newcommand{\negspacePre}{\vspace{0mm}}
\newcommand{\negparaspace}{\vspace{0mm}}

\newmdenv[
    backgroundcolor=lightgray!10, %
    roundcorner=5pt,            %
    linecolor=black,             %
    linewidth=1pt,               %
    innertopmargin=5pt,         %
    innerbottommargin=0pt,      %
    innerleftmargin=10pt,        %
    innerrightmargin=10pt,       %
    skipabove=5pt,              %
    skipbelow=0pt               %
]{curvybox}

\title{
On Characterizations for Language Generation:\\
Interplay of Hallucinations, Breadth, and Stability %
}

\author{\begin{tabular}{C{4.8cm}C{4.8cm}C{4.8cm}}
        {\bf Alkis Kalavasis} 
            & {\bf Anay Mehrotra} 
                & {\bf Grigoris Velegkas}\\[2mm]
        {Yale University} 
            & {Yale University} 
                & {Yale University}\\[1mm]
        \mbox{\small\texttt{\href{mailto:alkis.kalavasis@yale.edu}{alkis.kalavasis@yale.edu}}} 
            & \mbox{\small\texttt{\href{mailto:anaymehrotra1@gmail.com}{anaymehrotra1@gmail.com}}}
                & \mbox{\small\texttt{\href{grigoris.velegkas@yale.edu}{grigoris.velegkas@yale.edu}}}
        \\
        \end{tabular}
}

\date{}

\begin{document}

\maketitle

\begin{abstract}
  We study language generation in the limit -- introduced by \citet[NeurIPS]{kleinberg2024language} -- building on classical works of \citet[Inf. Control]{gold1967language} and \citet[STOC]{angluin1979finding}.
  \cite{kleinberg2024language}'s main result is an algorithm for generating from any countable language collection in the limit. 
  While their algorithm eventually generates unseen strings from the target language $K$, it sacrifices coverage or \emph{breadth}, \ie{}, its ability to generate a rich set of strings.
  Recent work introduces different notions of breadth and explores when generation with breadth is possible, leaving a full characterization of these notions open. 
  Our first set of results settles this by characterizing generation for existing notions of breadth and their natural extensions. 
  Interestingly, our lower bounds are very flexible and hold for many performance metrics beyond breadth -- for instance, showing that, in general, it is impossible to train generators which achieve a higher perplexity or lower hallucination rate for $K$ compared to other languages.
  Next, we study language generation with breadth and stable generators -- algorithms that eventually stop changing after seeing an arbitrary but finite number of strings -- and prove unconditional lower bounds for such generators, strengthening the results of \cite[STOC]{kalavasis2025limitslanguagegenerationtradeoffs} and demonstrating that generation with many existing notions of breadth becomes equally hard, when stability is required.
  This gives a separation for generation with approximate breadth, between stable and unstable generators, highlighting the rich interplay between breadth, stability, and consistency in language generation.

\end{abstract}

\newpage
\addtocontents{toc}{\protect\setcounter{tocdepth}{2}}
{
    \setstretch{1.05}
    \tableofcontents
}
\newpage

\negspacePre{}
\negspacePre{}
\section{Introduction}  
        \negspacePost{}
        \negspacePost{}
    
    Language generation has a rich history in computer science, dating back to the seminal work of \cite{shannon1951redundancy}, culminating in today’s Large Language Models (LLMs) that have revolutionized natural language processing and, more broadly, machine learning (ML). %
    Although the problem at the core of generation -- generate new and unseen strings given a sequence of examples from a target language $K$ -- is easy to state, a theoretical understanding of why LLMs are able to produce coherent text remains elusive. 
    Recently, \citet{kleinberg2024language} formalized this problem under a simple yet elegant model of \emph{language generation in the limit:} given a stream of strings from an unknown target language $K$ (belonging to a known collection of languages $\cL=\inbrace{L_1,L_2,\dots}$), learn to generate new, previously unseen, strings also belonging to this target language.
    
    Their model is reminiscent of online learning \citep{littlestone1988learning}; there are two players, the generator and the adversary who play the following game: First, the adversary fixes {a target language} $K \in \cL$ and an enumeration of $K$.\footnotemark{} Then, at any round $n \geq 1$, it presents the $n$-th element $x_n$ of the enumeration to the generator. The generator, given the strings $S_n = \{x_1,\dots,x_n\}$ seen so far, outputs a new string $w_n \notin S_n$ -- its guess for an unseen string in $K$.  
    The generator wins the game if eventually it learns ``to generate from $K$.'' {Formally}, the generator $\generator$ is said to generate from $\cL$ in the limit if for all $K \in \cL$ and any enumeration of $K$, there is a finite time $n^\star$ such that, {for any subsequent round }
    $n \geq n^\star$, $w_n$ is an unseen element of $K$, \ie{}, $w_n \in K \setminus S_n$.

    \negparaspace{}

    This model has deep connections to the classical works of \citet{gold1967language} and \citet{angluin1979finding,angluin1980inductive}, which studied the problem of language \emph{identification} in the limit. 
    In the Gold--Angluin model, like the above model, an algorithm observes an adversarially chosen enumeration of strings from some unknown target language $K=L_{i^\star}$.
    The only difference is that in the Gold--Angluin model the goal is to eventually \emph{identify} the index $i^\star$ of the correct language, whereas in the Kleinberg--Mullainathan (KM) model the goal is the simpler task of \emph{generation} -- \ie{}, of outputting unseen strings from $K$. 

    \negparaspace{}

    \footnotetext{An enumeration of $K$ is an infinite sequence of elements (potentially including duplicates) which does not contain elements outside $K$, and for every element $x \in K$ there is some position $n_x \in \N$ where $x$ appears.}
    
    Language identification turns out to be hard for essentially all infinite collections of languages. Indeed, Angluin showed that it is intractable for most interesting language collections, including regular languages.
    Surprisingly, \cite{kleinberg2024language} proved, in stark contrast, that language generation is tractable for \emph{{all}} countable collections of languages.
    They provided an elegant algorithm that, given any stream of input strings from a target language $K$ in a countable collection $\cL = \inbrace{L_1, L_2, \ldots}$, generates a sequence of previously unseen strings such that beyond a finite time step, all generated strings belong to the target language $K$. 

    \negparaspace{}
     
    \paragraph{Main Questions.}
    The KM algorithm eventually stops \emph{hallucinating}, as it ceases outputting elements outside of $K$ after a finite time. However, this property comes at a cost: the KM algorithm sacrifices \emph{breadth} -- \ie{}, the ability to generate diverse strings from the target language.
    As the algorithm eliminates hallucinations,  it generates from an increasingly smaller subset of the target language, resembling mode collapse in generative adversarial networks \citep{arjovsky2017towards}.
    This observation raises a fundamental question, left open by \citet{kleinberg2024language}:
    \begin{mdframed}
        \begin{center}
          \textbf{Question \#1.}~~  \emph{Is the trade-off between consistency and breadth inherent for generation? In other words, must any algorithm that eventually generates only valid strings from the target language necessarily sacrifice the ability to generate a broad subset of the language?}
        \end{center}
    \end{mdframed}
    To formalize this question, recent work \citep{kalavasis2025limitslanguagegenerationtradeoffs,charikar2024exploringfacetslanguagegeneration} 
    relaxed the requirement that the learner outputs one element
    at a time and allowed it to output a whole set of elements.
    This also allows for the case where, at some finite point, one can stop training and generate a rich set of responses.
    {With this change,} \citet*{kalavasis2025limitslanguagegenerationtradeoffs} proposed three distinct notions of breadth and showed that, for a large family of generators, language generation with breadth is as hard as language identification. 
    Adding to this result, \cite{charikar2024exploringfacetslanguagegeneration} proved the impossibility of generation with breadth for a specific language collection {(with any generator)}. While these results suggest a fundamental tension between consistency and breadth, a complete characterization of when different notions of generation \mbox{with breadth are achievable remains open.}

    \negparaspace{}

    Another intriguing direction initiated by \citet{kalavasis2025limitslanguagegenerationtradeoffs}  concerns the stability of generators: a stable generator is one that eventually stops changing its ``support,'' \ie, the set of elements it outputs, %
    after seeing a finite number of {distinct} strings from the target language. 
    Stability is a central object in online learning {and} has already been studied in language identification \citep{gold1967language}.
    {\citet{kalavasis2025limitslanguagegenerationtradeoffs} studied generation under stability showing that certain notions of generation with breadth are ``hard'' to achieve if generators (from a specific family) are required to be stable, but largely left characterizing the effect of stability on generation with other notions of breadth and with other generators outside this family open.}
    \begin{mdframed}
        \textbf{Question \#2.}~~\emph{How does stability interplay with consistency and breadth in language generation?}
    \end{mdframed}

    \subsection{Our Contributions and Technical Novelty}
    Our work is centered around answering Questions 1 and 2 in the model of language generation in the limit \citep{kleinberg2024language}.
    {Next, we describe our main results and techniques.}
    
    \paragraph{Results for Question \#1.}~~ %
    There are many notions of breadth in the literature, all attempting to {quantify} how much of the target language is covered by a generator.
    Our first set of results provides a complete characterization of all notions of breadth proposed in prior work (\cref{sec:results,apx:landscape}).
    {In \cref{sec:results}}, we illustrate our results with two of the simplest notions of breadth: \emph{exact breadth} and \emph{approximate breadth} \citep{kalavasis2025limitslanguagegenerationtradeoffs}.
    {Exact breadth is the strongest notion, requiring that after sufficiently many examples, the learner must be able to generate all unseen elements of the target language $K$.}
    {Approximate breadth relaxes this condition, requiring generators to output all but \textit{finitely} many unseen elements of $K$ after seeing enough examples.} 
    For exact breadth, we show that:
    \begin{inftheorem}[see \cref{thm:exact-breadth}]
        A generator $\generator$ can generate from a collection $\cL$ with exact breadth in the limit if and only if $\cL$ is identifiable in the limit.
        \label{infthm:1}
    \end{inftheorem}
    {Thus,} collections $\cL$ {admitting} generators with exact breadth are exactly those that are identifiable in the Gold--Angluin model; {{they} have a combinatorial characterization due to} \cite{angluin1979finding} {that we call \textit{Angluin's condition}}  (see \Cref{def:angliun-criterion}).
    This result
    strengthens \citet{kalavasis2025limitslanguagegenerationtradeoffs}'s lower bound {which only applied to generators with specific properties; since our result applies to all generators without assumptions, it requires a fundamentally different proof approach.}

    The above is essentially a negative result {because} the classes $\cL$ satisfying Angluin's condition are {known to be} very limited \citep{kleinberg2024language}. 
    {A natural follow-up question is whether relaxing the requirement to approximate breadth, where the generator can miss finitely many elements, might overcome this limitation. For this question, we show:}

    \begin{inftheorem}[see \cref{thm:approximate-breadth}]
        A generator $\generator$ can generate from a  collection $\cL$ with approximate breadth in the limit if and only if $\cL$ satisfies \mbox{weak Angluin's Condition (\cref{def:weakAngluin}).}
        \label{infthm:2}
    \end{inftheorem}
    A few remarks are in order. 
    First, the ``weak Angluin's condition'' (\cref{def:weakAngluin}) is a novel relaxation of Angluin's classic condition (\cref{def:angliun-criterion}) that we introduce {in this work}. 
    {We prove that this requirement is strictly weaker than Angluin's original condition ({\cref{rem:separation-angluin-weak-angluin}}), establishing that approximate breadth is strictly easier to achieve than exact breadth. Nevertheless, the weak Angluin's condition remains highly restrictive -- it is not even satisfied by regular languages, which are far simpler than human languages. This demonstrates that the trade-off between consistency and breadth is inherent and largely unavoidable, even when we weaken our breadth requirement.}

\itparagraph{Technical Novelty.}
    {We view generation with exact or approximate breadth as special cases of generation \textit{properties} relative to the target language. 
    Other such properties might include having uniquely low perplexity or hallucination rate for the target language compared to other languages} (see \cref{rem:beyondBreadth:1}).
    {Our characterizations of generation with breadth rely on two novel abstractions and also have consequences for other properties:}
    {The first is the uniqueness criterion (\cref{def:uniqueness}) which informally states that if generator $G$ satisfies property $P$ for language $L$, it cannot satisfy $P$ for any different language $L'$.}
    We prove the following implications: %
    \begin{enumerate}[itemsep=-1pt,leftmargin=15pt]
        \item[$\triangleright$] Properties $P$ with uniqueness can only be achieved for collections satisfying Angluin's condition.
        \item[$\triangleright$] {Exact breadth (like some other notions of breadth) satisfies} uniqueness, {establishing the necessity direction} of \Cref{infthm:1}. 
        {Sufficiency} is {simpler}: if $\cL$ satisfies Angluin's condition, {we can identify the target language} and use its index to generate with exact breadth.
    \end{enumerate}
    However, approximate breadth (along with some other notions of breadth) does not satisfy uniqueness, and {requires our second abstraction, the \emph{finite non-uniqueness criterion} (\cref{def:finiteNonUniqueness}).}
    {Informally, this weaker condition requires that} if $\generator$ satisfies a property $P$ for $L$, then it can also satisfy $P$ for another language $L'$ only if $L$ and $L'$ differ on finitely many elements. 
    We show that:

    \begin{enumerate}[itemsep=-1pt]
        \item[$\triangleright$] Properties $P$ with finite non-uniqueness can only be achieved for collections satisfying the weak Angluin's condition.
        \item[$\triangleright$] {Approximate breadth  satisfies finite non-uniqueness, establishing the necessity direction of \Cref{infthm:2}. Unlike \Cref{infthm:1}, the sufficiency direction is also non-trivial: collections satisfying weak Angluin's condition are not necessarily identifiable, so we develope a novel algorithm achieving approximate breadth for any such collection.}
    \end{enumerate}
    The most technically intricate parts of these constructions are the lower bounds, which rely on careful \emph{diagonalization} arguments. To establish the upper bounds we present several algorithms that are inspired by the work of \citet{kleinberg2024language} and the seminal work of \citet{angluin1980inductive}. We elaborate on these techniques in \cref{sec:tech-overview}.
    In summary, these reductions are the main tools that enable us to characterize all existing notions of {breadth in the literature and resolve \textbf{Question \#1.}}
   
    \itparagraph{Implications for Statistical Settings.} 
    Using reductions from prior work, our characterizations extend to statistical settings where examples are drawn from distributions rather than chosen adversarially. We provide unconditional characterizations of generation with exact and approximate breadth in the stochastic model, extending the conditional characterizations of \citet{kalavasis2025limitslanguagegenerationtradeoffs} that were limited to a specific generator family (\cref{rem:statistical,apx:statistical}).

    \paragraph{Results for Question \#2.}~~
    Next, we investigate how generation with breadth is affected by stability, where generators eventually stop changing their support (\cref{def:stable-generators}), {as defined by \cite{kalavasis2025limitslanguagegenerationtradeoffs}}. 
    {Our results show that stability creates a unified landscape across notions of breadth:}
    \begin{inftheorem}[see \cref{thm:stable}]
        A \textbf{stable} generator $\generator$ can generate from a countable collection $\cL$ with exact/approximate breadth in the limit if and only if $\cL$ is identifiable in the limit.
    \end{inftheorem}
    This reveals a stark separation between stable and unstable generators, as certain notions that only require the weak Angluin's condition without stability now require the full condition with stability. 
    We also introduce further weaker notions of breadth and make significant progress in characterizing when they can be achieved under stability; {to simplify the exposition of the main ideas, we defer these additional results to} \cref{apx:landscape}.

    \itparagraph{Technical Novelty.} %
       Requiring stability introduces an important challenge: unlike breadth, which can be verified at specific steps $t$, stability requires examining the infinite future sequence of a generator's behavior. Even if a generator appears stable for arbitrarily many steps, we cannot confirm stability without seeing its entire infinite execution. This challenge in verification breaks our earlier lower bound techniques, making the proof significantly more difficult, \mbox{and necessitating novel ideas (\cref{sec:technicalOverview}).}

       Our results comprehensively map the landscape of language generation with breadth, pinpointing when various notions are achievable and revealing the interplay between consistency, stability, and different notions of breadth. Our abstractions also extend beyond breadth, establishing impossibility results for other desirable generation properties (\cref{rem:beyondBreadth:1,rem:beyondBreadth:2}).

    \negspacePre{} 
    \subsection{Related Work}\label{sec:related-work} 
        \negspacePost{}

        Our work directly builds on the framework of \citet{kleinberg2024language}, who introduced the model of language generation in the limit. Since then, a growing line of research has explored various aspects of language generation with and without breadth (\eg{}, \cite{li2024generationlenslearningtheory,kalavasis2025limitslanguagegenerationtradeoffs,charikar2024exploringfacetslanguagegeneration,raman2025generationnoisyexamples,peale2024}). 
        
        \smallskip
        
        \noindent\textbf{Language Generation with Breadth.} 
           {Our work builds upon} \citet{kalavasis2025limitslanguagegenerationtradeoffs,charikar2024exploringfacetslanguagegeneration} who study language generation with breadth.
            \citet{kalavasis2025limitslanguagegenerationtradeoffs} introduced three notions of breadth: exact, approximate, and unambiguous. 
            {They explored both \citet{kleinberg2024language}'s online setting and its statistical counterpart -- where the strings are sampled from a distribution instead of being adversarially generated.
            For specific generator family and each notion of breadth, they characterized which countable collections $\cL$ enable generation with breadth (for the last two notions, they also require stability).}
    \citet{charikar2024exploringfacetslanguagegeneration} introduced exhaustive generation, {another notion of breadth,} and provided an unconditional lower bound by constructing a specific language collection for which no algorithm can generate exhaustively. {(They also studied questions beyond breadth, discussed in \cref{appendix:relatedWorks}).}
    Our work unifies and extends both approaches by providing complete characterizations for all these notions of breadth without assumptions on the generator family that hold for all countable language collections.

\smallskip

\noindent\textbf{Independent and Concurrent Work.}
Independently of and concurrently to this work, the authors of \cite{charikar2024exploringfacetslanguagegeneration} updated their manuscript  to include a characterization of
exhaustive generation \cite[{Theorem 4}]{charikar2024exploringfacetslanguagegenerationV2} which is similar to our {result on approximate breadth (\cref{thm:approximate-breadth})}. %
Our work provides several additional contributions beyond this shared result, including characterizations of all existing notions of breadth (\cref{sec:results:characterizations}), lower bounds for abstract {properties of generation} --  extending beyond breadth (\cref{sec:results:characterizations,rem:beyondBreadth:1,rem:beyondBreadth:2}), characterizations for stable generation (\cref{sec:results:stability}), and \mbox{{characterizations for} the statistical setting (\cref{rem:statistical}).}

        \smallskip
        
        \noindent\textbf{Subsequent Work.} Two papers follow-up on our {work} to study more fine-grained notions of breadth. 
        \citet{peale2024} introduce ``representation,'' a weaker notion of breadth that requires the generator's outputs to proportionally represent certain groups of (elements in) the domain. 
        \citet{kleinberg2025densitymeasureslanguagegeneration} {weaken}  approximate breadth by allowing generators to miss infinitely many elements from the target language, {instead focusing on the output set's} ``density'' {in the} target language.
        Both of these works address natural follow-up questions raised by our {results} while being orthogonal.

        \smallskip
        
        \noindent\textbf{Other Directions {of Work} in Language Generation.} {Beyond breadth, recent work has explored other aspects of language generation. \citet*{li2024generationlenslearningtheory} studied language generation with uncountable collections and analyzed sample complexity for generation. \citet{raman2025generationnoisyexamples} investigated language generation in a model where an adversary can introduce errors in the inputs, developing a robust framework for noisy settings.
        \cite{karbasi2025impossibilityautomatedhallucinationdetection} explored the complexity of determining if a specific generator $\generator$ is hallucinating.}  
 
 \negspacePre{}
\section{Preliminaries}\label{sec:preliminaries}
    \negspacePost{}
    In this section, we present some background on language identification and generation in the limit. 

    \paragraph{Notation.}
    Let $\Sigma$ be a finite alphabet (\eg{}, $\{a, b, \ldots, z\}$) and $\Sigma^*$ the set of all finite-length strings formed by concatenating symbols from $\Sigma$. 
    We define a language $L$ as an infinite subset of $\Sigma^*$. 
    A countable collection of languages is denoted by $\cL = \{L_1, L_2, \ldots\}$. 
    We define a generating algorithm $\generator = (\generator_n)_{n \in \N}$ as a sequence of mappings $\generator_n\colon (\Sigma^*)^n \to 2^{\Sigma^*}$ parametrized by the input size $n$. {In words, the generator maps a finite training set to a (potentially infinite) set of elements.

    \paragraph{Language Generation in the Limit.}
    We now formally define language generation in the limit. %
    \begin{definition}[Language Generation in the Limit \citep{kleinberg2024language}]\label{def:consistentGeneration}
        Let $\cL =  \{L_1, L_2,\dots\}$ be a collection of languages, $\generator ~{ =\inparen{\generator_n}}$ be a generating algorithm, 
        and $K \in \cL$ be some target language.
                The algorithm  $\generator$ {is said to generate} from $K$ in the limit if, for all enumerations of $K$, there is some $n^* \in \N$ such that for all steps $n \geq n^*$, the algorithm’s output {$\generator_n(S_n)$} {is a subset of} $K \setminus S_n$, where $S_n$ are the first $n$ elements of the enumeration. The collection $\cL$
                allows for generation in the limit if there is an algorithm  $\generator$ that 
                 generates from $K$ in the limit for any $K \in \cL.$
    \end{definition}
    To gain some intuition about this definition, consider the collection $\cL=\{{\Z}, L_1, {L_{-1}}, L_2, {L_{-2}}, \dots\}$ of thresholds over integers where, for each $i\in \Z$, $L_i=\{i, i+1, i+2, \dots\}$. 
    Suppose the target language is some $K \in \cL$ and the adversary first enumerates string $x_1$. The generator can deduce that $K = L_z$ for some $z \leq x_1$, \ie{}, $K \in \{\Z, L_{x_1}, L_{x_1-1}, \dots\}$. 
    Since the intersection of all of these languages is non-empty and is a strict superset of the strings enumerated so far (namely, {the intersection is $\inbrace{x_1+1,x_1+2,\dots}$}), the generator can generate an element that is guaranteed to be in $K$: for instance, it is sufficient to output ${\inbrace{x_1+1}}$.
    More generally, after seeing strings $x_1, x_2, \dots, x_i$, the generator can output {a singleton containing} any integer larger than $\max_i x_i.$ %
    
    For the problem to be interesting, \citet{kleinberg2024language} assumed throughout that each language in the collection has infinite cardinality, \ie{}, $|L_i| = \infty$ for all $i$. 
    (Otherwise, $K \setminus S_n$ eventually becomes empty.) 
    {They} showed that language generation in the limit is possible for \emph{all} countable collections of languages --  starkly contrasting results in language identification, discussed next. The KM algorithm is a {key starting point for our algorithms,} \mbox{and we discuss it in \Cref{sec:technicalOverview}.}

    \paragraph{Language Identification in the Limit.}
    Language identification in the limit was introduced by \citet{gold1967language} and has, since, been widely studied in learning theory. {The model is slightly different from that of generation}: while generation only requires producing valid examples from the target language $K = L_{i^*}$, identification requires the learner to eventually determine the exact identity $i^*$ (index) of the target language in the collection. 
    Despite this seemingly minor difference, identification is dramatically harder than generation:
    indeed, generation is possible for any countable collection \citep{kleinberg2024language}, but identification is only possible for very limited collections \citep{angluin1979finding,angluin1980inductive}, which satisfy a certain structural property that we explain next. 
    A formal definition of language identification appears in \cref{apx:further-background} \mbox{but is not essential for understanding this paper.} 
    
    \paragraph{Angluin's Condition.}
    A key concept in our analysis is Angluin's condition – a structural property of language collections $\cL$ that characterizes identifiability: $\cL$ is identifiable if and only if it satisfies Angluin's condition \citep{angluin1980inductive}. 
    Informally, a collection satisfies Angluin's condition if for any language $L \in \cL$, there exists a finite subset $T_L$ (called a tell-tale set) that serves as a finite ``fingerprint'' allowing one to distinguish $L$ from any other language that contains $T_L$. 
\begin{restatable}[Angluin's Condition \citep{angluin1980inductive}]{definition}{angluinsCondition}\label{def:angliun-criterion}
            Fix a language collection $\cL = \{L_1, L_2, \dots\}$.
            The collection $\cL$ is said to satisfy Angluin's condition if for any index $i$, there is a tell-tale, \ie{}, a finite set of strings $T_i$ such that $T_i$ is a subset of $L_i$, \ie{}, $T_i\subseteq L_i$, and the following holds:
            \begin{center}
                \centering 
                For all $j\geq 1$, if $L_j\supseteq T_i$, then $L_j$ is not a proper subset of $L_i$.
            \end{center}
\end{restatable}
    {Roughly,} this condition ensures that after observing enough examples from the target language, one can rule out all incorrect {languages}. We refer to \Cref{fig:angluin} for a visualization of the condition.

        \begin{remark}[Representation of the Generators]\label{rem:succinct-rep}
        The astute reader might observe that the previous definitions allow for generating algorithms that output infinite-sized objects.
        However, all our generating algorithms have succinct representations and this allows for computable algorithms that sample (\ie{}, generate) a new element, enumerate the support of all generatable elements, and, given an element, decide whether it belongs to the support (\ie{}, whether it is part of the enumeration).
        On the other hand, our lower bounds {are stronger, they} hold for functions that might not be computable.
    \end{remark}

\negspacePre{}
\section{{Overview of} Results and Techniques}\label{sec:results} %
        \negspacePost{}

        In this section, we present our main results. We begin with two notions of generation with breadth from prior work, provide characterizations of generation with breadth (\cref{sec:results:characterizations}) and their implications (\cref{rem:statistical}), examine stable generation (\cref{sec:results:stability}), and overview our proof techniques (\cref{sec:technicalOverview}). While we focus on exact and approximate breadth in {this section}, our techniques extend to all existing notions and their natural combinations; we present these extensions in \cref{apx:landscape}.

    \paragraph{Notions of Breadth.}
    Recent works have introduced various notions of breadth, capturing different aspects of how generators cover a target language. 
    The first notion, \emph{exact breadth} (introduced by \citet{kalavasis2025limitslanguagegenerationtradeoffs} and studied by \citet{charikar2024exploringfacetslanguagegenerationV2}).
    Given samples $S$, a generator $\generator$ has exact breadth for $K$ if $\generator(S) = K\setminus S$, meaning it generates all unseen strings in $K$.
    \begin{definition}\label{def:exactBreadth}
        Generator $\generator$ has exact breadth for language $K$ given samples $S$ if $\generator(S) =  K\setminus S$.
    \end{definition}
    In words, language generation in the limit with exact breadth requires that, for any target language $K\in \cL$ and any enumeration of $K$, there is an $n^*\geq 1$, such that for all $n\geq n^*$, after seeing $n$ elements of the enumeration $S_n$, $\generator$ achieves exact breadth for language $K$.

    Recognizing that this is a strong requirement, \citet{kalavasis2025limitslanguagegenerationtradeoffs} also introduced a natural relaxation, approximate breadth, which allows the generator to miss a finite number of elements. 
    \begin{definition}\label{def:approxBreadth}
        Generator $\generator$ has approximate breadth for language $K$ given samples $S$ if $\generator(S)\subseteq K$ and $\abs{K\setminus \generator(S)} < \infty$.
    \end{definition} 
    Again, one can naturally define language generation in the limit with approximate breadth as above. 
    Next, we present our results for these two notions of breadth. We mention that we also characterize generation under all other notions of breadth introduced in prior work (see \cref{apx:landscape}).
    \negparaspace{}

    \negspacePre{}
    \subsection{Results on Generation with Breadth}\label{sec:results:characterizations} 
    \negspacePost{}
    Our first result characterizes language generation with exact breadth.

\begin{theorem}[Exact Breadth $\iff$ Angluin's Condition]\label{thm:exact-breadth}
For any countable collection of languages $\cL$, there is a generator $\generator=(\generator_n)$ that generates with exact breadth from $\cL$ in the limit if and only if $\cL$ satisfies Angluin's condition.
\end{theorem}
This result establishes that generation with exact breadth is as hard as language identification in the limit, which is a much more challenging problem than generation in the limit without breadth constraints. Our characterization generalizes previous work in several ways: it removes technical conditions on the generators needed by \citet{kalavasis2025limitslanguagegenerationtradeoffs} and extends the unconditional lower bound of \citet{charikar2024exploringfacetslanguagegeneration}, which only held for a specific language collection.

\paragraph{Generalization to Any ``Unique'' Property.}
One side of this result, the upper bound, is simple: at a high level, if Angluin's condition holds, then language identification is possible (\ie{}, one can find $i^\star$ such that $K=L_{i^\star}$), and then, one can generate with exact breadth by outputting the first unseen string from $K$.
(That said, there are some difficulties because we do not know when we have found $i^\star$, and we handle this in our proofs.)
The other side, the lower bound, is non-trivial and is actually a corollary of a much more general result concerning a property we call \emph{uniqueness}. 
\begin{definition}[Uniqueness]\label{def:uniqueness}
    A property $P$ of generation satisfies the uniqueness criterion for a collection $\cL$ if no generator $\generator$ can simultaneously satisfy that property for two different languages $L\neq L'$ in $\cL$, \ie{}, if $\generator$ has property $P$ for $L$, then it cannot have $P$ for $L'\neq L$ and vice versa.
\end{definition}
We prove the following lower bound for any property satisfying the uniqueness criterion.
\begin{theorem}[Lower bound with Uniqueness]\label{thm:uniqueness}
    Let $P$ be any property of generation that satisfies the uniqueness criterion. For a countable collection of languages $\cL$, there exists an algorithm that generates with property $P$ from $\cL$ in the limit only if $\cL$ satisfies Angluin's Condition.
\end{theorem}
To gain some intuition, note that exact breadth satisfies this uniqueness criterion: if a generator $\generator$ generates a language $L$ with exact breadth (\ie{}, $\generator{(S)}=L{\setminus S}$), then it necessarily cannot generate any other language $L'\neq L$ with exact breadth. 
In contrast, approximate breadth does not satisfy uniqueness: for collections containing languages $L_1 \subseteq L_2$ that differ on only finitely many elements, a generator with support $L_1$ can simultaneously generate with approximate breadth from $L_1$ and $L_2$.
Like exact breadth, other notions of breadth in the literature also satisfy the uniqueness condition and \cref{thm:uniqueness} is a powerful tool for proving lower bounds for such notions.
\begin{remark}[Implications Beyond Generation with Breadth]
    \label{rem:beyondBreadth:1}
    The theorem also has implications well beyond breadth.  
    Assume we require only that a generator’s evaluation metric --
    \eg{}, lower perplexity or hallucination rate -- is \emph{strictly} better on a target language $K$ than on every other $L\neq K$.
    Even this weaker ``metric-separation'' goal is attainable \emph{only}
    when the language collection $\cL$ satisfies Angluin’s condition;
    if not, then no generator can perform strictly better for the target $K$ than the rest.
    This fundamental limit applies regardless of the specific metric. %
\end{remark}

\negparaspace{}

\paragraph{Characterization of Approximate Breadth.}
    Next, we move to approximate breadth.
    Since approximate breadth does not satisfy the uniqueness criteria introduced above, we cannot show a lower bound for approximate breadth based on Angluin's condition, and need new ideas. 
    In fact, the reason why approximate breadth does not satisfy it hints towards the required relaxation that we need to impose on Angluin's condition: languages that differ on finitely many elements need to be treated differently from languages that differ on infinitely many elements.
    Motivated by this, we introduce a variant of Angluin's condition we call the weak Angluin's condition:
    
 \begin{restatable}[Weak Angluin's Condition]{definition}{weakAngluinsCondition}\label{def:suffCondition}\label{def:weakAngluin}
      Fix a language collection $\cL = \{L_1, L_2, \dots\}$.
      The collection $\cL$ is said to satisfy the weak Angluin's condition if for any index $i$, there is a tell-tale, \ie{}, a finite set of strings $T_i$ such that $T_i$ is a subset of $L_i$, \ie{}, $T_i\subseteq L_i$, and the following holds:
        \begin{flushleft}
            \quad For all $j\geq 1$ such that $L_j\supseteq T_i$, one of the following holds.
            \begin{itemize}[itemsep=-3pt]
                \item Either $L_j$ is not a proper subset of $L_i$; or 
    
            \item $L_j$ is a proper subset and misses finitely many elements of $L_i$, \ie{}, $\abs{L_i\setminus  L_j}<\infty$.
            \end{itemize} 
        \end{flushleft}
        The tell-tale oracle is a primitive that, given an index $i,$ outputs an enumeration of the set $T_i$.
    \end{restatable}
    For a visualization of this condition, we refer to \Cref{fig:angluin}.
    This condition relaxes Angluin's condition by allowing language $L_j$ containing the tell-tale set $T_i$ of language $L_i$ to be a proper subset of $L_i$ provided $L_j$ misses only finitely many elements (see \Cref{fig:angluin}). We remark that this is a \emph{strict} weakening of Angluin's condition (see {\cref{rem:separation-angluin-weak-angluin}}).
    
    \begin{figure}[!ht]
        \centering
        \vspace{-3mm}
        \begin{subfigure}[b]{0.28\linewidth}
    \begin{overpic}[width=\linewidth,
                    trim={3cm 1cm 4cm 1cm},clip]{figures/angluinCondition1.pdf}
      \put(2,2){\subfigtag{a}} %
    \end{overpic}
    \phantomcaption\label{fig:angluin:a}
  \end{subfigure} %
  \hspace{10mm}
  \begin{subfigure}[b]{0.385\linewidth}
    \begin{overpic}[width=\linewidth,
                    trim={2.5cm 2cm 1.6cm 0.5cm},clip]{figures/angluinCondition2.pdf}
      \put(2,2){\subfigtag{b}}
    \end{overpic}
    \phantomcaption\label{fig:angluin:b}
  \end{subfigure}
        \vspace{-4mm}
        \caption{
            \cref{fig:angluin:a} visualizes Angluin's condition: any language $L'$ containing language $L$'s tell-tale set $T_L$ cannot be a strict subset of $L$. Our weak Angluin's condition relaxes this by allowing an additional case (\cref{fig:angluin:b}): a language $L'$ containing $T_L$ can be a strict subset of $L$ provided $L'$ only misses finitely many elements of $L$ (\ie{}, $|L \setminus L'| < \infty$).
        }
        \vspace{-2mm}
        \label{fig:angluin}
    \end{figure}

    \noindent Our next result characterizes approximate breadth via the Weak Angluin's Condition.
    \begin{theorem}
        [Approximate Breadth $\iff$ Weak Angluin's Condition]\label{thm:approximate-breadth}
        For any countable collection of languages $\cL$, there is {a generator $\generator=(\generator_n)$} that generates with approximate breadth from $\cL$ in the limit if and only if $\cL$ satisfies the weak Angluin's condition (\Cref{def:suffCondition}).
    \end{theorem}
    Since approximate breadth is characterized by the weak Angluin's condition, which is strictly weaker than Angluin's condition, approximate breadth is a {strictly} weaker requirement than exact breadth.

    Unlike the characterization of exact breadth, the upper bound side of this result is not simple. This is because if a language collection $\cL$ satisfies the weak Angluin's condition, it may not be identifiable, and hence we need a different algorithm for generation that achieves approximate breadth.
    We design a new algorithm based on the weak Angluin's condition and overview it in \cref{sec:technicalOverview}.
    Like with characterization of exact-breadth, the {lower}-bound side of this argument is non-trivial and a corollary of a more general result concerning a property of \emph{finite non-uniqueness}.

    \paragraph{Generalization to Any “Finitely Non-Unique” Property.}
    Roughly speaking, finite non-uniqueness relaxes uniqueness by allowing properties that can hold for two languages $L$ and $L'$ simultaneously but only when $L$ and $L'$ differ on finitely many elements. 
\begin{definition}[Finite Non-Uniqueness]\label{def:finiteNonUniqueness}
    A property $P$ of generation satisfies the finite non-uniqueness criterion for a collection $\cL$ if no generator $\generator$ can simultaneously satisfy that property for two languages $L,L'\in \cL$ that differ in infinitely many elements (\ie{}, when $\abs{L\triangle L'}=\infty$), \ie{}, if $\generator$ has property $P$ for $L$ and $L'$ both, then $\abs{L'\triangle L} < \infty$.
\end{definition}
    To gain some intuition, note that approximate breadth satisfies this finite non-uniqueness criterion: if a generator generates with approximate breadth from two different languages $L$ and $L'$, then these languages can only differ on finitely many elements. This follows because the generator's support must be largely contained in both languages (with only finitely many elements missing), which is only possible when $|L \triangle L'| < \infty$.
    
    Our next result shows that achieving any property which satisfies finite non-uniqueness is already impossible for any collection that does not satisfy the weak Angluin's condition.
    \begin{theorem}[Lower bound with Finite Non-Uniqueness]
        \label{thm:finiteNonUniqueness}
        Let $P$ be any property of generation satisfying the finite non-uniqueness criterion. For a countable collection $\cL$, there exists an algorithm that generates with property $P$ from $\cL$ in the limit only if $\cL$ satisfies the weak Angluin's Condition.
    \end{theorem}
    This lets us characterize every breadth notion in the literature, including approximate breadth.
    \begin{remark}[Implications Beyond Generation with Breadth]
        \label{rem:beyondBreadth:2}

        The same reasoning as in \cref{rem:beyondBreadth:1} yields lower bounds for an even milder objective:
        achieving \emph{optimal} (rather than uniquely optimal) performance on $K$ together with finitely many other languages. %
        If $\cL$ fails the \emph{weak} Angluin condition, then no generator can attain the best possible perplexity -- or any analogous metric -- on a finite set of languages $\cL'\subseteq \cL$ (with $\abs{\cL'}<\infty$) which includes the target language $K$ (\ie{}, $K\in \cL'$).
        
    \end{remark}
     
    \begin{remark}[Implications for Statistical Setting]
        \label{rem:statistical}
        Using \citet{kalavasis2025limitslanguagegenerationtradeoffs}'s framework, our results extend to statistical settings where strings are sampled from distributions rather than adversarially chosen.  Concretely, we provide unconditional characterizations for generation with both exact and approximate breadth in stochastic models -- improving upon the earlier conditional results that applied only to a specific generator family \citep{kalavasis2025limitslanguagegenerationtradeoffs}. See \cref{apx:statistical} for details.
    \end{remark}

    \begin{remark}[Separation Between \Cref{def:angliun-criterion} and \Cref{def:suffCondition} \citep{charikar2024exploringfacetslanguagegeneration}]\label{rem:separation-angluin-weak-angluin}
        {We highlight that there is a separation
        between the collections of languages that satisfy
        \Cref{def:angliun-criterion} and \Cref{def:suffCondition}, 
        which is taken from \citep{charikar2024exploringfacetslanguagegeneration}. Let
        $\cX = \N$, $L_i = \N \setminus \inbrace{i}$, and
        $\cL = \inbrace{\N, L_1, L_2, \ldots}.$ 
        Then, $\cL$ does not satisfy \Cref{def:angliun-criterion} but satisfies \Cref{def:suffCondition}. Thus, \Cref{def:suffCondition} is a strictly weaker condition 
        than \Cref{def:angliun-criterion}.}
    \end{remark}
    
\negspacePre{}\negspacePre{}
\subsection{Results on Stable Generation with Breadth}\label{sec:results:stability}
\negspacePost{}
Our next set of results focuses on \emph{stable generators} -- those whose support eventually stops changing -- a requirement motivated by practical algorithms that converge to a model and by Gold's original work, which also required stability. Under stability, the landscape changes dramatically:
\begin{restatable}
[Stable Generating Algorithm \citep{kalavasis2025limitslanguagegenerationtradeoffs}]{definition}        {defStableGenerators}\label{def:stable-generators}
 A generating algorithm $\generator=(\generator_n)$ is stable for a language collection $\cL$ if for any target language $K \in \cL$ 
            and for any enumeration of $K,$ 
            there is some finite $n^* \in \N$ such that for all $n, n' \geq n^*,$
            it holds that
             $\generator_n(S_n) = \generator_{n'}(S_{n'}) $.
\end{restatable}
\vspace{-5mm}
\begin{theorem}[Characterization for Stable Generation]\label{thm:stable}
Fix any countable collection of languages $\cL$. {$\cL$ satisfies Angluin's condition if and only if one of the following two equivalent conditions hold}
\begin{itemize}[itemsep=-2pt]
    \item[$\triangleright$] There is a stable algorithm that generates with approximate breadth from $\cL$.
    \item[$\triangleright$] There is a stable algorithm that generates with exact breadth from $\cL$.
\end{itemize}
\end{theorem}
Hence, exact and approximate breadth are equivalent under stability, both requiring the (full) Angluin's condition -- contrasting with our earlier result where approximate breadth only requires the weak Angluin's condition. In fact, a stronger result holds: all notions of breadth proposed in prior work collapse to this same characterization under stability. In \cref{apx:stability}, we prove this and also present additional results that allow hallucinations and introduce weaker breadth notions.

\negspacePre{}
\subsection{Technical Overview}\label{sec:tech-overview}
\negspacePost{}
    \label{sec:technicalOverview} 
        We now outline our proof techniques and their novelty, beginning with our lower bound results.

         \negparaspace{}
       
       \paragraph{Lower Bounds.}
            Our goal is to show that if a collection $\cL$ lacks a certain property (\eg{}, Angluin's condition), then no generator can achieve the corresponding notion of breadth  (\eg{}, exact breadth)  for $\cL$. {The full proofs appear in \cref{apx:proofs:lowerBounds}.}
            First, we overview techniques in existing works.
            \begin{itemize}[leftmargin=12pt,itemsep=-1.25pt]
                \item[$\triangleright$] \textit{Technique I: Generator-Specific Bounds.}
                \cite{kalavasis2025limitslanguagegenerationtradeoffs}'s approach require generators satisfy a technical condition (\cref{apx:mop}) that, roughly, enables access to their ``support,'' or the set of their outputs, allowing a reduction from language identification to generation with breadth. This, however, fails for unconditional \mbox{lower bounds which make no assumptions on generators.}
                \item[$\triangleright$] \textit{Technique II: Diagonalization for Identification.}
                    For the related problem of language identification, the standard and only technique for proving unconditional lower bounds is \textit{diagonalization} (\eg{}, \cite{gold1967language}). 
                    At a high level, it constructs an algorithm-dependent enumeration of target language $K$ in phases: in the $i$-th phase, it enumerates $L_i$, and either the algorithm $A$ fails to identify $L_i$ or $A$ guesses the index as $i$, at which point the enumeration advances to phase $i+1$.
                    This creates a dilemma: either a phase continues indefinitely (causing infinitely many identification errors) or infinitely many phases occur (meaning $A$ misidentifies the language $K = L_\infty$ infinitely often).
            \item[$\triangleright$] \textit{Technique III: Collection-Specific Bounds.}
                \cite{charikar2024exploringfacetslanguagegeneration} adapted the above  diagonalization technique to prove generation with breadth is impossible for a specific ``hard'' collection $\cL^*$ -- yielding the first unconditional lower bound for generation with breadth, albeit one limited to just one collection. 
            \end{itemize}
            The complementary limitations of prior work raise a natural question: Can we prove lower bounds for language generation with breadth that simultaneously apply to all generators and for all collections for which generation with breadth is fundamentally impossible?

        \itparagraph{Idea 1: Universal Diagonalization.}  
              We generalize Charikar and Pabbaraju's diagonalization from a specific ``hard'' collection $\cL^*$ to \textit{all} collections violating Angluin's condition -- which is a tight result since \citet{kalavasis2025limitslanguagegenerationtradeoffs} give a generator with exact breadth for collections satisfying this condition.
            Here, the key insight is leveraging the structure of collections that violate Angluin's condition: Specifically, we set $L_\infty$ to be the language witnessing this violation, and index the remaining languages $L_1,L_2,\dots$ with finite subsets of $L_\infty$: for each finite $T\subseteq L_\infty$, $L_T$ is the language containing $T$ and satisfying $L_T\subsetneq L_\infty$ (guaranteed by the violation of Angluin's condition).

             \negparaspace{}\negparaspace{}

        \itparagraph{Idea 2: Weak Angluin's Condition.}  
            This approach fails for approximate breadth because a generator can simultaneously achieve approximate breadth for multiple languages. We address this by introducing the weak Angluin's condition, a relaxation of the original, and proving it enables diagonalization for approximate breadth. This lower bound is also tight: we provide a novel algorithm achieving approximate breadth for any collection satisfying this weaker condition.  
             
             \negparaspace{}\negparaspace{}

        \itparagraph{Challenge: Diagonalization against Stable Generators.} 
            While our previous (unconditional) lower bounds apply to stable generators, they do not yield tight characterizations for notions like approximate breadth. %
            The core issue is that unlike breadth -- which can be verified at specific steps $t$ -- verifying stability requires examining the generator's behavior over infinitely many steps. 
            As even if a generator is stable for many steps, we cannot confirm its stabile without seeing its future behaviour. %

         \negparaspace{}
         \negparaspace{}

        \itparagraph{Idea 3: Lazy Analysis of Diagonalization.} 
            To address this challenge, we introduce a ``lazy analysis'' of  diagonalization, loosely inspired by techniques in computational complexity  \citep{arora2009complexity}. 
            Unlike the standard analysis of diagonalization where the adversary forces the generator to make ``mistakes'' at the end of each phase, here the  adversary cannot force a mistake every round. 
            Instead, this lazy analysis uses the fact that after waiting for sufficiently many rounds, the generator ``exhausts all possibilities'' and must make a mistake.
            Proving this requires a sophisticated technical construction which shows that a generator must either be unstable or generate without approximate breadth infinitely often.
            We believe this technique is of independent interest and can have further applications in the analysis of natural \mbox{properties of stable generators beyond breadth.

        }

        \paragraph{Upper Bounds.}
            Our upper bounds construct algorithms for generation with (different notions of) breadth that work whenever collection $\cL$ satisfies properties like Angluin's condition. 
         For exact breadth, one already exists in prior work \citep{kalavasis2025limitslanguagegenerationtradeoffs}.
            Here, we focus on approximate breadth; we develop two algorithms for it with different access models of $\cL$: {one with unrestricted access and another with only membership access (ability to query ``is $w \in L_i$?'').}
            The membership-only algorithm is a novel adaptation of \citet{angluin1980inductive}'s seminal algorithm {that} is presented in \cref{sec:proofs:upperbounds:membership}. Here, we overview the simpler unrestricted-access algorithm.

             \negparaspace{}
              \negparaspace{}
            
        \itparagraph{KM24's Algorithm.} 
            \citet{kleinberg2024language}'s algorithm, in every round $t,$ creates a {chain} of \emph{critical languages} $\cC_1\supsetneq \cC_2 \supsetneq \dots \supsetneq \cC_t$  with the property that, for large enough $t$, the target language $K$ enters this chain and remains in it. 
            Now their algorithm is simple: it outputs (unseen strings from) the last critical language.
            Unfortunately, this algorithm loses breadth as $t$ increases, as it keeps generating from the last element of a constantly decreasing chain. %
            A more detailed presentation of this algorithm appears in \Cref{apx:km-algo}.

                    \negparaspace{}
                     \negparaspace{}

        \itparagraph{New Analysis of KM24's Algorithm.}
            If $\cL$ satisfies Angluin's condition, then
            \citet{kalavasis2025limitslanguagegenerationtradeoffs} have already shown that this algorithm achieves exact breadth. 
            To achieve approximate breadth, we show that when $\cL$ satisfies weak Angluin's condition, the last critical language, $\cC_t$, misses at most finitely many elements of $K$ (\ie{}, $\abs{K\setminus \cC_t}<\infty$) for large enough $t$.
            This shows that the above algorithm achieves approximate breadth for such $\cL$.
            This reveals an interesting best-of-three-worlds property: if $\cL$ satisfies Angluin's condition it achieves exact breadth, if it satisfies weak Angluin's condition it achieves approximate breadth, otherwise it achieves consistent generation. This is particularly appealing as these conditions might be challenging to verify given limited access to $\cL$.
            Finally, to obtain algorithms for other existing \mbox{notions of breadth, we use this as a building block (\cref{sec:proofs:upperbounds:exhaustive}).}

\section{Summary {of} Characterizations {for} Language Generation {with Breadth}}  
    \label{apx:landscape}
    In this section, we summarize our characterizations for language generation with all existing notions of breadth, with additional results for new notions presented in \cref{apx:stability}.

\paragraph{Outline.} We first define two additional notions of breadth from prior work (\cref{apx:landscape:notions}), completing all definitions of breadth in prior works, alongside exact breadth (\cref{def:exactBreadth}) and approximate breadth (\cref{def:approxBreadth}) from {\cref{sec:results}}. 
We then provide characterizations for each notion (\cref{apx:landscape:breadth}), extending \cref{thm:exact-breadth,thm:approximate-breadth}.
Finally, we examine stable generation with breadth (\cref{apx:landscape:stability}), extending \cref{thm:stable}, and consider settings allowing some hallucinations, both for unstable (\cref{apx:landscape:breadth:hallucinations}) and stable generators (\cref{apx:landscape:stability:hallucinations}).

\subsection{Remaining Notions of Breadth in Prior Work}

\label{apx:landscape:notions}

In this section, we introduce two additional notions of breadth, unambiguous generation and exhaustive generation, completing all definitions of breadth in prior works, alongside exact breadth (\cref{def:exactBreadth}) and approximate breadth (\cref{def:approxBreadth}) from {\cref{sec:results}}.

\paragraph{Unambiguous Generation.} This relaxation of exact breadth by \citet{kalavasis2025limitslanguagegenerationtradeoffs} allows hallucination (outputting strings outside target language $K$) provided the generator performs ``better'' for $K$ than for any other language in the collection.

\begin{definition}[Unambiguous Generation in the Limit \citep{kalavasis2025limitslanguagegenerationtradeoffs}] 

\label{def:unambiguous}

Generator $\generator$ unambiguously generates from language $K$ given samples $S$ if 

\begin{equation}
    \textstyle \abs{\generator(S) \triangle K} < \min_{L\in \cL\colon L\neq K} \abs{\generator(S) \triangle L}\,,
\end{equation}
where $A\triangle B\coloneqq \inparen{A\setminus B}\cup \inparen{B\setminus A}$ for sets $A$ and $B$.

\end{definition}
While unambiguous generation is seemingly weaker than exact breadth and incomparable to approximate breadth, our characterization (\cref{Main}) reveals that it is as hard to achieve as exact breadth.

\paragraph{Exhaustive Generation.} \cite{charikar2024exploringfacetslanguagegeneration} proposed exhaustive generation.\footnote{The definition in \citep{charikar2024exploringfacetslanguagegeneration} differs slightly from \citep{charikar2024exploringfacetslanguagegenerationV2}. We use the updated version, though {our techniques also show that} both {properties} are characterized by the same condition. {}} Their formulation treats generators as mappings from domain sequences to domain \emph{enumerations}. For $i, n \in \N$, let $\generator_n(i)$ be the $i$-th element in the enumeration output in round $n$.

\begin{definition}[Exhaustive Generation in the Limit \citep{charikar2024exploringfacetslanguagegenerationV2}]\label{def:ExhaustiveGeneration} 
Generator $\generator$ exhaustively generates from language $K$ in round $n$ if   
\begin{equation}
     \textstyle
     {\left|\bigcup_{i=1}^\infty \generator_n(i) \setminus K\right| < \infty}
     \quadand
     \ {S_n \cup \bigcup_{j = 1}^{n-1} \generator_j(1) \cup \bigcup_{i = 1}^\infty \generator_n(i)} \supseteq K \,, 
\end{equation}
where $S_n$ is the set of elements enumerated until round $n$.

\end{definition}
Exhaustive generation is strictly weaker than exact breadth but seems incomparable to approximate breadth: it permits finite hallucinations (which approximate breadth forbids) but requires covering $K$ using potentially all past outputs (which approximate breadth does not require).
Our characterization (\cref{Main}) reveals that it is as hard to achieve as approximate breadth.

    \subsection{Generation with Breadth (Extension of \cref{thm:exact-breadth,thm:approximate-breadth} and Proof Sketch)}\label{apx:landscape:breadth}
        Our next result characterizes generation with all four existing notions of breadth in the literature. 
        \begin{theorem}
            [Characterizations of Language Generation with Breadth]
            \label{Main}
            For any countable collection of languages $\cL$ the following hold:
            \begin{enumerate}
                \item The following are equivalent: 
                \begin{itemize}[itemsep=0pt]
                    \item[$\triangleright$] There is an algorithm that generates with (exact) breadth from $\cL$ in the limit.
                    \item[$\triangleright$] There is an algorithm that generates unambiguously from $\cL$ in the limit. 
                    \item[$\triangleright$] $\cL$ satisfies Angluin's condition (\Cref{def:angliun-criterion}).
                \end{itemize}
                \item The following are equivalent:
                \begin{itemize}[itemsep=0pt]
                    \item[$\triangleright$] There is an algorithm that generates with approximate breadth from $\cL$ in the limit.
                    \item[$\triangleright$] There is an algorithm that generates exhaustively from $\cL$ in the limit.
                    \item[$\triangleright$] $\cL$ satisfies the weak Angluin's condition (\Cref{def:weakAngluin}).
                \end{itemize}
            \end{enumerate}
            \end{theorem}
            This results generalizes \cref{thm:exact-breadth,thm:approximate-breadth}.
            Like \cref{thm:exact-breadth,thm:approximate-breadth}, this result is \emph{unconditional}, requiring no particular structure on the generator.
            Hence, it strengthens the conditional lower bounds of \citep{kalavasis2025limitslanguagegenerationtradeoffs}. 
            It also applies to all countable language collections, strengthening the collection-specific results of \citet{charikar2024exploringfacetslanguagegeneration}.
            \begin{proof}[Proof Sketch of \cref{Main}] We outline four key components: 
            \begin{itemize}[itemsep=0pt]
                \item \textit{Upper bound when $\cL$ satisfies Angluin's condition:} Since $\cL$ is identifiable (by Angluin's result), we can convert any identification algorithm to an exact generator, as established by \citet{kalavasis2025limitslanguagegenerationtradeoffs}. Unambiguous generation follows since it is weaker than exact breadth. 
                \item \textit{Lower bound when $\cL$ violates Angluin's condition:} In \cref{apx:proofs:lowerBounds:uniqueness}, we prove that properties satisfying uniqueness are unachievable for collections violating Angluin's condition (see \cref{sec:technicalOverview} for a discussion). Given this, the present result follows since exact breadth and unambiguous generation both satisfy uniqueness.
                \item \textit{Upper bound when $\cL$ satisfies weak Angluin's condition:} Since weak Angluin's condition is strictly weaker than Angluin's condition, $\cL$ is generally not identifiable and so we cannot use algorithms from the above upper bound.
                We present new algorithms for this case in \cref{apx:proofs:upperBounds}. 
                \item \textit{Lower bound when $\cL$ violates weak Angluin's condition:} 
                    In \cref{apx:proofs:lowerBounds:finitenonuniqueness}, we prove that properties satisfying finite non-uniqueness are unachievable for collections violating weak Angluin's condition (see \cref{sec:technicalOverview} for some discussion). 
                    The result follows from this since approximate breadth and exhaustive generation both satisfy finite non-uniqueness. 
            \end{itemize} 
            \end{proof}

    \subsection{Generation with Breadth and Stability (Extension of \cref{thm:stable} and Proof Sketch)}\label{apx:landscape:stability}
        In this section, we provide characterizations for generation with stable generators, those whose support eventually stops changing and stabilizes (\cref{def:stable-generators}).
        \begin{remark}[Discussion on Stability]
            This notion of stability stems from the original work of \cite{gold1967language} on language identification in the limit, where Gold requires the learner to stabilize to a specific guess for the target language $L$ in the above sense (see \cref{sec:furtherbackground}).
            It is also
            closely related to the question of whether the algorithm can  verify that it has ``learned'' to generate with the required notion of breadth; if the algorithm can verify that {it has learned}, then it can stabilize. 
            Further, any generator that is consistent and achieves exact breadth is also stable, since after some finite point its support must become identical to the target language $K$ and remain so.\footnote{Here, we use an equivalent notion of generation with exact breadth that allows for inclusion of the training set in the support: the equivalence holds because any generator $\generator$ that generates with breadth without repeating training examples can be converted to one $\generator'$ that generates with breadth and repeats the training examples and vice versa.}
        \end{remark}

        \paragraph{Landscape with Stable Generators.}
            Under the stability requirement, the landscape for generation with breadth changes (compared to the one in the previous section). 
            \begin{theorem}
            [Characterizations of Stable Language Generation with Breadth]
            \label{Main:stable}
            For any countable collection of languages $\cL$, the following are equivalent:
                \begin{itemize}[itemsep=-1pt]
                    \item There is a stable algorithm that generates with approximate breadth from $\cL$ in the limit.
                    \item There is a stable algorithm that generates exhaustively from $\cL$ in the limit.
                    \item There is a stable algorithm that generates with (exact) breadth from $\cL$ in the limit.
                    \item There is a stable algorithm that generates unambiguously from $\cL$ in the limit.
                     \item $\cL$ satisfies Angluin's condition (\Cref{def:angliun-criterion}).
                \end{itemize} 
            \end{theorem}
            This result extends \cref{thm:stable}.
            Like \cref{thm:stable}, it shows that the requirement of stability makes the problem of generation with approximate breadth strictly harder (see \cref{fig:conditionalLandscape}): 
            there exist stable generators with this property if and only if the collection satisfies Angluin's condition for identifiability whereas before, when unstable generators were also allowed,  one only required the weak Angluin's condition.
            As another example of the stark change in the landscape, we also show that there exists a collection that satisfies the weak Angluin's condition (hence admits a non-stable generator with approximate breadth), but for which no stable generator can achieve a much weaker requirement, which we term infinite coverage (\cref{thm:impossibility-stable-generation}).

             \begin{proof}[Proof Sketch of \cref{Main:stable}]
We outline two main components:
\begin{itemize}[itemsep=0pt,leftmargin=18pt]
    \item \textit{Upper bound when $\cL$ satisfies Angluin's condition:} This follows by observing that \cref{Main}'s upper bound for collections satisfying Angluin's condition constructs stable generators.
    
    \item \textit{Lower bound when $\cL$ violates Angluin's condition:} For exact breadth and unambiguous generation, this follows from \cref{Main}. The key technical challenge is establishing lower bounds for approximate breadth and exhaustive generation, requiring a certain ``lazy analysis'' of diagonalization as discussed in \cref{sec:technicalOverview}. 
    The proof appears in \cref{apx:proofs:lowerBounds:stability}.
\end{itemize}
\end{proof} 
            
\begin{figure}[th!]
    \centering
    \begin{subfigure}[t]{0.45\linewidth}
        \centering
        \includegraphics[width=\linewidth,trim={2.5cm 0cm 2.5cm 8cm},clip]{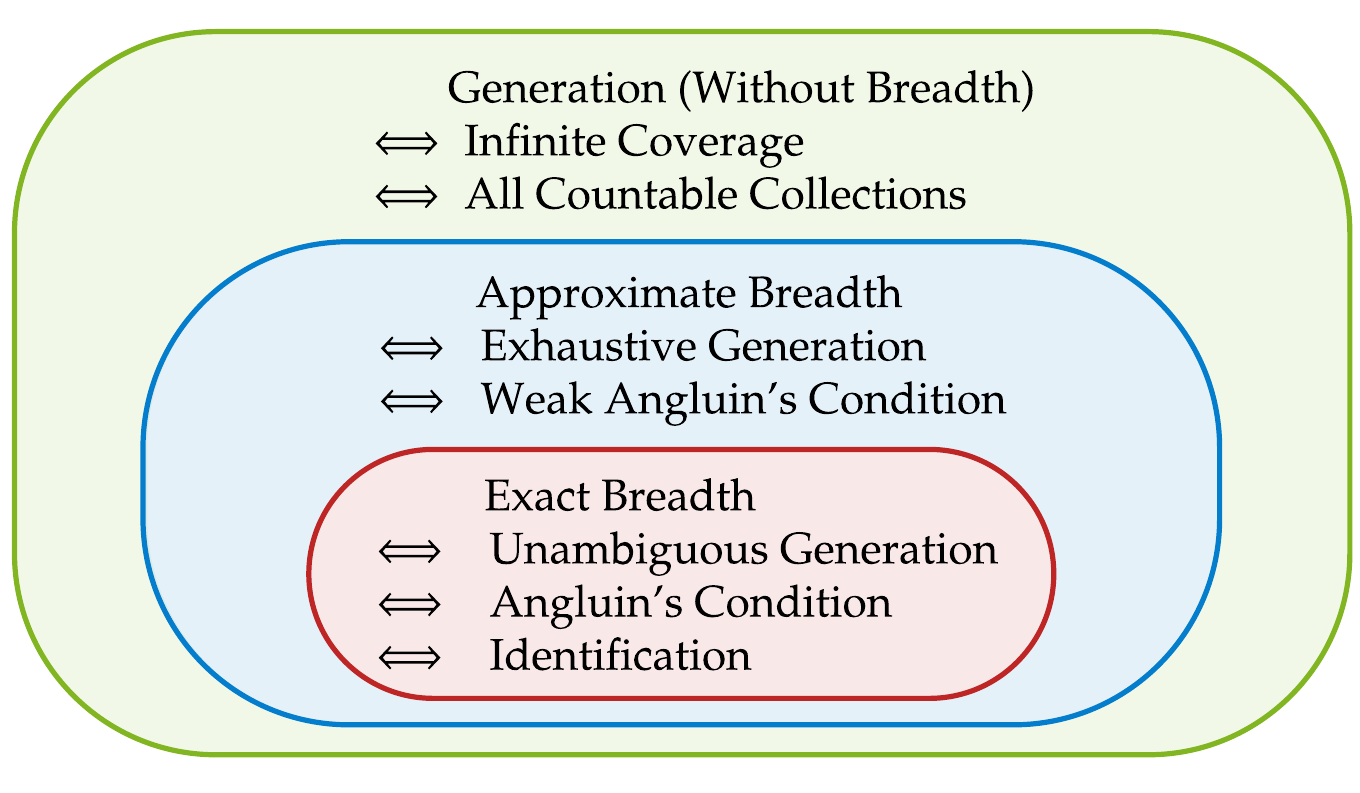}
        \caption{Unconditional Characterizations}
    \end{subfigure}
    \begin{subfigure}[t]{0.45\linewidth}
        \centering
        \includegraphics[width=\linewidth,trim={2.5cm 8.05cm 2.5cm 0cm},clip]{figures/stableVsUnstableGeneration.pdf}
        \caption{Characterization With Stable Generators}
    \end{subfigure}
    \caption{
        \textit{Comparison of Generation in the Limit with and without Requiring Stability.}\quad 
        Each containment illustrated by a border is \textit{strict}, \ie{}, for each border there is a language collection that satisfies the outer containment but not the inner containment.
        Concretely, in the figure on the left, there are (1) language collections that do not satisfy the Weak Angluin's Condition (\cref{def:weakAngluin}) (see \cref{ex:ar-progression}), (2) language collections that satisfy the Weak Angluin's Condition, but not Angluin's condition (see \cref{ex:weak-Angluin-no-angluin}), and (3) there are language collections which satisfy  Angluin's Condition (\cref{def:angliun-criterion}) (\eg{}, all finite collections).
        The figure on the right depicts the characterization for stable generators. 
        In addition to what is depicted there, there are (1) language collections that satisfy the weak Angluin's condition and for which infinite coverage is not achievable (see \cref{thm:impossibility-stable-generation}) and (2) language collections for which infinite coverage is achievable but that do not satisfy the weak Angluin's Condition (\cref{def:weakAngluin}) (see \cref{ex:ar-progression}). 
        We note that (1) and (2) \textit{are} not depicted in the right figure.
        }
    \label{fig:conditionalLandscape}
\end{figure}

\subsection{Generation with Breadth and Hallucinations}\label{apx:landscape:breadth:hallucinations}
    To illustrate the generality of our techniques, we use them to obtain characterizations for several new notions of generation.
    In particular, we obtain characterizations for generation with breadth where we relax the requirement that the generation becomes consistent (\ie{}, it has no hallucinations) in the limit.
    Instead, we allow for two cases:
    \begin{itemize}
        \item[$\triangleright$] \textit{Finite Hallucinations:}  Generator $\generator$ has finite hallucinations for language $K$ if $\abs{\generator(S)\setminus K}<\infty$
        \item[$\triangleright$] \textit{Infinite Hallucinations:} Generator $\generator$ has infinite hallucinations for $K$ if $\abs{\generator(S)\setminus K}=\infty$.
    \end{itemize}
    \cref{fig:characterization} summarizes our characterizations for different notions of breadth (along rows: exact breadth, approximate breadth, no breadth) and different amoungs of hallucinations (along columns: no hallucinations, finite hallucinations, and infinite hallucinations).
    \begin{figure}[!htb]
        \centering
        \vspace{-4mm}
        \includegraphics[width=0.8\linewidth,trim={6cm 10cm 2.5cm 4.5cm},clip]{figures/tab-v7.pdf}
        \caption{
            \textit{Characterizations of All Possible Notions of Generation:}\quad
            This figure lists all possible notions of language generation (at a certain granularity) and their characterizing conditions.
            Rows capture breadth (\ie{}, how many elements are missed from the target language).
            Columns capture the extent of hallucinations (\ie{}, how many elements outside of the target language are included).
            Generation becomes easier when moving down rows and/or right along columns.
            The notion of infinite coverage requires $|K \cap \supp(G)| = \infty$ (\Cref{def:infinite-coverage}).
            \vspace{-4mm}
        }
        \label{fig:characterization}
    \end{figure}
    \begin{proof}[Proof Sketch for Results in \cref{fig:characterization}]
        To achieve notions in the last column, one can generate the whole domain (\ie{}, ensure $\supp(\generator)=\cX$). 
        To achieve notions in the last row, one can use an extension of \cite{kleinberg2024language}'s algorithm from \cref{lem:upper-bound-infinite-coverage}. 
        It remains to explain the results in the top $2\times 2$ cells.
        Among these the two results in the first column are from \cref{Main}.
        For the remaining two results (the first two in the second column): 
            the lower bound follows from \cref{apx:proofs:lowerBounds:finitenonuniqueness} since both of these notions satisfy finite non-uniqueness (\cref{def:finiteNonUniqueness}).
            The upper bounds {are presented in \cref{apx:proofs:upperBounds}.}
    \end{proof}
    
    \subsection{Generation with Breadth, Stability, and Hallucinations}\label{apx:landscape:stability:hallucinations}
        Next, as in the previous section, to illustrate the generality of our techniques. For stable generators, we use them to give necessary and/or sufficient conditions for several new notions of generation with stable generators. 
        \cref{fig:characterization-stability} summarizes our results for different notions of breadth with stable generators (along rows: exact breadth, approximate breadth, no breadth) and different amoungs of hallucinations (along columns: no hallucinations, finite hallucinations, and infinite hallucinations).
        Interestingly, our results also show that if we allow for  finitely many hallucinations while missing no elements from the target language, stable generation is still \mbox{characterized by the weak Angluin's condition.} 
        
        Unlike the case of unstable generation, we do not have a complete characterization for every cell of \cref{fig:characterization-stability}.  
        It is an interesting direction to characterize all the remaining cells. %

\begin{figure}[!htb]
    \centering
    \vspace{-2mm}
    {\includegraphics[width=0.8\linewidth,trim={5cm 3cm 2cm 6.5cm},clip]{figures/stability_table.pdf}}
    \caption{
        {\textit{Stability Under All Possible Notions of Generation:}\quad
        This figure lists all possible notions of language generation (at a certain granularity).
        Rows capture the extent of breadth (\ie{}, how many elements are missed from the target language).
        Columns capture the extent of hallucinations (\ie{}, how many elements outside of the target language are included).
        Generation becomes easier as one moves down the rows and/or to the right along columns.
        For the yellow cell, we have shown that 
        not all countable collections admit a stable generator that satisfies this notion
        of breadth, but we do not have a condition that characterizes it.  For the gray cell, 
        we do not know whether all collections satisfy this notion, and we do not have a characterization.
        The notion of infinite coverage refers to a generator whose support satisfies $|K \cap \supp(G)| = \infty$ (see also \Cref{def:infinite-coverage}).}
    }
    \label{fig:characterization-stability}
\end{figure}
\begin{proof}[Proof Sketch for Results in \cref{fig:characterization-stability}]
    To achieve any notion in the last column, it is sufficient to generate the whole domain (\ie{}, ensure $\supp(\generator)=\cX$).
    Unlike the case of unstable generators, 
    achieving the notions in the last row is non-trivial. In particular, we show that there exists
    a collection for which no stable algorithm
    can achieve infinite coverage (\cref{thm:impossibility-stable-generation}).
    It remains to overview the results in the top left $2\times 2$ cells.
    Among these, the two results in the first column are from \cref{Main:stable}.
    For the remaining two results:
        the lower bound follows from \cref{apx:proofs:lowerBounds:finitenonuniqueness} since both notions satisfy finite non-uniqueness (\cref{def:finiteNonUniqueness})
        and the upper bound algorithms is as below:
        \begin{itemize}
            \item[] The algorithm that achieves these notions is straightforward adaptation of \cref{lem:upper-bound-approximate-breadth}\\ that does not {drop} the elements $S_t \cup \inbrace{x_1,\ldots,x_t}$ from the set it outputs.
        \end{itemize}
        \begin{center}
        
        \end{center}
\end{proof}

\section{Implication for the Statistical Setting}
    \label{apx:statistical}
    In this section, we discuss the implications of our results in the statistical setting.

    In this setting, there is a countable language collection $\cL$,
    a ``valid'' distribution $\cP$ supported on a language $K \in \cL,$ and the generating algorithm takes 
    as input string drawn \iid{} from $\cP.$
    For every different notion of breadth, one can define an error function for the generating
    algorithm $\inparen{\generator_n}_{n \in \N}$ as
    \begin{equation}\label{eq:error-function}
        \er\inparen{\generator_n} = \ind\inbrace{\lnot P(\generator_n)} \,,
    \end{equation}
    where $P(\cdot)$ is a predicate defined based on the underlying 
    notion of breadth and its value is $\mathrm{True}$ if the breadth
    property is achieved by $\generator_n$ and $\mathrm{False}$, otherwise.

    Given this definition \eqref{eq:error-function}, \citep{kalavasis2025limitslanguagegenerationtradeoffs} define the error rate for generation with breadth via the
    universal rates framework of \citet{bousquet2021theory}.

    \begin{definition}[Error Rate \citep{bousquet2021theory}]
        Let $\cL$ be a countable collection of languages, $\er$ be an error
        function defined in \cref{eq:error-function}, and $R\colon\N \rightarrow [0,1]$ be
        a rate function such that $\lim_{n \rightarrow \infty} R(n) = 0$. We say that rate $R(\cdot)$ is achievable
        for $\cL$ if there exists a generating algorithm
          $\generator = \inparen{\generator_n}$ such that
            \[
    \forall~ \cP \in \mathrm{Val}(\cL)~~ \exists~ C ,c  > 0 \quadtext{such that}
    \E\insquare{\er(\generator_n)} \leq C \cdot R(c \cdot n)\quad \forall n \in \N \,,
    \]
    where $\mathrm{Val}(\cL)$ the set of all valid distributions
    with respect to $\cL.$ 
    Conversely, we say that no rate faster than $R(\cdot)$ is achievable
    for $\cL$ if for any generating algorithm $\generator = \inparen{\generator_n}$ there exists a valid
    distribution $\cP$ and $c, C>0$ such that $\E\insquare{\er\inparen{\generator_n}} \geq C\cdot R(c\cdot n),$ for infinitely many $n \in \N.$
    We say that no rate is achievable for $\cL$
    if for any generating algorithm $\generator = \inparen{\generator_n}$ there exists
    a valid distribution $\cP$ such that $\limsup_{n \rightarrow \infty} \E\insquare{\er\inparen{\generator_n}} > 0.$ 
    \end{definition}
    \citep{kalavasis2025limitslanguagegenerationtradeoffs} proved
    bounds in this statistical setting for language identification, generation
    with exact breadth for algorithms for which the MOP is decidable,\footnote{Recall this is a mild technical condition that requires that the generating algorithm can answer queries about whether a string $x$ is in its support.}
    and generation with approximate breadth for algorithms that
    are stable in the limit,\footnote{Roughly speaking, stability means that after finitely many steps, the support of the distribution outputted by the generating algorithm does not change. For the formal definition, see \cref{def:stable-generators}.} and for which the MOP is decidable. To get these results, \citep{kalavasis2025limitslanguagegenerationtradeoffs} showed connections
    between the online setting considered in the previous sections and the statistical setting.
    Using the new results in this work, and the results of \citep{kalavasis2025limitslanguagegenerationtradeoffs}, we can get 
    characterizations for the statistical rates under these two notions of
    breadth removing the requirement for decidability of the MOP oracle
    and stability of the generating algorithm.

    \begin{theorem}[Rates for Generation with Exact Breadth]
        For any non-trivial collection of languages $\cL$ no rate faster than $e^{-n}$ is achievable for generation with exact breadth.
        Moreover, For any collection that is identifiable
        in the limit, there exists an algorithm
        that achieves generation with exact
        breadth at rate $e^{-n}.$ Conversely,
        for any non-identifiable collection, no
        rate is achievable for generation with exact breadth. 
    \end{theorem}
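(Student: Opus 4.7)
My plan is to obtain the trichotomy by combining the online characterization from \cref{thm:characterization-exact-breadth} with the online--to--statistical reductions developed in \cite{kalavasis2024limitslanguagegenerationtradeoffs}, which in turn sit inside the universal-rates framework of \cite{bousquet2021theory}. Concretely, the argument splits into three pieces: an $e^{-n}$ ceiling for every non-trivial collection, a matching achievability for every identifiable collection, and a strong impossibility (no rate at all) for every non-identifiable collection. The first two pieces already appear in \cite{kalavasis2024limitslanguagegenerationtradeoffs} under the additional assumption that the MOP is decidable; the point is that the new unconditional characterization lets us drop this assumption everywhere.

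\paragraph{Ceiling and achievability.}
For the $e^{-n}$ ceiling, I would follow the standard ``two hypotheses'' template of \cite{bousquet2021theory} as implemented for generation in \cite{kalavasis2024limitslanguagegenerationtradeoffs}: in any non-trivial $\cL$ one can pick two valid distributions $\cP_1,\cP_2$ supported on distinct targets $K_1\neq K_2$ such that, with probability at least $e^{-\Theta(n)}$, the first $n$ iid samples are indistinguishable under the two; any generator is then forced to fail the exact-breadth predicate on at least one of the two targets, so no rate faster than $e^{-n}$ is achievable. For achievability under identifiability, I would take the online generator supplied by \cref{thm:characterization-exact-breadth} for $\cL$ satisfying \cref{def:angliun-criterion} (the tell-tale--based algorithm of \cite{angluin1980inductive,kalavasis2024limitslanguagegenerationtradeoffs}), feed it the iid sample $x_1,\dots,x_n\sim\cP$, and argue that its stabilization time is dominated by the first time at which the sample contains the (finite) tell-tale of $K$ together with witnesses separating $K$ from the finitely many languages preceding it in the enumeration that are still consistent; a union bound and a geometric tail argument then give $\Ex[\er(\generator_n)]\leq C e^{-cn}$, matching the ceiling.

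\paragraph{Impossibility for non-identifiable collections.}
The main new ingredient is the ``no rate is achievable'' claim. I would argue by contrapositive via the online-to-statistical reduction of \cite{kalavasis2024limitslanguagegenerationtradeoffs}: suppose $\cL$ is non-identifiable but some generator $\generator=(\generator_n)$ achieves $\limsup_{n\to\infty}\Ex[\er(\generator_n)]=0$ against every valid $\cP$. Given any target $K\in\cL$ and any enumeration $E$ of $K$, one can construct a valid distribution $\cP_E$ (\eg{}, a geometric re-weighting of the elements of $E$) whose iid samples couple, with positive probability on every finite prefix, to enumerations that are consistent with the adversarial prefix of $E$. A Borel--Cantelli argument on $\Ex[\er(\generator_n)]\to 0$ then promotes in-expectation success to almost-sure stabilization of the exact-breadth predicate, which by derandomization yields an online generator with exact breadth in the limit for $\cL$, contradicting \cref{thm:characterization-exact-breadth}. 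The main obstacle, and the reason this only works now, is that earlier versions of the reduction (\cite{kalavasis2024limitslanguagegenerationtradeoffs}) also required the constructed online generator to have decidable MOP; here the unconditional lower bound from \cref{thm:characterization-exact-breadth} makes that condition unnecessary, so the reduction closes cleanly and yields the claimed trichotomy.
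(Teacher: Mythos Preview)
Your overall plan matches the paper's: the $e^{-n}$ ceiling and achievability are taken from \cite{kalavasis2024limitslanguagegenerationtradeoffs}, and the ``no rate'' impossibility proceeds by reducing to the online characterization, where the new unconditional lower bound (\cref{thm:unconditional-lower-bound-exact-breadth}) replaces the MOP-conditional one. You correctly identify this last point as the reason the argument now closes without extra assumptions.

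However, the impossibility sketch has a genuine gap. Borel--Cantelli requires summability, but the contrapositive hypothesis only gives $\Ex[\er(\generator_n)]\to 0$, which need not be summable; so ``a Borel--Cantelli argument on $\Ex[\er(\generator_n)]\to 0$'' does not, as written, promote anything to almost-sure success. The paper (following \cite{kalavasis2024limitslanguagegenerationtradeoffs,bousquet2021theory}) fixes this via a \emph{majority voting} step that you omit entirely: one splits the sample into batches, trains a copy of $\generator$ on each, and aggregates. For identification the aggregation is a literal majority over indices; for generation with exact breadth the paper spells out a different scheme---repeatedly sample once from every batch-generator until a $c$-fraction (\eg{}, $c>\nfrac{2}{3}$) agree on the same string, and output that string (with a fallback to the generator of \cite{kleinberg2024language} to ensure termination). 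When at least a $c$-fraction of batches have support exactly $K$, the combined generator also has support exactly $K$; Hoeffding over batches then yields an exponentially small failure probability, which \emph{is} summable, and Borel--Cantelli applies. Your ``derandomization'' does not substitute for this construction. Relatedly, the coupling you propose (``every finite prefix of $E$ has positive $\cP_E$-probability'') is not by itself enough to pass from almost-sure statistical success to success against the fixed adversarial sequence $E$ (which has $\cP_E$-probability zero); the paper instead invokes the construction of \citet{angluin1988identifying} (Theorem~5.6 in \cite{kalavasis2024limitslanguagegenerationtradeoffs}), adapted from identification to generation with exact breadth, as the bridge between the two settings.
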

    For the non-triviality requirement, we
    refer the interested reader to \citep{kalavasis2025limitslanguagegenerationtradeoffs}. The $e^{-n}$ lower bound and upper bound follow immediately from {their results}. The lower bound for no rates achievable follows from the approach of \citep{kalavasis2025limitslanguagegenerationtradeoffs} {(with a few modifications 
    in their construction)} and {\cref{thm:uniqueness}}. 
   {We sketch the modifications here and omit a detailed proof.}
    \begin{itemize}
        \item \citep{kalavasis2025limitslanguagegenerationtradeoffs} make use
        of a construction of \citep{angluin1988identifying} which connects
        the adversarial setting ``in-the-limit'' to the statistical setting ``in-the-limit'' (Theorem 5.6 in their paper) for language identification. 
        A similar result can be shown for generation with exact breadth.

        \item \citep{kalavasis2025limitslanguagegenerationtradeoffs} make use of
        majority votes over learners that identify the target language. In Lemma 5.8 they use 
        the voting scheme, (a modification of) Angluin's result \citep{angluin1988identifying}, and the Borel-Cantelli lemma to show that no rate is achievable for language identification, for collections that do not satisfy Angluin's criterion (\Cref{def:angliun-criterion}). The same
        approach can be used to derive the lower bound for generation with
        exact breadth, by using a slightly different majority voting scheme.
        At a very high level,  following \citep{kalavasis2025limitslanguagegenerationtradeoffs}\footnote{The same approach has been used extensively in the universal rates literature, starting from \citep{bousquet2021theory}.} we
        split the dataset into different batches and train the generating algorithm, and we can show that for large
        enough $n,$ a $c$-fraction of these generators satisfies the generation with exact breadth property {(for, \eg{}, $c>\nfrac{2}{3}$)}. In order to combine their outputs,
        we define an (implicit) distribution as follows: we keep sampling from all the batches until a $c$-fraction of them outputs the same element. It is not hard to see that \textbf{(i)} this process terminates in finite time,\footnote{One small complication is that if a $c$-fraction does not satisfy the desired property, the algorithm might not terminate. To fix that, in every step we either terminate with probability $\nicefrac{1}{2}$ or we do the sampling strategy we described with probability $\nicefrac{1}{2}.$ If we terminate, we run the algorithm from \citep{kleinberg2024language} to generate a valid string from $K$.} \textbf{(ii)} only elements of $K$ have positive probability of being outputted, \textbf{(iii)}
        every element of $K$ has a positive probability of being outputted.
    \end{itemize}
    A similar result can be obtained for language generation with approximate
    breadth, using the criterion from \cref{def:suffCondition}.

       \begin{theorem}[Rates for Generation with Approximate Breadth]
        For any non-trivial collection of languages $\cL$ no rate faster than $e^{-n}$ is achievable for generation with approximate breadth.
        For any collection that satisfies \cref{def:suffCondition}, there exists an algorithm
        that achieves generation with approximate
        breadth at rate $e^{-n}.$ Conversely,
        for any collection that does not \cref{def:suffCondition}, no
        rate is achievable for generation with exact breadth. 
    \end{theorem}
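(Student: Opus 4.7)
My plan is to mirror the three-part structure of the previous theorem for exact breadth, substituting the weak Angluin's condition (\cref{def:weakAngluin}) for Angluin's condition and using the adversarial lower bound \cref{lem:lower-bound-approximate-breadth} in place of \cref{thm:unconditional-lower-bound-exact-breadth}. The upper bound of rate $e^{-n}$ for collections satisfying \cref{def:weakAngluin}, and the matching lower bound of $e^{-n}$ for any non-trivial collection, would be derived analogously to the exact-breadth case via the results of \cite{kalavasis2024limitslanguagegenerationtradeoffs}. The new ingredient is the ``no rate is achievable'' claim for collections violating the weak Angluin's condition, which is where \cref{lem:lower-bound-approximate-breadth} is used in a nontrivial way.

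For the $e^{-n}$ upper bound, I would analyze the tell-tale-based algorithm of \cref{lem:upper-bound-approximate-breadth} in the statistical setting. As shown in its proof, the selected index $g_n$ stabilizes once the observed sample $S_n$ contains (a) the tell-tale $T^{j^*}$ of the stabilizing index $j^*$, and (b) enough elements to rule out every language $L_i$ with $i < j^*$ that is not a superset of $K$; only finitely many such requirements exist before stabilization. Since each required element has positive probability under any valid $\cP$ supported on $K$, a union bound together with the independence of \iid{} draws yields a failure probability that decays as $C\exp(-cn)$. The $e^{-n}$ lower bound for any non-trivial collection is inherited directly from \cite{kalavasis2024limitslanguagegenerationtradeoffs}, since their construction applies to any notion of breadth that forbids outputting only already-seen strings.

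For the no-rate lower bound, I would follow the two-step template sketched in the excerpt. First, I would adapt the \cite{angluin1988identifying}-style construction (as used in Theorem 5.6 of \cite{kalavasis2024limitslanguagegenerationtradeoffs}) to lift \cref{lem:lower-bound-approximate-breadth} into the statistical setting: given any candidate generator with a nontrivial statistical rate, one would construct a valid distribution whose \iid{} sample sequence, when relayed to the generator, realizes the adversarial construction from \cref{lem:lower-bound-approximate-breadth} with positive probability, thereby forcing the generator to violate approximate breadth infinitely often. Second, I would combine independent batch-trained copies of the generator via a modified majority vote: split the $n$ samples into batches, train the given algorithm on each, and output a sample by repeatedly drawing from the batch generators until a $c$-fraction agree on an element, using a $\nfrac{1}{2}$-probability fallback to \cite{kleinberg2024language}'s consistent generator to guarantee termination and consistency with $K$.

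The main obstacle will be calibrating the voting scheme for approximate breadth. Unlike exact breadth, where every element of $K$ should eventually be produced by almost every batch generator, approximate breadth permits each batch generator to miss its own finite subset of $K$, and these subsets may differ across batches. I would pick $c > \nfrac{2}{3}$ so that an element missed by fewer than a $(1-c)$-fraction of generators still receives a majority vote, and then invoke a Borel--Cantelli argument to rule out the event that infinitely many $x \in K$ are systematically missed by the majority under any valid $\cP$. The delicate part is verifying that the implicit voting distribution (i) has support eventually contained in $K$ (consistency) and (ii) covers all but finitely many elements of $K$, which together constitute approximate breadth and therefore contradict the assumed nontrivial statistical rate via the lifted version of \cref{lem:lower-bound-approximate-breadth}.
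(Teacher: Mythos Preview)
Your proposal is correct and follows essentially the same approach the paper takes. The paper itself gives no dedicated proof for this theorem, writing only ``A similar result can be obtained for language generation with approximate breadth, using the criterion from \cref{def:suffCondition}'' after sketching the exact-breadth version; your proposal faithfully expands that sketch with the appropriate substitutions (\cref{def:weakAngluin} for \cref{def:angliun-criterion}, \cref{lem:lower-bound-approximate-breadth} for \cref{thm:unconditional-lower-bound-exact-breadth}, and the tell-tale algorithm of \cref{lem:upper-bound-approximate-breadth} for the upper bound).

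One small simplification: the obstacle you flag in the last paragraph is lighter than you suggest. At any fixed $n$ there are only finitely many batches, and the Borel--Cantelli step guarantees that eventually a $c$-fraction of them achieve approximate breadth (support contained in $K$, missing only a finite set). The union of these finitely many finite ``missed'' sets is itself finite, so every $x\in K$ outside that union lies in \emph{all} of the good batches' supports and hence in at least a $c$-fraction overall; the voting scheme then outputs it with positive probability. Thus properties (i) and (ii) follow directly from a counting argument at each fixed $n$, without an additional Borel--Cantelli over $x\in K$.
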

    The above pair of results provides statistical rates for language generation with exact and approximate breadth. 
   {Obtaining statistical rates for unambiguous generation is an interesting direction.}

\section{Proofs of Lower Bounds}
    \label{apx:proofs:lowerBounds}
    \label{sec:proofs:lowerbounds}
    In this section, we prove our lower bound results.
    
    \subsection{Lower Bound with Uniqueness (Proof of \cref{thm:uniqueness})}
        \label{apx:proofs:lowerBounds:uniqueness}
        In this section, we prove \cref{thm:uniqueness}.
        Recall that this requires to prove that the following: if a collection $\cL$ violates Angluin's condition, then no generator can achieve a property $P$ satisfying uniqueness in the limit for $\cL$.

        \begin{figure}[!bth]
            \centering
            \vspace{-5mm}
            \includegraphics[width=0.475\linewidth,trim={0cm 0cm 7cm 0cm},clip]{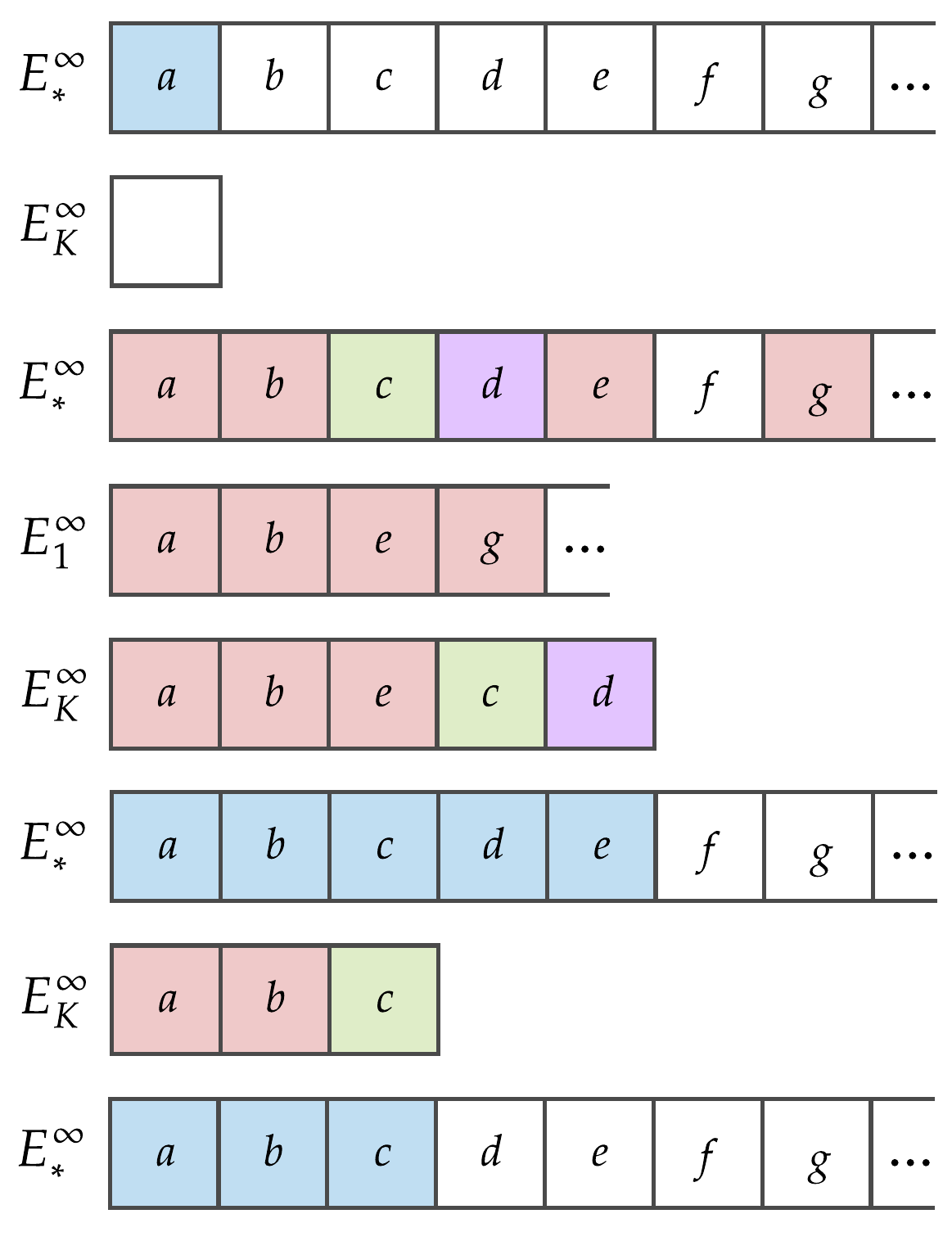}
            \caption{
                \textit{Illustration of the Construction in the Proof of \cref{thm:uniqueness}.}\quad 
                Fix any enumeration $a,b,c,d,e,f,g,\dots$ of the language $L^*$, depicted in the first row.
                The enumeration of $K$ is initially empty in the construction and this is depicted in the second row. 
                To begin the construction, we apply the contrapositive to Angluin's condition with $T = \{a\}$ (\ie{}, with the set highlighted in \textcolor{royalBlue}{blue} in the first row).
                This results in a language $L_1$ that contains $T$ and is a subset of $L^*$.
                For this illustration, suppose that the enumeration of $L_1$ is as presented in the fourth row. 
                The elements shared between $L_1$ and $L^*$ are highlighted in \textcolor{niceRed}{red} in the third row. 
                From the fourth row, we can see that the strings in $L_1$'s enumeration, $E_1^*$, follow the same relative order as in $E_*^\infty.$
                Further, note that $c,d,$ and $f$ are skipped from the enumeration since they do not belong to $L_{1}$ (\ie{}, they are not highlighted in \textcolor{niceRed}{red}). 
                Now, the algorithm in the proof is trained on the enumeration $E_{1}^\infty$ (Subphase A), and we consider two cases: 
            \textbf{Case (i):} Assume that after seeing element $e$, the algorithm {achieves property $P$}. Then we update $E_K^\infty$ by adding all elements of $E_1^\infty$ until $e$ and then add all the elements that we skipped from $E_*^\infty;$ this is shown in the fifth row where we added $c$ and $d$.
            This scenario corresponds to Subphase B.1 in the proof since at least one element from the enumeration of $E_*^\infty$ was skipped during Subphase A.
            Next, we again apply the contrapositive to Angluin's condition.
            This time, we set $T = \{a,b,e,c,d\}$ (denoted in \textcolor{royalBlue}{blue} in the sixth row), and, then repeat the process. 
            \textbf{Case (ii):} Assume that the algorithm {achieves property $P$} after seeing $b$. Then, we update $E_K^\infty$ by adding $a,b$ and then the first element that is not in $L_1$, \ie{}, $c$. 
            This is depicted in the seventh row.
             This scenario corresponds to Subphase B.2 in the proof since no strings from $E_*^\infty$ were skipped during Subphase A.
             Next, we again apply the contrapositive to Angluin's condition.
             This time, we use $T = \{a,b,c\}$ (denoted in \textcolor{royalBlue}{blue} in the last row) and repeat the process. 
            }
            
            \label{fig:constructionMainTheorem}
        \end{figure}

    \begin{proof}[Proof of \Cref{thm:uniqueness}]
        For any enumeration $E$, we use the notation
        $E(i)$ to denote its $i$-th element, $E(1:i)$ to denote its first $i$
        elements, and $E(i:\infty)$ to denote all but the first $i-1$ elements.
        Since $\cL$ is not identifiable in the limit, it does not satisfy Angluin's
        condition (\Cref{def:angliun-criterion}). 
        {Hence, there exists a language $L^*\in \cL$ such that the following holds:} 
        \[
            {\text{for all finite subsets $T\subseteq L^*$}\,,~~ 
            \text{there exists a language $L_T \in \cL$}\,,~~ 
            T \subseteq L_T
                ~~\text{and}~~
            L_T \subsetneq L^*\,.}
            \yesnum\label{eq:angluinViolation:gen}
        \]
        Fix $L^*\in \cL$ to be any language for which this holds.
        Let $E_{*}^\infty$ be an
        arbitrary enumeration of $L^*$, without repetitions.
        Let $K$ and $E_K^\infty$ respectively denote the target language and {its enumeration} that we will construct to show the impossibility result.

        We will show that for {any} generating algorithm $\generator=(\generator_n)$ 
        there exists a choice of the target language {$K$ in $\cL$ (which may be different from $L^*$)} and
        an enumeration of it {such that if $K$ is the target language and the adversary provides enumeration $E_K^\infty$ to $\generator$, then} the algorithm $\generator$ cannot generate with breadth in the limit.  

        {We will construct the enumeration iteratively and select $K$ based on the generating algorithm.
        The construction of the enumeration proceeds in multiple (possibly infinite) phases.
        }
        At any point
        $t \in \N$ of the interaction, we denote by $S_t$ the set of elements
        enumerated so far.

        \paragraph{Phase 1 of Construction.}
        To construct the first phase, we present the generator with the first element
        of the enumeration of $L^*$, \ie{}, $x_{i_1 } \coloneqq E_{*}^\infty(1)$. Let $L_{j_1}$ be some language such that $x_{i_1} \in L_{j_1}$ and $L_{j_1} \subsetneq L^*$, \ie{}, it is a proper subset of $L^*$.
        Notice that such a language is guaranteed
        to exist by picking $T = \{x_{i_1}\}$ in the violation of Angluin's condition \eqref{eq:angluinViolation:gen}. 
        \begin{itemize}
            \item \textbf{Subphase A (Enumerate $L_{j_1}$ Until Generator Generates with Breadth from $L_{j_1}$):}~~
                Consider an enumeration $E_1^\infty$ of the language $L_{j_1}$ that is constructed by traversing $E_*^\infty$ and using the elements of $L_{j_1}$ that appear in it, in the same order as they appear, \ie{}, for every $i \in \N$ it holds that $E_{1}^\infty(i)$ is the $i$-th element of $L_{j_1}$
                that appears in $E_*^\infty$.
                Notice that this is indeed a valid enumeration of $L_{j_1}$ {as $L_{j_1}$ is a subset of $L^*$}.
                At any round $t$ of the first phase, the adversary presents
                the element $E^\infty_1(t)$ to the generator.
                
                Consider two cases: i) either there is some finite $t_{1} \in \N$ such that 
                {$\generator_{t_1}$ {achieves property $P$ for} $L_{j_1}$}
                or ii) there is no
                such $t_1 \in \N.$ In the latter case, we pick the target language $K = L_{j_1}$ and the target enumeration $E_K^\infty = E_1^\infty$, and the lower bound follows {since we have found a pair of $K$ and $E_K^\infty$ for which the generator never achieves {property $P$}}. 
                Hence, assume that we are in the former case, and
                let $\hat x_1$ be the first element of $E_1^\infty$ for which
                the condition holds.
                {Note that, at this point, $\generator_{t_1}$ does \textit{not} {achieve property $P$ for} $L^*$ since {$P$} satisfies the uniqueness criterion and $L_{j_1}\neq L^*$.}
                Further, note that $S_{t_1}$ is the set of strings shown to the generating algorithm after which it starts to generate with breadth from $L_{j_1}$.
        \end{itemize}
        Let $\hat S_1$ be the set of elements of  $E^\infty_*$ that appear before $\hat x_1$ in $E^\infty_*$ and have not appeared in $S_{t_1}$. 
        If $\hat{S_1}\neq \emptyset$, we go to Subphase B.1 and, otherwise if $\hat{S_1}=\emptyset$, we go to Subphase B.2.
        \begin{itemize}
            \item \textbf{Subphase B.1 (Add Any Skipped Elements):}~~ 
        We will use the set $\hat S_1$
        to extend the construction of the target enumeration $E_K^\infty$.
        To do this, we enumerate
        the elements from $\hat S_1$ in an arbitrary order and we fix
        the prefix of the target enumeration $E_K^\infty$ to be $(S_{t_1}, \hat S_1).$ Notice that this step is well-defined since we are only adding to the already constructed enumeration.  Let $\hat t_1$ be the total number of elements enumerated so far. Notice
        that $\hat t_1 = \infty$ if and only if Case i) {(from Subphase A)} holds, in which
        case the lower bound already follows. Hence, assume
        for the continuation of the proof that $\hat t_1 < \infty.$
        {Now we terminate the first phase (without going to Subphase B.2).}
        \item \textbf{Subphase B.2 (If Nothing Skipped Enumerate An Element Outside $L_{j_1}$):}~~ Notice that $\hat S_1 = \emptyset$ if and only if we did not skip any element of $E_*^\infty$ during the traversal {in Subphase A}.
        If we indeed did not skip elements of $E_*^\infty$ we continue 
        traversing it and adding elements to $E_K^\infty$ in the same
        order as we see them in $E_*^\infty$
        until we find some element that does not belong to $L_{j_1}.$ We also include this element in the enumeration $E_K^\infty$,
        we fix $\hat t_1$ to be the number of elements enumerated so
        far and we terminate the first phase.
        \end{itemize}
        Notice that so far in our construction, we have enumerated the first $\hat t_1$ elements 
        of $E_*^\infty.$

        Now we continue our construction inductively for phases $\ell=2,3,\dots$.
        Consider any $\ell\geq 2$.
        Suppose our construction continued from Phase 1 until Phase $\ell$.
        Then, Phase $\ell+1$ of our construction is as follows.
        
        \paragraph{Phase $\ell+1$ of Construction.}
        For the $(\ell+1)$-th phase, consider the set $E_*^\infty(1:\hat t_{\ell})$ that
        has been enumerated so far. 
        By construction, 
        \[
            E_*^\infty(1:\hat t_\ell) \not\subseteq L_{j_\ell}\,,\quad 
            E_*^\infty(1:\hat t_\ell) \subseteq L^*\,,
            \quadand
            E_*^\infty(1:\hat t_\ell) \text{ is finite}\,.
        \]
        We will now apply the violation of Angluin's condition \eqref{eq:angluinViolation:gen} with $T =  E_*^\infty(1:\hat t_\ell).$
        This means that
        there must exist some {$j_{\ell+1} \not\in \inbrace{j_1,j_2,\dots,j_\ell}$} such
        that 
        \[
            L_{j_{\ell+1}}\in \cL\,,\quad 
            L_{j_{\ell+1}} \subsetneq L^*\,,\quadand
            E_*^\infty(1:\hat t_\ell) \subseteq L_{j_{\ell+1}}\,.
        \]
        We now perform analogs of each subphase in Phase 1.
        \begin{itemize}
            \item \textbf{Subphase A (Enumerate $L_{j_{\ell+1}}$ Until Generator Generates with Breadth from $L_{j_{\ell+1}}$):}~~
                Consider an enumeration
                $E_{\ell+1}^\infty$ of $L_{j_{\ell+1}}$ whose first $\hat t_{\ell}$ strings are $E_*^\infty(1:\hat t_{\ell})$ and whose remaining strings are constructed by traversing
                $E_*^\infty(\hat t_{\ell}+1:\infty)$ and selecting strings that belong to $L_{j_{\ell+1}},$ in the same order as they appear in $E_*^\infty$. 
                Notice
                that this is indeed a valid enumeration of $L_{j_{\ell+1}}$ {as $L_{j_{\ell+1}}$ is a subset of $L^*$}.
                {At any round $t$ of this phase, the adversary presents the element $E_{\ell+1}^\infty(t+\hat{t}_\ell)$ to the generator. }
                
                Consider two cases: i) either there is some finite $t_{\ell+1} \geq  \hat t_\ell + 1$ such that 
                {$\generator_{t_{\ell+1}}$ {achieves property $P$ for} $L_{j_{\ell+1}}$}
                or ii) there is no
                such $t_{\ell+1} \in \N.$ In the latter case, we pick the target language $K = L_{j_{\ell+1}}$ and the enumeration $E_K^\infty = E_{\ell+1}^\infty$, and the lower bound follows {since we have found a pair of $K$ and $E_K^\infty$ for which the generator never achieves {property $P$}}. 
                Hence, assume that we are in the former case, and let $\hat x_{\ell+1}$ be the first element of $E_{\ell+1}^\infty$ for which
                the condition holds. 
                {Note that, at this point, $\generator_{t_{\ell+1}}$ does \textit{not} {achieve property $P$ for} $L^*$ since {$P$} satisfies the uniqueness criterion and $L_{j_{\ell+1}}\neq L^*$.}\footnote{{Note that, at this point, the generator may indeed be generating with breadth from one of the earlier languages $\sinbrace{L_{j_{1}},\dots, L_{j_{\ell-1}}, L_{j_{\ell}}}$. The proof only requires that the generator does not generate with breadth from $L^*$.}}
                Further, note that $S_{t_{\ell+1}}$ is the set of strings shown to the generating algorithm after which it starts to generate with breadth from $L_{j_{\ell+1}}$.
        \end{itemize}
        Let $\hat S_{\ell+1}$ be the set of strings of 
        $E^\infty_*$ that appear before $\hat x_{\ell+1}$ in $E^\infty_*$ and have not appeared
        in the enumeration $S_{t_{\ell+1}}$. 
        If $\wh{S}_{\ell+1}\neq \emptyset$, we go to Subphase B.1 and, otherwise if $\wh{S}_{\ell+1}=\emptyset$, we go to Subphase B.2.
        \begin{itemize}
            \item \textbf{Subphase B.1 (Add Any Skipped Elements):}~~
                {We will use $\hat{S}_{\ell+1}$ to extend the construction of the target enumeration $E_K^\infty.$}
                To do this, we enumerate the elements from $\hat S_{\ell+1}$ in an arbitrary order and we fix the prefix of the target enumeration $E_K^\infty$ to be $(S_{t_{\ell+1}}, \hat S_{\ell+1}).$ 
                Notice that this step is well-defined since we are only adding to the already constructed enumeration.
                Let $\hat t_{\ell+1}$ be the set of elements enumerated so far. Notice
        that $\hat t_{\ell+1} = \infty$ if and only if Case i) (from Subphase A) holds, in which
        case the lower bound already follows. 
        Hence, assume
        for the continuation of the proof that $\hat t_{\ell+1} < \infty.$
        {Now we terminate the $(\ell+1)$-th phase without going to Subphase B.2.}
                
            \item \textbf{Subphase B.2 (If Nothing Skipped Enumerate An Element Outside $L_{j_{\ell+1}}$):}~~
            Notice that $\hat S_{\ell+1} = \emptyset$ if and only if we did not skip any element of $E_*^\infty$ during the traversal {in Subphase A}.
            If we indeed did not skip elements of $E_*^\infty$ we continue 
            traversing it and adding elements to $E_K^\infty$ in the same order as we see them in $E_*^\infty$
            until we find some element that does not belong to $L_{j_{\ell+1}}.$
            We also include this element in the enumeration $E_K^\infty$, we set $\hat t_{\ell+1}$ to be the number of elements enumerated so far and we terminate Phase $\ell+1$.
        \end{itemize}
    Notice that so far we have enumerated the first $\hat t_{\ell+1} > \hat t_\ell + 1$ elements  of $E_*^\infty.$
    
    \paragraph{Inductive Argument.}
    As explained, we continue the construction of the target enumeration inductively. 
    If there is some phase $\ell$ such that Case ii) (in Subphase A) is activated, then the lower bound follows.
    Let us now assume that Case ii) is not activated for any phase $\ell \in \N$. 
    Then, we have constructed an enumeration of $L^*$ (by construction of the sets $S_{t_\ell}$ and $\hat S_{\ell}$ for each $\ell\in \N$) such that
    {$\generator_t$ does not {achieve property $P$ for} $L^*$} for infinitely many $t \in \N.$ 
    {Now, the lower bound follows by setting} the target language $K = L^*$ and the target enumeration to the one we have constructed inductively over all phases.
    \end{proof}
    \subsection{Lower Bound with Finite Non-Uniqueness  (Proof of \cref{thm:finiteNonUniqueness})}
        \label{sec:FiniteNonUniqueness}
        \label{apx:proofs:lowerBounds:finitenonuniqueness}
    In this section, we prove \cref{thm:finiteNonUniqueness}. 
    \begin{proof}[Proof of \cref{thm:finiteNonUniqueness}]
        The proof of this lower bound uses the construction in the proof of \cref{thm:uniqueness} with one change: now the language $L_T$ (introduced at the start of the proof) is the language determined by the contrapositive to the weak Angluin's criterion (\cref{def:weakAngluin}) and not the contrapositive to the (usual) Angluin's criterion (\cref{def:angliun-criterion}).
        Concretely, the contrapositive to the weak Angluin's criterion implies that there exists a language $L^*\in \cL$ such that the following holds: %
        \[
            \forall T\subseteq L^*\,,\quad 
            \exists L_T \in \cL\,,\quad \text{such that}\quad 
            T \subseteq L_T\,,\quad 
            L_T \subsetneq L^*\,,\quadand
            \abs{L^*\setminus L_T} =\infty\,.
            \yesnum\label{eq:angluinViolation2}
        \]
        We will use this language $L^*$ and proceed with the construction without change.
    
        Having completed the construction, we proceed to the proof.
        The only place in which the proof uses a property of the criterion for breadth is when it invokes the uniqueness criterion with respect to the pair of languages $L_T$ and $L^*$ (once in Subphase A of each phase).
        Here, $T$ is the set $E_*^\infty(1)$ in the first phase and $E_*^\infty(1:\hat t_{\ell})$ in the $\ell$-th phase.
        Now, we cannot directly invoke the uniqueness criterion {since $P$ does not satisfy it.} %
        However, since $\abs{L^*\setminus L_T} =\infty$ and since {property $P$} satisfies the finite non-uniqueness criterion, we can conclude that no generator can {achieve property $P$ for} both $L^*$ and $L_T$ simultaneously, as desired.
        Hence, we can use the finite non-uniqueness criterion in analyzing each phase of the construction and the result follows as in the proof of \cref{thm:uniqueness}.
    \end{proof}

    \subsection{Lower Bound for Approximate Breadth with Stability {(Proof of Lower Bound in \cref{thm:stable})}}
        \label{apx:proofs:lowerBounds:stability}
        In this section, we prove the lower bound in \cref{thm:stable}:     
        we show that if a collection $\cL$ violates Angluin's condition, then no generator can generate with approximate breadth from $\cL$.
        (Note that this as a corollary implies that no generator can generate with exact breadth.)

    \begin{proof}[Proof of lower bound in \cref{thm:stable}]
       
        {We will use the following corollary of the construction in the previous section.
        \begin{restatable}[]{corollary}{lemStructural}
        \label{lem:structural}
            Let $\cL$ be a countable collection of languages that is not identifiable in the limit.
            Let $\generator=(\generator_n)$ be a generating algorithm.
            If $\generator$ generates with approximate breadth from $\cL$ in the limit, then there is a language $L^*\in \cL$, an enumeration $E^*$ of $L^*$, a sequence of distinct languages $L_{\ell_1}, L_{\ell_2}, \dots\in \cL$, and a \emph{strictly} increasing sequence $t(1), t(2), \dots\in \N$,  such that the following holds. 
            \begin{itemize}
                \item For each $i\in \N$, $L_{\ell_i}$ is a proper subset of $L^*$, \ie{}, $L_{\ell_i}\subsetneq L^*$; and 
                \item Given strings from $E^*$ as input, for each $i\in \N$, $\generator_{t(i)}$ generates with approximate breadth from $L_{\ell_i}$.
            \end{itemize}
        \end{restatable} 
        }
        
        \noindent Consider the construction in {in the above corollary.}
        Let $K=L^*$ and suppose that the adversary follows the enumeration $E^*$.
        
        Let $C_B,C_S\colon \N\to \N$ be two counters: for each $t$, $C_B(t)$ counts the number of values $1\leq i\leq t$ for which $\generator_i$ does \textit{not} generate with approximate breadth from $L^*$ and $C_S(t)$ counts the number of values $2\leq i\leq t$ for which $\supp(\generator_i)\neq \supp(\generator_{i-1})$.
        In other words, $C_B(t)$ is the number of times $\generator$ does not generate with approximate breadth from $L^*$ in the first $t$-steps and $C_S(t)$ is the number of times $\generator$ changes its support in the first $t$-steps.

        Toward a contradiction suppose that $\generator$ is stable and generates {with approximate breadth} from $K$ in the limit (when given the enumeration $E^*$). 
        This, by definition, implies that 
        \[
            \lim_{t\to\infty} C_B(t)<\infty 
            \qquadand
            \lim_{t\to\infty} C_S(t)<\infty\,.
            \yesnum\label{eq:approxbreadth_stability_counter}
        \]
        The former implies that there are only finitely many values of $i\in \N$ such that $\generator_{t(i)}$ does not generate with approximate breadth from $L^*$ 
        (where $t(i)$ are from \cref{lem:structural}).
        Thus, after discarding a sufficiently large
        finite prefix of $t(i), i \in \N,$ and re-indexing
        we see that there are infinitely many values, say, $\tau(1)<\tau(2)<\dots\in \N$, such that, for each $i$, $\generator_{\tau(i)}$ generates with approximate breadth from $L^*$ and $L_{\ell_i}.$
        Since $\generator_{\tau(i)}$ generates with approximate breadth from both $L^*$ and $L_{\ell_i}$ and $L_{\ell_i}\subsetneq L^*$, it follows that: for each $i\in \N$,
        \[
            L_{\ell_i} = \supp(\generator_{\tau(i)}) \cup R 
            \qquadwhere
            R\subseteq L^* \setminus \supp(\generator_{\tau(i)})\,.
            \yesnum\label{eq:structure_of_Li}
        \]
        Fix any $i$. 
        Let 
        \[
            s(i)\coloneqq \abs{L^* \setminus \supp(\generator_{\tau(i)})}\,.
        \]
        Since $\generator_{\tau(i)}$ generates with approximate breadth from $L^*$, $s(i)<\infty$.
        We claim that 
        \[
            \supp(\generator_{\tau(i)})
            \neq 
            \supp(\generator_{\tau(i + j)})
            \qquadtext{for some}
            1 \leq j \leq S(i) \coloneqq2^{s(i)}+1\,.
            \yesnum\label{eq:claim_support_change}
        \]
        \paragraph{Proof of \cref{eq:claim_support_change}.}
        To see this, toward a contradiction, suppose that 
        \[
            \supp(\generator_{\tau(i)})=\supp(\generator_{\tau(i+1)})=\dots=\supp(\generator_{\tau_{i+S(i)}})\,.
        \]
        This combined with \cref{eq:structure_of_Li} implies that, for each $1\leq j\leq S(i)$, $L_{\ell_{(i+j)}}=\supp(\generator_{\tau(i)})\cup R_j$ for some finite set $R_j\subseteq L^*\setminus \supp(\generator_{\tau(i)})$.
        Since all of $L_{\ell_1},L_{\ell_2},\dots$ are different, it must hold that all of $R_1,R_2,\dots,R_{S(i)}$ are different.
        This is a contradiction since each $R_i$ is a subset of $R_i\subseteq L^*\setminus \supp(\generator_{\tau(i)})$ and there are only $S(i)-1=2^{s(i)}$ such subsets.

        \paragraph{Completing the Proof.}
            \cref{eq:claim_support_change} shows that, for each $i \in \N$, starting from the $\tau(i)$-th step, the support of the generator changes after finitely many steps.
            Since $\tau_1,\tau_2,\dots,\in\N$ is a strictly increasing and infinite sequence, this implies that the support of the generator changes infinitely often as it is provided more and more examples and, hence, $\lim_{t\to\infty} C_S(t)=\infty$ which contradicts the fact that $\generator$ is stable \eqref{eq:approxbreadth_stability_counter}.
            Hence, our assumption that $\generator$ is stable and generates with approximate breadth from $\cL$ in the limit must be false.
            Therefore, no stable generator can generate with approximate breadth from any non-identifiable collection.
    \end{proof}

\section{Proofs of Upper Bounds}
    \label{apx:proofs:upperBounds}
    \label{sec:proofs:upperbounds}
    In this section, we present new algorithms for generation required in our results (\cref{thm:approximate-breadth,Main,Main:stable,fig:characterization,fig:characterization-stability}).
    
    \subsection{Functional Upper Bound for Generation with Approximate Breadth}
        \label{sec:proofs:upperbounds:function}
        In this section, we present a function\footnote{Using the terminology of \citet{kleinberg2024language}, we refer to algorithms that have access to certain oracles (beyond membership oracle) specific to the collection $\cL$ as functions; reserving the term algorithm for algorithms which only require membership access to languages in $\cL$ (\ie{}, answer to questions of the form ``is $s\in L_i$?'').} that generates with approximate breadth from any countable collection $\cL$ satisfying weak Angluin's condition.
        This establishes the upper bound in \cref{thm:approximate-breadth}.
        
\begin{lemma}
[Function for Generation with Approximate Breadth]
\label{lem:upper-bound-approximate-breadth-function}
    Let $\cL$ be a countable collection of
    languages that satisfies \Cref{def:suffCondition}.
    Then, there exists a generating algorithm that, given access to a membership oracle for 
    $\cL$ and a subset oracle for $\cL$ (that given indices $i,j$ outputs Yes if $L_i\subseteq L_j$ and No otherwise), generates
    from $\cL$ with approximate breadth in the limit.
\end{lemma}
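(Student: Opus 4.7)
The plan is to adapt the subset-oracle algorithm of \cite{kleinberg2024language} whose structure is sketched in the proof of \cref{lem:upper-bound-infinite-coverage}. At each round $n$, that algorithm uses membership and subset queries to construct a descending chain of languages consistent with the current sample $S_n$, and selects the chain member $L_{i_\ell}$ whose master-list index is the largest number $\leq n$ appearing in the chain, from which it outputs a single unseen string. I would leave the chain construction intact and change only the final step, so that the generator $\generator_n$ outputs a distribution whose support equals $L_{i_\ell}\setminus S_n$ (enumerable by asking membership queries along a canonical enumeration of $\cX$, in the sense of \cref{rem:succinct-rep}). I would import two properties of the chain as a black box from \cite{kleinberg2024language}: (a) $S_n \subseteq L_{i_\ell}$, and (b) for all sufficiently large $n$, $L_{i_\ell} \subseteq K$. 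Property (b) is exactly what guarantees that the arbitrary single string output by the Kleinberg--Mansour algorithm lies in $K$ eventually, and it immediately yields $\supp(\generator_n) \subseteq K$ for $n$ past some $n^*$, i.e., the generator does not hallucinate in the limit.

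It then remains to argue approximate breadth, which is where the weak Angluin condition enters. Let $T_K$ be the tell-tale of $K$ promised by \cref{def:weakAngluin}. Because $T_K$ is a finite subset of $K$ and the adversary must eventually enumerate every element of $K$, there is some round $n^\dagger$ beyond which $T_K \subseteq S_n$. For $n \geq \max\{n^*, n^\dagger\}$ we have $T_K \subseteq S_n \subseteq L_{i_\ell} \subseteq K$, so the weak Angluin condition applied to $K$ and the consistent language $L_{i_\ell}$ forces either $L_{i_\ell} = K$ or $|K \setminus L_{i_\ell}| < \infty$. In both cases $|K \setminus \supp(\generator_n)| \leq |K \setminus L_{i_\ell}| + |S_n|$ is finite, which completes approximate breadth.

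I expect the principal obstacle to be the verification of property (b): that the Kleinberg--Mansour chain eventually selects a language contained in $K$. This is the core of their consistency analysis and is the reason their chain construction is nontrivial, so I would invoke it from the proof of \cref{lem:upper-bound-infinite-coverage} rather than reprove it. Once (b) is available, the weak Angluin condition is tailor-made for the approximate-breadth step: it is precisely the relaxation of Angluin's ``no consistent proper subset'' to ``no consistent proper subset missing infinitely many elements,'' which is exactly what converts a Kleinberg--Mansour chain element $L_{i_\ell} \subseteq K$ into one that differs from $K$ by only finitely many strings.
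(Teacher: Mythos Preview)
Your proposal is correct and follows essentially the same approach as the paper: both run the Kleinberg--Mullainathan critical-language (descending chain) algorithm, output a distribution supported on the selected chain element minus $S_n$, import from \cite{kleinberg2024language} that the selected language is eventually contained in $K$, and then apply the weak Angluin condition (via $T_K \subseteq S_n \subseteq L_{i_\ell} \subseteq K$) to conclude $|K \setminus L_{i_\ell}| < \infty$. The only difference is cosmetic: the paper splits the black-box import into ``$K$ is eventually critical'' plus ``any later critical language is contained in $K$,'' whereas you package these as the single property (b).
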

{This proof is inspired by the proof of Theorem B.2 in \citep{kalavasis2025limitslanguagegenerationtradeoffs}, the difference is that, instead of using Angluin's condition (\cref{def:angliun-criterion}), we use its weakening (\cref{def:weakAngluin}).}
\begin{proof}[Proof of \Cref{lem:upper-bound-approximate-breadth-function}]
        The algorithm $\algo{A}$ is illustrated below. 
        This algorithm follows the steps of the generation algorithm of \citep{kleinberg2024language} (see Steps 1 to 5). The only change is in its last Step 6 where it generates a random sample from the set of interest.

        \begin{mdframed}
            
            \noindent \textbf{for} $t\in \inbrace{1,2,\dots}$ \textbf{do:}
            \begin{enumerate}
                \item Observe element $x_t$ and let $S_t$ be the set of all elements observed so far.
                
                \item Construct a \textit{version space} $V_t$ consisting of all languages in $\cL_{\leq t}$ consistent with $S_t$, \ie{}, 
                \[
                    V_t\coloneqq \inbrace{L_j\colon 1\leq j\leq t\,,~~L_j\supseteq S_t}\,.
                \]
                \item[] \textit{$\#$ Define a language $L_i\in V_t$ to be \emph{critical} if $L_i$ is the smallest-index language in $V_t$ or $L_i$ is a subset of all languages preceding it in $V_t$, \ie{},  $L_i\subseteq L_j$ for all $1\leq j < i$.}

                \item If $V_t = \emptyset,$ \textbf{output} an arbitrary element of $\cX$ and \textbf{go} to the next iteration.
                
                \item Construct the set $C_t\subseteq V_t$ of all critical languages.
                \item[] \emph{$\#$ To construct the set of critical languages $C_t$ the algorithm needs access to the subset oracle.}
                 
                \item Let $L_i$ be the largest-indexed language in the set of critical languages $C_t$.
                
                \item \textbf{output}
                a sample from any distribution whose support is $L_i\setminus S_t$.
                This can be done in a computable fashion by first sampling a natural number $n$ from (\eg{}, the geometric distribution on $\N$) and then outputting the $n$-th string from $L_i\setminus S_t$.
            \end{enumerate}
        \end{mdframed} 
        
        \noindent Let $z$ be the first index such that $K=L_z$.
        The proposed algorithm generates with approximate breadth from $K$ when after some finite time $t^*,$ and for $t > t^*,$ the last language in the set of critical languages $C_t$, $L_i$  $= L_i(t)$, satisfies that 
        \[
            L_i\subseteq K \quadand
            \abs{K\setminus L_i} < \infty\,.
        \]
        This condition is implied by the following two conditions.
        \begin{enumerate}
            \item[(A)] $K$ is eventually included in set of critical languages $C_t$ and is never removed after that.
            \item[(B)] Eventually all the languages $L_j$ with $j > z$ that are in $C_t$ satisfy $L_j \subseteq K$ and $\abs{K\setminus L_j}<\infty$.
        \end{enumerate}
       
        \noindent Result (4.3) of \citep{kleinberg2024language} shows that there is a finite time $t_A$ after which Condition (A) holds.
        We will show that there is also a finite time $t_B$ after which Condition (B) holds.
        This shows that, for any $t\geq \max\inbrace{t_A,t_B}$, $\algo{A}$ generates with approximate breadth from $K$.

        \paragraph{Condition (B) holds after a finite time.}
            Since $\cL$  satisfies the weakening of Angluin's condition (\Cref{def:weakAngluin}), $K=L_z$ has a finite tell-tale set $T_z$, such that, any language $L\in \cL$ containing the tell-take $T_z$ satisfies one of the following:
            \begin{itemize}
                \item Either $L$ is not a proper subset of $K$;
                \item Or $L$ is a proper subset of $K$ and satisfies $\abs{K\setminus L} < \infty$.
            \end{itemize}
            (Recall that $T_z$ is not known to us; our proof will not need this.)
            Fix any $j>z$ and any time $t_B \geq t_A$ after which $K$ is guaranteed to be a critical language and after which $S_t\supseteq T_z$ (which happens at a finite time since $T_z$ is finite and, so, all elements of $T_z$ appear in the enumeration of $K$ at some finite time).
            Our goal is to show that for any $t\geq t_B$, and any $j>z$ for which $L_j$ is in $C_t$, it holds that 
            \[
                L_j \subseteq K
                \quadand
                \abs{K\setminus L_j}<\infty\,.
            \]
            By the definition of critical languages and the fact that $L_j$ appears after $K=L_z$ in the set of critical languages (as $j > z$), it follows that $L_j \subseteq K.$
            Hence, it remains to show that $\abs{K\setminus L_j}<\infty$.
            To see this, observe that since $L_j\in C_t$ and $C_t\subseteq V_t$, $L_j$ is in the version space $V_t$ and, hence, by the definition of $V_t$, $L_j\supseteq S_t.$
            Therefore, in particular, $L_j\supseteq T_z$ (as $S_t\supseteq T_z$).
            Now, \cref{def:weakAngluin} combined with the observation that $L_j\subseteq K$   implies that $\abs{K\setminus L_j}<\infty$ as required.
    \end{proof}

    \noindent Building on the result of \cite{kalavasis2025limitslanguagegenerationtradeoffs} (Corollary B.2 in their paper), the previous result shows that the function\footnote{To be precise, the function is that of \citep{kleinberg2024language} together with a process to sample from a language given membership access to it; see \eg{}, Step 6 in the Algorithm of \Cref{lem:upper-bound-approximate-breadth-function}.} of \cite{kleinberg2024language}
    with access to a subset query oracle achieves the
    ``best-of-three'' worlds for generation, without
    requiring any prior information about $\cL,$
    only subset and membership oracle access.

    \begin{corollary}
        Let $\cL$ be a countable collection of languages. 
        Exactly one of the following holds for the subset-oracle-based function of \cite{kleinberg2024language}.
        \begin{itemize}
            \item If $\cL$ satisfies Angluin's condition,
            the function generates with exact breadth in the limit. 

            \item If $\cL$ does not satisfy Angluin's condition but satisfies the weak Angluin's condition, the function generates with approximate breadth in the limit.

            \item If $\cL$ does not satisfy the weak
            Angluin's condition, the function
            generates with infinite coverage in the limit.
        \end{itemize}
    \end{corollary} 
    
    \subsection{Algorithmic Upper Bound for Generation with Approximate Breadth }
        \label{sec:proofs:upperbounds:membership}
         Next, we give an algorithm that generates with approximate breadth without requiring access to a subset oracle.
         This establishes an alternate proof of the upper bound in \cref{thm:approximate-breadth}.
\begin{lemma}
[Algorithm for Generation with Approximate Breadth]
\label{lem:upper-bound-approximate-breadth}
    Let $\cL$ be a countable collection of
    languages that satisfies \Cref{def:suffCondition}.
    Then, there exists a generating algorithm that, given access to a membership oracle for 
    $\cL$ and the tell-tale oracle from \cref{def:suffCondition}, generates
    from $\cL$ with approximate breadth in the limit.
\end{lemma}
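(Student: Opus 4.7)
The plan is to adapt the Kleinberg-style critical-chain construction from the proof of \Cref{lem:upper-bound-approximate-breadth-function} by \emph{simulating} the subset oracle using only the membership oracle, and using the tell-tale oracle to filter the version space in the Angluin style. At each time $t$, the algorithm first computes a version space
\[
    V_t \;=\; \inbrace{\,j \leq t \,:\, S_t \subseteq L_j \text{ and } \hat{T}_j^t \subseteq S_t\,}\,,
\]
where $\hat{T}_j^t$ is the first $t$ elements of the tell-tale enumeration of $T_j$ returned by the tell-tale oracle and containment is checked via the membership oracle. Standard Angluin-style arguments show that $V_t$ stabilizes in the pointwise sense: for each $j$, either $j \in V_t$ for all sufficiently large $t$ (exactly when $L_j \supseteq K$ and $T_j \subseteq K$), or $j \notin V_t$ for all sufficiently large $t$.

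Next, fix any computable enumeration $x_1, x_2, \dots$ of $\cX$ and define a \emph{simulated subset oracle} by declaring $L_i \sqsubseteq_t L_j$ iff every $x_k$ with $k \leq t$ and $x_k \in L_i$ also satisfies $x_k \in L_j$ (verified via finitely many membership queries). Observe that $\sqsubseteq_t$ converges pointwise to the true subset relation: if $L_i \not\subseteq L_j$, then a witness $x \in L_i \setminus L_j$ eventually appears in the first $t$ elements of the enumeration, making $\sqsubseteq_t$ correctly return No; and if $L_i \subseteq L_j$, no such witness exists and $\sqsubseteq_t$ always returns Yes. The algorithm then runs the critical-chain procedure of \Cref{lem:upper-bound-approximate-breadth-function} on $V_t$ using $\sqsubseteq_t$ in place of the subset oracle, and outputs a sample from $L_{i_t} \setminus S_t$ where $L_{i_t}$ is the last critical language in the chain (sampled by geometric sampling over a computable enumeration of $L_{i_t} \setminus S_t$, as in Step~6 of the algorithm in \Cref{lem:upper-bound-approximate-breadth-function}).

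For the analysis, the goal is to show that in the limit the simulated algorithm behaves identically to the subset-oracle algorithm of \Cref{lem:upper-bound-approximate-breadth-function}, so that its guarantee carries over. For any fixed $N$, both $V_t \cap [1, N]$ and the answers of $\sqsubseteq_t$ on all pairs in $[1, N]^2$ stabilize at their limiting values after finitely many rounds (a finite union of pointwise convergences). Hence, restricted to indices $\leq N$, the simulated critical chain eventually coincides with the true critical chain computed by the subset-oracle algorithm on $V_\infty \cap [1, N]$. By the analysis of \Cref{lem:upper-bound-approximate-breadth-function}, there is a finite $N$ past which the true last critical language $L_{i^\star}$ satisfies $L_{i^\star} \subseteq K$ and $\abs{K\setminus L_{i^\star}} < \infty$; choosing $t$ large enough to stabilize simulation on this prefix gives approximate breadth for the simulated algorithm.

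The main obstacle is handling the fact that $V_t$ and the admissible index range grow with $t$, so the simulated last critical language at time $t$ could in principle jump to a large index whose $\sqsubseteq_t$ answers have not yet converged, producing transient hallucinations. The fix I plan to use is to cap the chain construction at indices $\leq f(t)$ for a slowly growing computable $f(t)\to\infty$, chosen so that for every $N$ there is a time past which $f(t) \geq N$ and all relevant $\sqsubseteq_t$ answers on $[1,N]^2$ are correct; combined with the stabilization of $V_t$, this ensures that beyond this time the capped simulated chain agrees with the true critical chain on the relevant prefix and so the algorithm inherits consistency and approximate breadth from \Cref{lem:upper-bound-approximate-breadth-function}.
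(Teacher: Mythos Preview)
Your proposal correctly identifies that tell-tales should be used to filter the version space, but the simulated-subset-oracle route has a genuine gap that the ``slowly growing $f(t)$'' device does not close. The condition you impose on $f$ --- that for every $N$ there is a time past which $f(t)\ge N$ and all $\sqsubseteq_t$ answers on $[1,N]^2$ are correct --- is satisfied by \emph{every} $f$ with $f(t)\to\infty$, so it places no constraint on $f$ at all. More importantly, at time $t$ the capped chain is computed on $[1,f(t)]$, not on $[1,N]$; indices in $(N,f(t)]$ can still carry incorrect $\sqsubseteq_t$ answers, and these are precisely the ones that become the last simulated critical language. Concretely, for any computable $f$ with $f(t)\to\infty$, take $\cX=\N$, $K=L_1=\{2,4,6,\dots\}$, and for $j\ge 2$ let $L_j=K\cup\{h(j)\}$ where $h(j)$ is an odd number chosen (after seeing $f$) to exceed $\sigma(j)\coloneqq\min\{t:f(t)\ge j\}$; take all tell-tales to be $\{2\}$. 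This collection satisfies the weak Angluin condition. Yet at time $t=\sigma(j)$ the index $j\le f(t)$ lies in your filtered $V_t$, and since $h(j)>t$ the simulated oracle declares $L_{j}\sqsubseteq_t L_1$; hence the last critical language has index at least $j$, and the output support contains an odd element not in $K$. This happens for infinitely many $t$, so consistency is never achieved. No computable $f$ escapes this diagonalization, because the adversary chooses $h$ after $f$.

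The paper's proof takes a simpler route that sidesteps subset comparisons entirely. It uses only Angluin's smallest-consistent-index rule: $g_n$ is the least $j\le n$ with $S_n\subseteq L_j$ and $T_n^j\subseteq S_n$. This $g_n$ stabilizes to some $j^\star\le z$, and the stabilized language satisfies $K\subseteq L_{j^\star}$ (note the containment is the \emph{opposite} direction from what the critical chain would give) with $|L_{j^\star}\setminus K|<\infty$ by the weak Angluin condition applied to $L_{j^\star}$. The key trick is then to output a distribution supported on $L_{g_n}\setminus(S_n\cup\{x_1,\dots,x_n\})$ for a fixed enumeration $x_1,x_2,\dots$ of $\cX$: removing this growing but finite prefix eventually deletes the finite set $L_{j^\star}\setminus K$ (giving consistency), while only finitely many elements of $K$ are ever removed (giving approximate breadth). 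This avoids any need to compare languages pairwise and hence any simulation of a subset oracle.
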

\begin{proof}[Proof of \Cref{lem:upper-bound-approximate-breadth}]
    Let $S_n$ be the set of elements the adversary
    has enumerated up to round $n \in N.$
    For every $i, n \in \N$, let $T^i_n$ be the
    first $n$ elements enumerated from the tell-tale
    oracle when called on language $L_i.$ 
    Let also $x_1,x_2,\ldots,$ be an enumeration 
    of the domain $\cX.$  Our proof is reminiscent of Angluin's approach \citep{angluin1980inductive}, and the
    generating algorithm requires only one extra step,
    namely removing the elements $x_1,\ldots,x_n$
    from the support of the outputted distribution.
    However, due to the relaxed condition we are using, our analysis is more technically involved.
    
    For every round $n \in \N$, the generating algorithm
    constructs the sets $T^i_n$ using the tell-tale oracle 
    for all languages $L_i$ with $1 \leq i \leq n.$
    Let $g_n \in \N, 1\leq g_n \leq n,$ be the smallest number
    (if any) such that $S_n \subseteq L_{g_n}$ and $T_n^{g_n} \subseteq S_n.$ 
    If no such number exists, let $\generator_n$ be
    some arbitrary distribution. Otherwise,
    let $\generator_n$ be a distribution with 
    $\supp(\generator_n) = L_{g_n} \setminus \inparen{S_n \cup \inbrace{x_1,\ldots,x_n}}$.\footnote{{One can sample from this distribution in a computable fashion.}}

    \begin{mdframed}

            Fix a canonical enumeration $x_1, x_2,\dots$ of $\cX$.
            
            \vspace{2mm}

            \noindent \textbf{for} $n \in \inbrace{1,2,\dots}$ \textbf{do:}
            \begin{enumerate}
                \item Let $S_n$ be the set of all elements observed so far.
                
                \item Create the list $\cL_{\leq n} = \{L_1,\dots,L_n\}$.

                \item For each language $L_i$ in $\cL_{\leq n}, $ let $T^i = \mathsf{TellTaleOracle}(L_i), ~i \in [n]$.

                \item Truncate the outputs of the oracle and keep only their first $n$ elements
                \[
                T_n^i = (T^i(1),\dots,T^i(n)), ~ i \in [n]\,.
                \]
                
                \item Find smallest index $g_n \in \{1,\dots,n\}$ such that $S_n \subseteq L_{g_n}$
                and
                $T_n^{g_n} \subseteq S_n$.
                
                \item[] $\#$ \emph{This is the minimum indexed language in $\cL_{\leq n}$ that is consistent and its truncated tell-tale is contained in the observed elements.} 

                \item If no such $g_n$ exists, \textbf{output} an arbitrary point from $\cX$ and \textbf{go} to the next iteration.

                \item Otherwise, define a distribution $\generator_n$ with  $\supp(\generator_n) = L_{g_n} \setminus (S_n \cup \{x_1,\dots,x_n\})$.
                 \item[] $\#$ \emph{The intuition for removing the first $n$ elements $x_1,\dots,x_n$ of the canonical enumeration of $\cX$ is as follows. A bad scenario for our algorithm is that there exists some language $L_{g_n}$ in the enumeration of $\cL$ before $L_z = K$ such Step 5 will be stuck on $L_{g_n}$. Then we can guarantee that $|L_{g_n} \setminus K| < \infty$. Since this set is finite, by removing  parts of the enumeration of $\cX$ of increasing but finite size, we will eventually remove $|L_{g_n} \setminus K|$, and obtain a sampler that (i) is consistent and (ii) misses only finitely many elements from $K$.}
                 
                \item \textbf{Output} a sample from the distribution $\generator_n$.
            \end{enumerate}
        \end{mdframed} 
    
    \smallskip 
    
    \noindent We will show that this algorithm generates 
    with approximate breadth in the limit.
    Let $K$ be the target language and $z \in \N$ be the
    smallest number such that $L_z = K.$ 
    We consider two cases.
    
    \paragraph{Case A ($z=1$):}
    $S_n \subseteq L_1, \forall n \in \N$ 
    and since the tell-tale set $T^1$ of $L_1$ is finite and 
    the adversary presents a complete presentation of
    $K,$
    it holds that $T_n^1 \subseteq S_n$ for sufficiently
    large $n.$ 
    Thus, in the limit, it holds that $g_n=1$, thus $\supp(\generator_n) = L_1 \setminus \inparen{{S_n} \cup \{x_1,\ldots,x_n\}}$,
    and the proof is concluded by noting that $\supp(\generator_n) \subseteq K$ and $\abs{{S_n} \cup \{x_1,\ldots,x_n\}} < \infty,$
    for all sufficiently large $n.$

    \paragraph{Case B ($z>1$):}
    We now move on to the case $z > 1.$ 
    Then, for every language $L_i, 1 \leq i \leq z-1,$ that
    precedes $L_z,$ exactly one of the following holds:
    \begin{itemize}
        \item[(i)] either there exists some $x_{j_i} \in L_z$ but $x_{j_i} \notin L_i$, or 
        \item[(ii)]$L_z \subsetneq L_i.$
    \end{itemize}
    If Case \textbf{(i)} holds, then 
    there exists some $n_i \in \N$ such that $S_{n_i} ~{\not\subseteq}~ L_i.$ Thus, since there are finitely many languages
    before $z$ for which Case \textbf{(i)} holds, after
    finitely many $n \in \N$ all of them will have been
    contradicted by $S_n.$ Thus, we consider
    some $n_0 \in \N$ large enough so that for all $n \geq n_0$ every language
    $L_i, 1\leq i \leq z-1,$ for which $S_n \subseteq L_i$
    satisfies $L_z \subsetneq L_i.$
    
    Let $\cI = \inbrace{i_1,\ldots,i_\ell}$ be the set
    of the indices for which the previous holds.
    For every $j \in \cI,$ and for all $j' \in \N$
    for which the tell-tale set of $L_j$
    is a subset of $L_{j'},$ \ie{},
    $T^j \subseteq L_{j'},$
    one of the following 
    two cases hold by the definition of the weak Angluin's condition: (a) either $L_{j'}$ is not a proper subset of $L_j$ or (b) $\abs{L_{j} \setminus L_{j'}} < \infty.$
   
    Consider $j' = z$ and any $j \in \cI.$ Since,
    by construction, $L_z \subsetneq L_j,$
    the previous
    argument shows that either \textbf{(I)} $T^j \not\subseteq L_z$ or \textbf{(II)} $\abs{L_j \setminus L_z} < \infty.$
    
    If $j$ falls into
    Case \textbf{(I)} then for large
    enough $n$ it holds that $T_{n}^j \not\subseteq L_z$,  thus $T_{n}^j \not\subseteq S_n,$ and due to the way we have defined $g_n,$ $g_n \neq j.$\footnote{{Observe that if we had assumed the stronger \Cref{def:angliun-criterion} (Angluin's condition), then this step implies that we can identify $L_z$ in the limit, since only Case (I) is valid. This is exactly how the tell-tale-based algorithm of \citep{angluin1980inductive} works.}}
    Thus, we let $\cI'$ be the set of indices $j \in \N, 1 \leq j \leq z-1,$ such that $T^j \subseteq L_z$ and $L_z \subsetneq L_j$ and, hence, since we fall into Case \textbf{(II)} the previous argument implies that $\abs{L_{j}\setminus L_z} < \infty$ for each $j \in \cI'$. 
     
    We consider again two cases: if $\cI' = \emptyset,$ then for large enough $n$ it holds that $g_n = z.$ Hence, the correctness follows
     from the previous arguments. 
     
     We now handle the more complicated case $\cI' \neq \emptyset.$ 
     Let $j^*$ be the first element of $\cI'$.
    For large enough $n,$ the choice of $g_n$
    will stabilize to  $j^*$.
    To see this, notice that
    $S_n \subseteq L_{j^*}$ for all $n \in \N,$
    $T_n^{j^*} = T^{j^*}$ for sufficiently large $n$
    (since $T^{j^*}$ is finite), and since
    $T^{j^*} \subseteq L_{z}$ (and the adversary
    presents a complete presentation of $L_z$), for 
    large enough $n$ it holds that $T^{j^*}_n \subseteq S_n.$
    Thus, indeed for all sufficiently large $n$ it holds that $g_n = j^*.$ By definition of $\cI',$
    it holds that $\abs{L_{j^*} \setminus L_z} < \infty.$
    Let $x_{\ell_{j^*}}$ be the largest element of the enumeration
    of $\cX$ for which $x_{\ell_{j^*}} \in L_{j^*}$ but
    $x_{\ell_{j^*}} \notin L_z$ {(this always exists as $j^*\in \cI'$ and, hence, {$L_{z}\subsetneq L_{j^*}$ and $\abs{L_{j^*}\setminus L_z} < \infty.$})}. For $n \geq \ell_{j^*}$
    it holds that $L_{j^*} \setminus \inbrace{x_1,\ldots,x_n} \subseteq L_z$. This shows that, indeed, $\supp(\generator_n) \subseteq K$, for 
    large enough $n$, since we set $\supp(\generator_n) = L_{j^*} \setminus (S_n \cup \{x_1,\dots,x_n\})$.
    Moreover, since $L_z \subsetneq L_{j^*}$, 
    and $\abs{\inbrace{x_1,\ldots,x_n}} < \infty$, it holds that
    $\abs{L_z \setminus \inparen{L_{j^*} \setminus \inbrace{x_1,\ldots,x_n} }} < \infty,$ 
    for all $n \in \N$. Hence, the generator
    generates with approximate breadth from $K$ in the limit.
\end{proof}

\begin{remark}
    The generating algorithm that achieves approximate breadth in the limit for languages that satisfy the weak version of Angluin's condition has the property that the Membership Oracle Problem is decidable. Hence, by the results of \citep{kalavasis2025limitslanguagegenerationtradeoffs}, it cannot be stable, and, indeed, it is not since its support changes at each iteration.
\end{remark}
    
    \subsection{Extensions to Other Notions of Breadth}
        \label{sec:proofs:upperbounds:exhaustive}
        In this section, we generalize the results from the previous two sections to give algorithms that achieve exhaustive generation for countable collections satisfying weak Angluin's condition.

        We first give a function that achieves exhaustive generation.

\begin{lemma}
[Function for Exhaustive Generation]
\label{lem:upper-bound-exhaustive-breadth-function}
    Let $\cL$ be a countable collection of
    languages that satisfies \Cref{def:suffCondition}.
    Then, there exists a generating algorithm that, exhaustively generates from $\cL$ (and is consistent with the target language) 
    in the limit.
    The algorithm uses access to the following oracles:
    \begin{itemize}[itemsep=-1pt]
        \item[$\triangleright$] a membership oracle for 
    $\cL$,
        \item[$\triangleright$] a subset oracle for $\cL$ (that given indices $i,j$ outputs Yes if $L_i\subseteq L_j$ and No otherwise), 
        \item[$\triangleright$] a finite difference oracle for $\cL$ (that given 
    indices $i,j$ with $L_i \subset L_j$ outputs Yes if $|L_j \setminus L_i| < \infty$ and No otherwise).
    \end{itemize}
\end{lemma}
    The generation in the above result satisfies a property stronger than \cref{def:ExhaustiveGeneration}:
    \begin{remark}
        \label{rem:no-hallucination-exhaustive-generation}
        In addition to achieving exhaustive generation, the generator is consistent with the target language and, hence, does not have \textit{any} hallucinations.
    \end{remark}
    The generator in \cref{lem:upper-bound-exhaustive-breadth-function} is as follows.

  \begin{mdframed}
            {
                Fix the following: a special character $x_0 \notin \cX$ and a canonical enumeration $x_1,x_2,\dots$ of $\cX$.\\
                Initialize $\ell_0=0.$\\
            \noindent \textbf{for} $t\in \inbrace{1,2,\dots}$ \textbf{do:}
            \begin{enumerate}
                \item Observe element $x_t$ and let $S_t$ be the set of all elements observed so far.
                
                \item Construct a \textit{version space} $V_t$ consisting of all languages in $\cL_{\leq t}$ consistent with $S_t$, \ie{}, 
                \[
                    V_t\coloneqq \inbrace{L_j\colon 1\leq j\leq t\,,~~L_j\supseteq S_t}\,.
                \]

                \item If $V_t = \emptyset,$ \textbf{output} an arbitrary element of $\cX$ and \textbf{go} to the next iteration.
                
                \item[] \textit{$\#$ Define a language $L_i\in V_t$ to be \emph{critical} if $L_i$ is the smallest-indexed language in $V_t$ or $L_i$ is a subset of all languages preceding it in $V_t$, \ie{},  $L_i\subseteq L_j$ for all $1\leq j < i$.}
                
                \item Construct the set $C_t = \{L_{i_1^t} \supseteq L_{i_2^t} \supseteq \dots \supseteq L_{i^t_j}\}\subseteq V_t$ of critical languages for some $j \leq t$.
                \item[] \emph{$\#$ To construct the set of critical languages $C_t$ the algorithm needs access to the subset oracle.}
                 
                \item 
                Find the smallest indexed language $L = L(t)$ in $C_t$ such that $|L \setminus L_{i^t_j}| < \infty$. Create the set $C_t'$ by removing all the languages in $C_t$ before $L$.
                \item[] \emph{$\#$ To perform this filtering, the algorithm needs access to the finite difference oracle.}

                \item If $C_t' = \emptyset,$ \textbf{output} an arbitrary element of $\cX$ and \textbf{go} to the next iteration.

                \item Let $L_i = L_{i(t)}$ be the minimum indexed language in the set of filtered critical languages $C_t'$.

                \item If $i(t) \neq i(t-1)$, set $\ell_t = 0$; else $\ell_t = {\ell_{t-1}} + 1.$
                
                \item \textbf{output}
                the enumeration of $L_{i} \setminus \{x_0, \dots, x_{\ell_t}\}$ {induced by the canonical enumeration of $\cX$ fixed at the start.}
              
            \end{enumerate}
            }
        \end{mdframed} 
 
\begin{proof}[Proof of \Cref{lem:upper-bound-exhaustive-breadth-function}]
{We will show that the above function exhaustively generates and is consistent with the true language in the limit.
    Let $K$ be the target language and $z \in \N$ be the
    smallest number such that $L_z = K.$ 
We will use the case analysis of \Cref{lem:upper-bound-approximate-breadth}. 
Fix some symbol $x_0 \notin \cX.$}

\paragraph{Case A ($z=1$):}
Since $z=1$, the true language is the first critical language and is never filtered from $C_t'.$ Moreover, {the counters $\ell_t$ will never be reset (in Step 8) and, in fact, satisfy $\ell_t=t$.} 
{Hence, for each $t\in\N$, the algorithm $\generator_t$ enumerates the set $K \setminus (S_t \cup \{x_0,\dots,x_{t}\})$ induced by the canonical enumeration of $\cX$.
It follows that, for each removed $x_i$, there is some $t$ where it is the first element of the output enumeration.
Further, the output enumeration is always consistent with $K$.
Hence, the resulting generator exhaustively generates $K$.
In fact, it has the stronger property that it never hallucinates.}

\paragraph{Case B ($z>1$):}
Consider the languages before $L_z$ in the enumeration of $\cL$.
There are two cases: For any $i < z$, either there exists an element that belongs to $L_z$ but not $L_i$ or $L_z ~{\subseteq}~ L_i$. If the first case holds, then eventually the distinguishing element will appear in the enumeration of $K$ and make $L_i$ inconsistent. Hence, let us assume that for all $i < z$, we only care about {indices $i$ for which} $L_i \supsetneq L_z$.
{We claim that} eventually the index of Step 5 stabilizes {in the limit}.
{In particular, we will show that it stabilizes to the smallest index $i^*$ such that $L_{i^*}\supseteq L_z$ and $\abs{ L_{i^*}\setminus L_z}<\infty$; note that if there is no language $L_i\supsetneq L_z$, then $i^*$ must be $z$.}
{Before proving this claim, we show that it implies the result.}
Let {$1\leq i^*\leq z$ be} the {index} that Step 5 eventually stabilizes on. 
We know that $L_{{i^*}} \supseteq K$ {(by our earlier argument that any index $1\leq i \leq z$ not satisfying this property is eliminated after a finite time)} and $|L_{ {i^*}} \setminus K| < \infty$ {(by construction)}. 
We now show how to {exhaustively generate} $K$ in the limit, {this} corresponds to Steps 8 and 9 of the above function. 
{To see this, observe that as $|L_{{i^*}} \setminus K| < \infty$, after a finite number of steps $L_{{i^*}} \setminus \{x_0,\dots,x_{\ell_t}\} \subseteq K$ (and, hence, the algorithm eventually stops hallucinating).
    Further, since at step $t$ (for large enough $t$), we output the enumeration of $L_{i^*}\setminus\inbrace{x_0,\dots,x_{\ell_t}}$ induced by the canonical enumeration of $\cX$, it follows, for each removed $x_i$, there is some $t$ where it is the first element of the output enumeration.
    Hence, the resulting generator exhaustively generates $K$.
    In fact, it has the stronger property that it eventually stops making any hallucinations.
}

{\itparagraph{Proof of the claim.} It remains to prove our claim that the index of Step 5 stabilizes in the limit.}
Since $\cL$ satisfies the weak Angluin's condition, then $K$ has a finite tell-tale set $T_K$.
We condition on the following events: (A) $K$ {is a critical language}, and (B) $S_t \supset T_K$. 
{Condition (A) is satisfied for any $t\geq z$ and (B) is satisfied after a finite time since $T_K$ is finite and all its elements appear at a finite point in the enumeration of $K$.}
{Conditioned on these events the critical list $C_t$ is of the form}
\[
    L_{i_1^t} \supseteq L_{i_2^t} \supseteq \dots
    \supseteq K \supseteq L_{j_1^t} \supseteq \dots
\]
{First, observe that there are finitely many languages before $K$ in this list: this is because $K$ appears at a finite point in this list.
Next, we claim that conditioned on the above events the indices $i_1^t,i_2^t,\dots$ of the languages appearing \textit{before} $K$ in the list never change.
The proof is via induction.}
{\begin{itemize}
    \item \textit{Base Case:}\quad First, consider the first index $i_1^t$.
    It is defined as the smallest index language consistent with $S_t$.
    Moreover, due to the structure above it has the property that $L_{i_1^t}\supseteq K$ and, hence, it never becomes inconsistent with $S_{t'}$ for $t'\geq t$.
    Therefore, the index $i_1^t$ never changes in subsequent steps.
    \item \textit{Induction Step:} Next, we complete the induction argument, suppose indices $i_1^t,i_2^t,\dots,i_r^t$ never change in subsequent steps, then we claim that the index $i_{r+1}^t$ (if it exists) also never changes in subsequent steps.
    This is because $i_{r+1}^t$ is defined as the smallest indexed language that is (1) consistent with $S_t$ and (2) has the property that $L_{i_{r+1}^t}\subseteq L_{i_r^t}$.
    The former always holds for all subsequent $t'\geq t$ since $L_{i_{r+1}^t}\supseteq S_t\supseteq T_K$ and the latter holds for all subsequent $t'\geq t$ since $i_{r}^t$ never changes.
\end{itemize}
Now we are ready to prove that the index $i(t)$ selected in Step 5 stabilizes.
Recall that $i(t)$ is the smallest index satisfying that (1) $L_{i(t)}$ appears before $K$ in the critical list and (2) $\sabs{L_{i(t)}\setminus L_{i_j^t}} = \sabs{L_{i(t)}\setminus K} + \sabs{K\setminus L_{i_j^t}} < \infty$.
Observe that $\sabs{L_{i(t)}\setminus L_{i_j^t}} = \sabs{L_{i(t)}\setminus K} + \sabs{K\setminus L_{i_j^t}}$ and, by construction, $\sabs{K\setminus L_{i_j^t}} < \infty$ and, therefore, Condition (2) is equivalent to $\sabs{L_{i(t)}\setminus K}<\infty$.
Fix any $t$ satisfying Conditions A and B above and the corresponding $i(t)$.
For all subsequent $t'\geq t$, $L_{i(t)}$ continues to appear before $K$ in the critical list since we proved that all indices before $K$ in the critical list stabilize.
Further, $\sabs{L_{i(t)}\setminus K}<\infty$ since it is independent of $t'$.
Therefore, $i(t)=i(t')$ since $i(t)$ satisfies both properties that determine $i(t')$.
It follows that for $t'\geq t$, the index selected in Step 5 never changes.
}
\end{proof} 

\noindent Moreover, a small adaptation of the proof of \Cref{lem:upper-bound-approximate-breadth} gives a generator 
that generates exhaustively  (\cref{def:ExhaustiveGeneration}) in the limit {provided one has access to the tell-tale oracle from \cref{def:suffCondition}}.

\begin{lemma}
[Algorithm for Exhaustive Generation]
\label{rem:exhaustive-generation-upper-bound}
 Let $\cL$ be a countable collection of
    languages that satisfies \Cref{def:suffCondition}.
    Then, there exists a generating algorithm that, given access to a membership oracle for 
    $\cL$ and the tell-tale oracle from \cref{def:suffCondition}, exhaustively generates
    from $\cL$ in the limit.
\end{lemma}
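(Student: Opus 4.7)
The plan is to adapt the algorithm of \cref{lem:upper-bound-approximate-breadth} by replacing its single-sample output at each round with an entire enumeration, together with a counter that forces each previously unseen element of the target language to eventually appear as the first element of some $\generator_j$.

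Concretely, at round $n$ I would compute $g_n$ exactly as in \cref{lem:upper-bound-approximate-breadth}: the smallest $i\in\{1,\dots,n\}$ with $S_n\subseteq L_i$ and $T_n^i\subseteq S_n$. In parallel I maintain a counter $\ell_n$, initialized to $0$, set to $\ell_{n-1}+1$ whenever $g_n=g_{n-1}$, and reset to $0$ otherwise. Fixing a canonical enumeration $x_1,x_2,\dots$ of $\cX$, the generator $\generator_n$ outputs the enumeration of $L_{g_n}\setminus(S_n\cup\{x_1,\dots,x_{\ell_n}\})$ induced by this canonical enumeration (with the removed set treated as empty when $\ell_n=0$, analogously to the sentinel $x_0$ used in \cref{lem:upper-bound-exhaustive-breadth-function}). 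The analysis then reuses the case split of \cref{lem:upper-bound-approximate-breadth}: in both Case A ($z=1$) and Case B ($z>1$), that proof shows $g_n$ eventually stabilizes to some index $j^\star$ with $L_{j^\star}\supseteq K$ and $\abs{L_{j^\star}\setminus K}<\infty$. Letting $n_0$ be the first round after which $g_n=j^\star$ forever, we obtain $\ell_n=n-n_0$ for all $n\geq n_0$, and the finite-hallucination clause of \cref{def:ExhaustiveGeneration} is immediate since $\bigcup_{i=1}^\infty\generator_n(i)\setminus K\subseteq L_{j^\star}\setminus K$, which is finite.

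The main work is verifying coverage. Fix $x_i\in K$ and any $n\geq n_0$. If $i>\ell_n$, then $x_i\in L_{j^\star}\setminus\{x_1,\dots,x_{\ell_n}\}$, so $x_i\in S_n\cup\bigcup_{j=1}^\infty\generator_n(j)$ directly. Otherwise $i\leq\ell_n$, and I consider the earlier round $m'=n_0+i-1$, where $\ell_{m'}=i-1$, so $\generator_{m'}$ enumerates $L_{j^\star}\setminus(S_{m'}\cup\{x_1,\dots,x_{i-1}\})$ in the induced canonical order. Either $x_i\in S_{m'}\subseteq S_n$, or $x_i$ is the smallest-indexed element remaining in this set (since $x_i\in K\subseteq L_{j^\star}$), giving $\generator_{m'}(1)=x_i$ and hence $x_i\in\bigcup_{j=1}^{n-1}\generator_j(1)$. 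The main subtlety is handling the finitely many pre-stabilization rounds, during which $g_n$ may fluctuate and transient elements of $K$ could be temporarily missed; but since this transient phase is finite, one can take the $n^\star$ in \cref{def:ExhaustiveGeneration} large enough that all such early elements have either entered $S_n$ or been captured as some $\generator_j(1)$ in the stabilized regime, which completes the verification.
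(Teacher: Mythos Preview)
Your proposal is correct and follows essentially the same approach as the paper: reuse the index $g_n$ from \cref{lem:upper-bound-approximate-breadth}, add a counter $\ell_n$ that increments each round and resets whenever $g_n$ changes, and output the enumeration of $L_{g_n}$ with the first $\ell_n$ elements of the canonical enumeration of $\cX$ removed. The only cosmetic difference is that the paper outputs $L_{g_n}\setminus\{x_0,\dots,x_{\ell_n}\}$ without additionally subtracting $S_n$, whereas you also subtract $S_n$; both variants work, and your coverage verification is in fact more explicit than the paper's (your final paragraph about ``pre-stabilization rounds'' is an unnecessary worry, since your main argument already establishes coverage for all $n\geq n_0$).
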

\begin{proof}[Proof of \Cref{rem:exhaustive-generation-upper-bound}]
    The argument in the  proof of \Cref{lem:upper-bound-approximate-breadth} shows that the choice of the index $g_n$ stabilizes in the limit. 
    Moreover, $K \subseteq L_{g_n}$ and $\abs{L_{g_n} \setminus K} < \infty.$ 
    {To achieve exhaustive generation}, the only modification needed is that we keep track of another index $\ell_n$ which is initialized at 0, increases by 1 in every round, and every time the choice of $g_n$ changes, we reset $\ell_n = 0.$ 
    The enumeration we output is $L_{g_n} \setminus \inbrace{x_{0},\ldots,x_{\ell_n}},$ where we use the notational convention that $x_0$ is some special element that does not appear in $\cX.$ 
    Moreover, the sequence in which the element appears in the enumeration is the natural order induced by {(some canonical)} enumeration of $\cX.$ 
    Assume that $n$ is large enough so that $g_n$ has stabilized. It is easy to see two  things: for every element $\hat x$ of $L_{g_n}$, there exists some finite round $\hat n \in \N$ such that $\hat x$ is the first element in the enumeration
    we have outputted. 
    Moreover, since $L_z \subseteq L_{g_n}$ and $\abs{L_{g_n} \setminus L_z} < \infty,$ after some finite $n \in \N$ it holds that $L_{g_n} \setminus \inbrace{x_0,\ldots,x_{\ell_n}} \subseteq L_z.$ 
    Moreover, every time an element $x_i$ is omitted from {the enumeration we output}, there has been some prior iteration where it has been the first element in the enumeration.
    These arguments show that the modified generator is an exhaustive generator for $\cL.$
\end{proof}

\section{Further Results {with New Notions of Breadth and Stability}}\label{apx:stability}
    \label{apx:further}
    {In this section, we give results for language generation with new notions of breadth and stability.
    
    \paragraph{Outline.}
    In \cref{apx:further:infCoverage}, we introduce a notion of infinite coverage which weakens approximate breadth and show that it is achievable for all countable collections.
    In \cref{apx:further:infCoverage-stable}, we study generation with infinite coverage with stable generators: (1) we show that it cannot be achieved for all countable collections (\cref{apx:further:infCoverage-stable:impossiblity}), and (2) we give a sufficient condition to achieve it (\cref{apx:further:infCoverage-stable:sufficient}).
    In \cref{apx:further:incCoverage}, we present a strengthening of stability, which we call increasing coverage, and show that it can be achieved for certain collections. 
    }

    \begin{remark}[Characterizations for Existing Notions of Breadth with Stability]
        We present the characterizations of existing notions of breadth with stability in \cref{apx:landscape:stability}.
        In this section, we discuss characterizations for new notions of breadth and a strengthening of stability.
    \end{remark}

    \begin{remark}[Results allowing for Hallucinations]
        {We refer the reader to \cref{apx:landscape:breadth:hallucinations,apx:landscape:stability:hallucinations} for results on language generation with breadth when some amount of hallucination is allowed.}
    \end{remark}
    
    \subsection{Generation with Infinite Coverage}\label{apx:further:infCoverage}
    In this section, we provide further motivation behind \Cref{def:approxBreadth}{, generation with approximate breadth}.
    An immediate modification of the algorithm of \citep{kleinberg2024language}
        can achieve \emph{finite coverage} of the target language, for any 
        finite number. More concretely, for any function $f\colon\N \rightarrow \N$
        and any countable collection of languages $\cL$
        there exists a generating algorithm $\inparen{\generator_n}_{n \in \N}$
        such that, for any target language $K \in \cL$ and any enumeration of $K$
        the algorithm achieves in the limit
        \[
            \supp(\generator_n)\subseteq K\,,\qquad 
            \supp(\generator_n) \cap S_n = \emptyset\,,
            \qquadand
            \abs{\supp(\generator_n)} = f(n)\,,
        \]
        where $S_n$ is the set of elements enumerated until round $n.$
        In fact, their algorithm can achieve the stronger property of \emph{infinite coverage} defined below.

        \begin{definition}
        [Language Generation with Infinite Coverage in the Limit]\label{def:infinite-coverage}
            A generating algorithm $\generator=(\generator_n)$ is said to generate with infinite coverage in the limit for a language collection $\cL=\inbrace{L_1, L_2,\dots}$ if, for any $K\in \cL$ and enumeration of $K$, there is an $n^* \geq 1$, such that {for all $n \geq n^*$,} after seeing {$n$} elements of the enumeration (corresponding to the set $S_n$ in round $n)$, 
                    \begin{equation*}\supp(\generator_n)\subseteq K\,,\quad 
            \supp(\generator_n) \cap S_n = \emptyset\,,
            \quadand
            \abs{\supp(\generator_n)} = \infty\,,
            \end{equation*}
        \end{definition}
       Given the above notion of infinite coverage, a simple modification to the generating algorithm of \citep{kleinberg2024language} gives the following result.

         \begin{proposition}
    [Modification of  \citep{kleinberg2024language}]\label{lem:upper-bound-infinite-coverage}
    There is a generating algorithm with the property that for any countable collection of languages $\cL =
    \{L_1, L_2,\dots\}$, any target language $K \in \cL,$ and any enumeration of $K$, the algorithm generates with infinite coverage from $K$ in the limit.
    \label{thm:coverage}
     \end{proposition}
    Thus, the aforementioned modification of the algorithm of \citep{kleinberg2024language} has the property that it does not hallucinate (\ie{}, it does not include any elements outside of $K$ in its support) and covers infinitely  many (unseen) elements of the target language,
    but might, potentially, not cover infinitely many elements as well. 
    Thus, a natural question is whether there exists an algorithm that does not hallucinate, can cover infinitely many elements of $K,$ and also miss only \emph{finitely} many elements of it. 
    This is precisely the requirement of generation with approximate breadth (\Cref{def:approxBreadth}). 

     \begin{proof}[Proof Sketch of \Cref{thm:coverage}]
    We discuss a sketch 
       of the proof for the version of the 
       algorithm of \citep{kleinberg2024language} that uses a subset oracle for $\cL,$ \ie{}, for any $L_i, L_j \in \cL$ it can ask ``Is $L_i \subseteq L_j?$''.
       Let us first give a high-level description of their algorithm.
       For large enough $n \in \N,$ it creates a (potentially infinite) sequence of languages $\cL' = \inbrace{L_{i_1},L_{i_2}, \ldots} \subseteq \cL$ such that the following hold.
       \begin{itemize}
           \item[\textbf{(i)}] For every language $L \in \cL'$ it holds that $L$ is consistent, \ie{}, $S_n \subseteq L,$ where $S_n$ is the set of elements enumerated until round $n$,
           \item[\textbf{(ii)}] The sequence of languages in $\cL'$ satisfies the inclusion: $L_{i_1} ~{\supseteq}~ L_{i_2} ~\supseteq \ldots,$ and 
           \item[\textbf{(iii)}] $K \in \cL'.$
       \end{itemize}
       Then, it outputs an arbitrary string $x$
       such that $x \notin S_n$ and $x \in L_{i_\ell},$
       where $i_{\ell} \in \N$ is the largest number
       such that $L_{i_\ell} \in \cL'$ and $i_\ell \leq n.$ 
        The immediate modification
       is to output a distribution $\generator_n$
       such that $\supp(\generator_n) = L_{i_\ell}\setminus S_n.$
       Notice that this can
       be done in a computable way: in order to sample
       from this distribution, we first sample a
       natural number $\hat n$ (\eg{}, from a geometric distribution on $\N$), and then we check if $x_{\hat n} \in L_{i_\ell} \setminus S_n.$
 \end{proof}
    An analogous modification can be made to the algorithm of \citep{kleinberg2024language} that only has access to a membership oracle for $\cL.$ For brevity, we omit the modifications to this algorithm.
  
     \begin{remark}
    [Oracle Access for Results in \Cref{fig:characterization}] 
    {Following the phrasing of \citep{kleinberg2024language}, we provide both \emph{functions} and \emph{algorithms} that generate in the limit. An algorithm only accesses $\cL$ via a membership oracle (and potentially a tell-tale oracle). When a generator uses other types of oracles (\eg{}, subset oracle), we call it a \textit{function}. }
     \end{remark}

    \subsection{Infinite Coverage with Stable Generators}
        \label{apx:further:infCoverage-stable}
        {In this section, we continue the study of infinite coverage, exploring when it can be achieved with stable generators.}
    \subsubsection{A Collection for which No Stable Generator Has Infinite Coverage}
        \label{apx:further:infCoverage-stable:impossiblity}
    In this section, we show that there is a language collection $\cL$ for which there exists an algorithm that achieves approximate breadth in the limit, but no stable algorithm can achieve the (strictly) weaker notion of generating with infinite coverage in the limit.  
The collection $\cL$ is due to \citep{charikar2024exploringfacetslanguagegeneration},  who observed that a trivial generating algorithm that does not get \emph{any} input generates from $\cL$ exhaustively in the limit. 
Since exhaustive generation implies, by definition, generation with approximate breadth, we only need to prove the impossibility result for generation with infinite coverage by stable generators.
 
We first provide the collection and then state the result.

\begin{example}[\citep{charikar2024exploringfacetslanguagegeneration}]\label{ex:weak-Angluin-no-angluin}
     Let $\cX = \N, L_\infty = \N,$  for every $i \in \N$ let $L_i = \N \setminus \inbrace{i},$ and let $\cL = \inbrace{L_\infty, L_1,L_2,\ldots}.$ Notice that
    every pair of languages $L_i, L_j \in \cL$ differ in at most two elements, so 
    it follows that $\cL$ satisfies \cref{def:suffCondition}.
    To see that it does not satisfy Angluin's condition (\cref{def:angliun-criterion}), consider 
    the language $L_{\infty}.$ Then, for every finite subset $T \subseteq L_{\infty}$
    there is some language $L_T$ such that $T \subseteq L_T$ and
    $L_T \subsetneq L_\infty.$
\end{example}
We continue with the statement of the theorem.

\begin{theorem}\label{thm:impossibility-stable-generation}
    There exists a countable collection of languages $\cL$ that 
    satisfies the weak Angluin's condition (\Cref{def:suffCondition}), and 
    for which no stable generating algorithm can achieve generation
    with infinite coverage in the limit (\cref{def:infinite-coverage}).
\end{theorem}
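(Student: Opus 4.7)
My plan is to use the collection $\cL = \{L_\infty, L_1, L_2, \ldots\}$ from \cref{ex:weak-anlguin-no-angluin} as the witness. It is already recorded there that $\cL$ satisfies \cref{def:suffCondition}, so the only work is to show that no stable generator achieves generation with infinite coverage on $\cL$. Interestingly, the argument actually works for any collection of infinite languages, but $\cL$ is a convenient concrete choice that, combined with \cref{thm:characterization-approximate-breadth}, gives the sharpest separation: on $\cL$ an unstable generator achieves even the stronger notion of approximate breadth (hence infinite coverage), while no stable generator can achieve infinite coverage.

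The heart of the argument is a two-step collision between stability and the disjointness requirement built into infinite coverage. Suppose toward contradiction that $\generator=(\generator_n)$ is a stable generator that achieves infinite coverage on $\cL$. Fix the target $K = L_\infty = \N$ and let the adversary enumerate $K$ in the canonical order $1,2,3,\ldots$. By \cref{def:stable-generators} applied to this target and enumeration, there exist $n_0\in \N$ and a fixed set $S^\star \subseteq \cX$ such that $\supp(\generator_n) = S^\star$ for every $n \geq n_0$. By \cref{def:infinite-coverage} applied to the same target and enumeration, there exists $n_1 \geq n_0$ such that for all $n \geq n_1$,
\[
    S^\star \subseteq K = \N\,, \qquad |S^\star| = \infty\,, \qquad\text{and}\qquad S^\star \cap S_n = \emptyset\,.
\]

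Now we derive the contradiction. Since $|S^\star|=\infty$, we may pick any $x \in S^\star$; because $S^\star \subseteq \N$, the element $x$ is a natural number. The adversary's enumeration places $x$ into $S_n$ for every $n \geq x$, so choosing $n \geq \max(n_1,x)$ yields $x \in S^\star \cap S_n$, contradicting the disjointness clause above. Hence no stable generator can achieve infinite coverage from $\cL$, and since $\cL$ satisfies \cref{def:suffCondition}, the theorem follows.

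There is no real technical obstacle here; the main conceptual point is to notice that the combination of stability and the ``$\supp(\generator_n)\cap S_n=\emptyset$'' requirement forces the stabilized support to be an infinite subset of the target language that never appears in any $S_n$, which is impossible when the adversary enumerates every element of an infinite $K$. The role of the specific collection from \cref{ex:weak-anlguin-no-angluin} is merely to provide a witness that satisfies \cref{def:suffCondition}, ensuring the result is a genuine stability-versus-instability separation rather than a consequence of the absence of any generator at all.
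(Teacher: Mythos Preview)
Your proof is correct as written, but it takes a very different route from the paper and exploits a different clause of \cref{def:infinite-coverage}. You derive the contradiction purely from the disjointness requirement $\supp(\generator_n)\cap S_n=\emptyset$: once the support freezes to an infinite $S^\star$, any element of $S^\star$ will eventually appear in $S_n$, and you are done. As you observe, this argument never uses anything specific about $\cL$ and works for every collection of infinite languages.

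The paper, by contrast, ignores the disjointness clause entirely and relies only on $\supp(\generator_n)\subseteq K$ and $\abs{\supp(\generator_n)}=\infty$. Its proof is an adaptive phase construction: enumerate $L_\infty$ until the support is infinite, pick an element $\hat n$ of the support beyond what has been shown, then continue while \emph{omitting} $\hat n$; if the support never changes to a new infinite set, set $K=L_{\hat n}$ and the consistency clause is violated; if it does change, insert $\hat n$ back and repeat, witnessing infinitely many support changes and contradicting stability. This buys the paper a strictly stronger statement: the impossibility holds even under the relaxed version of infinite coverage that drops the $\supp(\generator_n)\cap S_n=\emptyset$ requirement, which the paper explicitly invokes later (see the footnote in \cref{sec:gen-increasing-breadth}). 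Your argument, while shorter, is not robust to that relaxation and thus proves a weaker separation.
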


\begin{proof}
    {Consider the collection defined in \Cref{ex:weak-Angluin-no-angluin}. Since it satisfies
    the weak Angluin's condition
    (\cref{def:weakAngluin}),} by 
   {\cref{thm:approximate-breadth}}, it follows that 
    there exists an algorithm that achieves generation with approximate breadth 
    in the limit.\footnote{As we explained, this also follows {from the work of} \citep{charikar2024exploringfacetslanguagegeneration}.} Assume
    towards contradiction that there exists a stable 
     generating algorithm $\generator = \inparen{\generator_n}_{n \in \N}$
     that achieves generation with infinite coverage in the limit.
     We will pick a target language and an enumeration of it that witnesses
     the lower bound based on the given algorithm $\generator.$
     We denote the target language by $K$ and the target enumeration by $E_K^\infty.$
     Like in the previous proofs, for any enumeration $E$, we use the notation
        $E(i)$ to denote its $i$-th element, $E(1:i)$ to denote its first $i$
        elements, and $E(i:\infty)$ to denote all but the first $i-1$ elements.

    As in the previous proofs of the impossibility results, we consider
    several phases for our construction. 
    First, we start with the enumeration $E_\N^\infty = (1,2,3,\ldots).$
    Notice that this is a valid enumeration for $L_\infty.$
    We consider two cases: \textbf{(I)} either there is some $n \in \N$ such that 
    ${\abs{\supp(\generator_{n})}} = \infty,$ or  \textbf{(II)} if there is no such $n$ the 
    lower bound follows immediately by picking $K = \N$ and the hard enumeration
    $E_K^\infty = E_\N^\infty.$ For the continuation of the proof, assume that the former 
    case holds and let $n_1$ denote the first timestep for which this holds. Notice
    that up to that point we have enumerated $(1,\ldots,n_1).$ 
    Let $\hat n_1 \in \N$ be the smallest number strictly greater than $n_1$ that
    is in the support of $\generator_{n_1}.$ Notice that such a number must exist
    because $\abs{\supp(G_{n_1})} = \infty.$ 
    
    We now extend the target enumeration
    $E_K^\infty(1:\hat n_1 - 1) = (1,2,\ldots,\hat n_1 - 1).$ Notice that this is well-defined since we only add elements to the already constructed enumeration.
    We continue building the target enumeration by skipping the element $\hat n_1$ 
    and including the element $\hat n_1 + 1$ to it, \ie{}, the $\hat n_1$-th element
    of the constructed enumeration is $\hat n_1 + 1.$
    We continue adding consecutive elements to the enumeration $E_K^\infty$ until
    the first timestep $n > \hat n_1 + 1$ such that $\supp\sinparen{\generator_n} \neq 
    \supp\sinparen{\generator_{n_1}}$ and $\abs{\supp(\generator_n)} = \infty.$
    Notice that if no such $n$ exists the lower bound already follows
    by picking the target language $K = L_{\hat n_1}$ and the constructed
    target enumeration. This is because in every timestep either $\supp\sinparen{\generator_n} = \supp\sinparen{\generator_{n_1}}$ (and therefore
    $\supp(\generator_n) \not\subseteq K$ because $\hat n_1 \in \supp(\generator_n)$) or
    $\abs{\supp(\generator_n)} < \infty,$ hence the algorithm does not achieve 
    generation with infinite coverage in the limit.
    For the continuation of the proof, 
    let $n_2$ denote the first
    timestep for which $\supp\sinparen{\generator_{n_2}} \neq 
    \supp\sinparen{\generator_{n_1}}$ and $\abs{\supp(\generator_{n_2})} = \infty.$
    We then add the element $\hat n_1$ to the constructed prefix of the enumeration $E_K^{\infty}$ and terminate the first phase.

    Notice that at the end of the first phase we have enumerated all the elements
    $\inbrace{1,2,\ldots,n_2-1}$ and the support of the generating algorithm
    has changed at least once or we have the desired lower bound. We continue inductively in exactly the same way 
    until \textbf{(I)} either some phase cannot be terminated in which case the lower
    bound follows because the property of infinite coverage in the limit is not
    achieved or \textbf{(II)} we construct infinitely many phases which 
    witness infinitely many changes in the support of the generating algorithm, hence
    showing it cannot be stable. This concludes the proof.   
\end{proof}

    \subsubsection{Sufficient Condition for Stable Generation with Infinite Coverage}
        \label{apx:further:infCoverage-stable:sufficient}
        
    In this section, we provide a sufficient condition on the language collection $\cL$ that guarantees the existence of a stable generating algorithm that generates with infinite coverage in the limit. 
    In particular,  we can show that if a collection has finite closure dimension \citep{li2024generationlenslearningtheory},  then there exists a stable generating algorithm that achieves infinite coverage in the limit. 
    First, we give the definition of the closure dimension \citep{li2024generationlenslearningtheory}, which is inspired by a result of \citep{kleinberg2024language} on \emph{uniform generation}\footnote{The exact definition of uniform generation is not important for our work. 
    At a high level, this condition asks whether there exists some $d \in \N$ such that after the generator observes $d$ different strings from \emph{any} target language of $\cL$, then  it can generate unseen strings that belong to $K$.} from finite sets of languages.

\begin{definition}[Closure Dimension \citep{li2024generationlenslearningtheory}]\label{def:closure-dim}
    The closure dimension of $\cL,$ denoted by $\mathrm{d}(\cL),$ is the largest natural
    number $\ell \in \N$ for which there exist distinct $x_1,\ldots,x_\ell \in \cX$ such that 
    \[
        {V\sinparen{x_1,\ldots,x_{\ell}}}\coloneqq \inbrace{L \in \cL\colon \inbrace{x_1,\ldots,x_\ell} \subseteq L} \neq \emptyset \quadand 
        \abs{\bigcap_{L \in {{V\sinparen{x_1,\ldots,x_{\ell}}}}} L} < \infty \,.
    \]
    If for every $\ell \in \N$ there exists a set of distinct elements
    that satisfies this condition
    we say that $\mathrm{d}(\cL) = \infty.$
\end{definition}
\noindent {In general the closure dimension can be $\infty$, but due to a result of \citep{kleinberg2024language}, we know that all collections of languages with finitely many languages have finite closure dimension.} 
In order to design an algorithm that achieves stable infinite coverage
for any collection $\cL$ that has a finite closure dimension, we will make use
of a stronger oracle for $\cL$ than just the membership oracle to it.
Namely, we define the \emph{version space intersection} (VSI) 
membership oracle as follows. 
\begin{definition}[Membership Oracle to Version Space Intersection (VSI)]
    \label{def:membership-vsi}
    The membership oracle to VSI is a primitive that, given a set of distinct elements $x_1,\ldots,x_n\in \cX$ and a target element $x\in \cX$, returns
    \[
        \ind\inbrace{
            x \in \cap_{L \in {V\inparen{x_1,\ldots,x_n}}} L
        }
        \,.
    \]
\end{definition}
\noindent We remark that for finite collections $\cL$ this oracle can be computed 
just with membership oracle to $\cL,$ but for countable
collections this oracle might not be computable. 

\begin{proposition}[Adaptation of Lemma 3.2 in  \citep{li2024generationlenslearningtheory}]\label{thm:finite-closure-implies-stable-inf-coverage}
    Let $\cL$ be a collection of languages with $\mathrm{d}(\cL) < \infty$ (\cref{def:closure-dim}). There exists a stable (\cref{def:stable-generators}) generating algorithm $\generator=\inparen{\generator_n}$ for $\cL$ that, given the value of $\d(\cL)$, achieves infinite coverage (\cref{def:infinite-coverage}) using access to a VSI
    membership oracle for $\cL$, after taking as input $\mathrm{d}(\cL)+1$
    distinct {elements}.
\end{proposition}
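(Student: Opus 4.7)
The plan is to exploit the finiteness of $d = \mathrm{d}(\cL)$ directly. By the definition of closure dimension, once we have observed $d+1$ distinct strings $\tilde x_1,\dots,\tilde x_{d+1}$ from the target $K$, the intersection $C \coloneqq \bigcap_{L \in V\sinparen{\tilde x_1,\dots,\tilde x_{d+1}}} L$ must be infinite (because $d$ is the largest integer for which such an intersection can be finite over a nonempty version space), and $K$ itself lies in $V\sinparen{\tilde x_1,\dots,\tilde x_{d+1}}$, so $C \subseteq K$. Thus $C$ is a fixed, infinite subset of $K$ depending only on the first $d+1$ distinct strings observed, and it will play the role of the generator's frozen support.

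The algorithm is then as follows. Let $n^\star$ be the first round at which $d+1$ distinct strings have appeared in the enumeration; this is finite because $K$ is infinite. For $n < n^\star$, let $\generator_n$ be an arbitrary distribution (this prefix does not affect the limiting behavior). For $n \geq n^\star$, freeze the choice and output a distribution with support $T \coloneqq C \setminus \inbrace{\tilde x_1,\dots,\tilde x_{d+1}}$. To sample concretely, fix a canonical enumeration $y_1,y_2,\dots$ of $\cX$, draw $j$ from a geometric distribution on $\N$, call the VSI membership oracle (\cref{def:membership-vsi}) on the query $(\tilde x_1,\dots,\tilde x_{d+1};\, y_j)$ to test whether $y_j \in C$, reject if $y_j \notin T$, and return the first accepted element. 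Because $|C| = \infty$, the rejection sampler terminates almost surely and realizes the distribution with support exactly $T$.

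The verification is then essentially immediate. Stability holds because $\supp(\generator_n) = T$ for every $n \geq n^\star$, a fixed set, and $n^\star$ is finite. For infinite coverage: (i) $T \subseteq C \subseteq K$ gives consistency; (ii) $|T| = \infty$ since $|C| = \infty$ and we removed only $d+1$ elements; and (iii) at step $n^\star$ we have $\supp(\generator_{n^\star}) \cap S_{n^\star} = \emptyset$ by construction, since $S_{n^\star} = \inbrace{\tilde x_1,\dots,\tilde x_{d+1}}$ was subtracted from $C$.

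The main delicate point is that for $n > n^\star$ the enumeration $S_n$ keeps growing inside $K$ and may hit elements of the frozen set $T$, which is in tension with the literal $\supp(\generator_n) \cap S_n = \emptyset$ clause of \cref{def:infinite-coverage}. In the stable regime, the intended reading is that a fixed infinite $T \subseteq K$ serves as the coverage certificate and the finite overlap $T \cap S_n$ at each time is harmless; if the literal disjointness is demanded, one can wrap the sampler in a thin rejection step that discards any draw landing in $S_n$, which works because $|T \setminus S_n| = \infty$ for all $n$, while keeping the intended stable support equal to $T$. Beyond this definitional subtlety, the only nontrivial ingredient is the availability of the VSI oracle from \cref{def:membership-vsi}, which is exactly what makes the membership test for $C$ computable and which would not, in general, be implementable from bare membership access to $\cL$.
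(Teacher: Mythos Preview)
Your proposal is correct and follows essentially the same approach as the paper: wait for $d+1$ distinct elements, freeze the support to the version-space intersection $C=\bigcap_{L\in V(\tilde x_1,\dots,\tilde x_{d+1})}L$ (infinite by the definition of closure dimension and contained in $K$), and sample from it via rejection sampling against the VSI oracle. You are in fact slightly more careful than the paper, which sets $\supp(\generator_n)=C$ without addressing the $\supp(\generator_n)\cap S_n=\emptyset$ clause at all; the paper elsewhere (footnote in \cref{sec:gen-increasing-breadth}) acknowledges using the equivalent relaxed definition that permits overlap with $S_n$, which is exactly the reading you identify.
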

In particular, since the closure dimension of any finite collection of languages is finite \citep{kleinberg2024language}, for any finite collection of languages, there exists a stable generating algorithm that achieves infinite coverage. It is not hard to see that for such collections, the VSI oracle can be implemented using only membership oracle to languages in $\cL.$
\begin{corollary}[Stable Generation for Finite Collections]\label{cor:stability-finite}
    For every finite collection of languages $\cL$, the following hold:
    \begin{enumerate}
        \item There exists a stable generating algorithm that achieves generation
        with exact breadth in the limit, using only membership oracle access to 
        $\cL.$

        \item There exists a stable generating algorithm that achieves generation
        with infinite coverage after taking as input $\mathrm{d}(\cL)+1$
        distinct strings, using only membership oracle access to 
        $\cL.$
    \end{enumerate}
\end{corollary}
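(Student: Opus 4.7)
Both statements specialize earlier results to the finite setting; the only new content is to verify that the auxiliary oracles invoked there (the tell-tale oracle for part~(1), the VSI oracle for part~(2)) can be simulated using only the membership oracle whenever $\cL$ is finite. Part~(1) additionally requires specifying a stable sampling scheme for $K$ after identification.

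\textbf{Part (1).} I would first check that any finite $\cL=\{L_1,\dots,L_m\}$ satisfies Angluin's condition (\cref{def:angliun-criterion}): for each $i$ define $T_i\subseteq L_i$ by picking, for every $j$ with $L_j\subsetneq L_i$, one element $x_{i,j}\in L_i\setminus L_j$, giving at most $m-1$ elements. I would then implement the tell-tale enumerator for $T_i$ using only membership queries via a dovetailed search over pairs $(j,k)$: for each $j\neq i$, scan the canonical enumeration $x_1,x_2,\dots$ of $\cX$, emit the first $x_k$ with $x_k\in L_i\setminus L_j$, and stop advancing on that $j$. For every $j$ with $L_i\not\subseteq L_j$ (in particular every proper sub-language of $L_i$) such an $x_k$ exists and is eventually emitted, while for $j$ with $L_i\subseteq L_j$ the scan contributes nothing; padding the output by repeating emitted elements yields an enumeration of a finite subset of $L_i$ that is contained in no proper sub-language of $L_i$ from $\cL$. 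Feeding this into Angluin's identifier (\cref{thm:angluin-id-limit}(1)) gives a membership-oracle-only identifier which stabilizes on the correct index $i^\star$ (with $L_{i^\star}=K$) at some finite round $n^\star$. From $n^\star$ onward I would have $\generator_n$ draw $k$ from a geometric distribution on $\N$ and return $x_k$ iff the membership oracle reports $x_k\in L_{i^\star}$; since $|K|=\infty$ this rejection sampler halts almost surely and has support exactly $K$. Hence $\supp(\generator_n)=K$ is constant for $n\geq n^\star$, so $\supp(\generator_n)\cup S_n=K$, giving both exact breadth and stability.

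\textbf{Part (2) and main obstacle.} For part~(2) I would first verify $\mathrm{d}(\cL)<\infty$: any version space $V(x_1,\dots,x_\ell)$ is a subfamily of the finite $\cL$, so only finitely many intersections $\bigcap_{L\in V}L$ arise, and $\{x_1,\dots,x_\ell\}\subseteq\bigcap_{L\in V}L$ forces $\ell\leq\max_V\sabs{\bigcap_{L\in V}L}<\infty$, the max taken over $V\subseteq\cL$ with finite intersection. With $\mathrm{d}(\cL)<\infty$, \cref{thm:finite-closure-implies-stable-inf-coverage} yields a stable generator achieving infinite coverage after $\mathrm{d}(\cL)+1$ distinct inputs, conditional on the VSI oracle. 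For finite $\cL$ the VSI oracle is immediate from membership queries: on $(x_1,\dots,x_n;x)$, loop over $1\leq i\leq m$, use membership queries to form $V(x_1,\dots,x_n)=\{L_i:x_1,\dots,x_n\in L_i\}$, and return $\bigwedge_{L_i\in V(x_1,\dots,x_n)}\ind\{x\in L_i\}$. Plugging this simulation into the algorithm of \cref{thm:finite-closure-implies-stable-inf-coverage} finishes part~(2). The only subtle point anywhere is the tell-tale simulation in part~(1): with only a membership oracle the predicate $L_j\subsetneq L_i$ is not decidable in finite time, which is why the distinguishing-element search must be dovetailed across $j$ rather than run sequentially. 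I foresee no other obstacles.
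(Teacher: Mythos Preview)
Your proposal is correct. Part~(2) matches the paper's argument exactly: finite closure dimension (the paper cites \cite{kleinberg2024language} for this, you give a direct counting argument), simulate the VSI oracle by looping over the finitely many languages, and invoke \cref{thm:finite-closure-implies-stable-inf-coverage}.

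For Part~(1) you take a different route. The paper simply cites Proposition~3.9 of \cite{kalavasis2024limitslanguagegenerationtradeoffs}, which directly gives a membership-oracle identifier for finite collections, and then attaches the rejection sampler. You instead go through Angluin's general framework: verify \cref{def:angliun-criterion} for finite $\cL$ by constructing distinguishing-element tell-tales, simulate the tell-tale oracle via a dovetailed search over $(j,k)$ pairs, feed this into \cref{thm:angluin-id-limit}(1), and then sample. Your observation that the dovetailing is forced---because the predicate $L_j\subsetneq L_i$ is not finitely decidable from membership queries alone---is exactly the right diagnosis, and your padding handles the finiteness of the emitted set. The upshot is a self-contained argument that does not depend on the black-box identifier from \cite{kalavasis2024limitslanguagegenerationtradeoffs}, at the cost of more moving parts; the paper's route is shorter because that identifier has already been packaged elsewhere. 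Both approaches end in the same place: a stable index $i^\star$ with $L_{i^\star}=K$, followed by rejection sampling over $L_{i^\star}$, giving constant support $K$ from some finite time onward.
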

Moreover, {for finite collections, a stronger property is possible:} the results of \citep{kalavasis2025limitslanguagegenerationtradeoffs} {(see Proposition 3.9 in their work)} show that for finite collections there exists a stable generating algorithm that achieves exact breadth in the limit (and, hence, also infinite coverage), but there might not be {an upper} bound on the {elements} needed to achieve this property.\footnote{To be precise, Proposition 3.9 in \citep{kalavasis2025limitslanguagegenerationtradeoffs} gives an algorithm to identify finite collections in the limit. This algorithm immediately gives an algorithm for generation with exact breadth: once we know an index $z$ such that $K=L_z$, we can sample a natural number (from, \eg{}, an exponential distribution on $\N$) and output the $i$-th element of $L_z$. The latter, in turn, can be found using the membership oracle to $L_z$.}

\smallskip 

Finally, we prove \cref{thm:finite-closure-implies-stable-inf-coverage}.
\begin{proof}[Proof of \cref{thm:finite-closure-implies-stable-inf-coverage}]
    {Our proof is inspired by the Lemma 3.2 from \citep{li2024generationlenslearningtheory}.} 
    {The only modification
    is that now the algorithm stops using
    new elements beyond the $\mathrm{d}(\cL)+1$ 
    elements required to achieve infinite coverage.
    Moreover, we discuss the type of access
    to $\cL$ needed that is sufficient to achieve this property, which
    was not the focus of \cite{li2024generationlenslearningtheory}.
    }
    Let $K \in \cL$ be any target language and $x_1,\ldots,x_{\mathrm{d}(\cL)+1} \in K$
    be any $\mathrm{d}(\cL)+1$ distinct elements of the target language.
    First, notice that since $x_1,\ldots,x_{\mathrm{d}(\cL)+1} \in K$,
    $V\sinparen{x_1,\ldots,x_{\mathrm{d}(\cL)+1}} \neq \emptyset,$ as $K \in V\sinparen{x_1,\ldots,x_{\mathrm{d}(\cL)+1}}.$
    By the definition of the closure dimension (\cref{def:closure-dim}) {and since $\abs{K}=\infty$ (recall that language generation is not meaningful with finite languages and, hence, throughout this work, we consider all languages are infinite)},
    \[
        \abs{\bigcap_{L \in V\sinparen{x_1,\ldots,x_{\mathrm{d}(\cL)+1}}} L } = \infty \qquadand \bigcap_{L \in V\sinparen{x_1,\ldots,x_{\mathrm{d}(\cL)+1}}} L \subseteq K \,.
    \]
    Thus, the generating algorithm can stabilize its support to be
    {$T\coloneqq \bigcap_{L \in V\sinparen{x_1,\ldots,x_{\mathrm{d}(\cL)+1}}} L$} and never
    change it from this point on during the interaction with the adversary.
    Notice that given access to a VSI membership oracle for 
    $\cL$ the learner can indeed sample from a distribution supported on {$T$} as follows: first
    sample a natural number $\hat n$ {(\eg{}, from a geometric distribution on $\N$)} and then query the VSI membership oracle with 
    the set of elements $x_1,\ldots,x_{\mathrm{d}(\cL)+1}$ and the target
    element $x_{\hat n}.$\footnote{To be formal, we need to use a different enumeration of the strings of $\cX$ and the strings that define the target version space. We overload the notation {for simplicity}.} Repeat the process until the oracle {returns Yes}.
    Notice that this process terminates with probability 1, and the support of
    the induced distribution is exactly {$T$}.
\end{proof}

\noindent As a final note on our discussion on stability,
it is worth pointing out that there are collections
that do not satisfy the weak Angluin's condition, nevertheless
there is a stable generating algorithm
that achieves infinite coverage after observing one
example from the target language. The example
is due to \cite{charikar2024exploringfacetslanguagegeneration}.

\begin{example}[Stable Infinite Coverage $\centernot\implies$ Weak Angluin's Condition]\label{ex:ar-progression}
    Define the domain $\cX$ and the language collection $\cL$ as follows 
    \[
        \cX = \Z 
        \qquadand
        \cL = \inbrace{L_{\infty} \coloneqq \Z, L_{a} \coloneqq \inbrace{a+i, i \in \N}\colon a \in \Z}\,,
    \]
    where $\Z$ is the set of integer numbers. 
    Notice that both $\cX$ and $\cL$ are countable, and each $L \in \cL$ is also countable.
    Consider the language $L_\infty$
    and any finite $T \subseteq L_{\infty}.$ Let $i_T$ be the smallest
    element of the subset $T.$ Then, $T \subseteq L_{i_T}, L_{i_T} \subsetneq L_\infty,$ and $\abs{L_\infty \setminus L_{i_T}} = \infty.$ Hence, this
    collection does not satisfy the weak Angluin's condition. 
    Consider the generating algorithm $\generator$ which in every round
    $n$ outputs a distribution with $\supp\inparen{\generator_n} = \N \setminus S_1,$ where $S_1$ is the input in round 1.
    It is not hard to see that for any target
    language $K$, this generating algorithm achieves infinite coverage, and
    is, by definition, stable.
\end{example}

\subsection{Generation with Increasing Coverage: A Strengthening of Stability}
    \label{apx:further:incCoverage}
    {In this section, we introduce new property of generation -- increasing coverage, which is a strengthening of stable generation.}
    
    A key observation in \cite{kleinberg2024language} is that their generator's support can decrease when it sees new strings from the target $K$ and, in fact, for many language collections the number of valid strings omitted from its support can grow without bound, which is an extreme form of \textit{mode collapse}.
    In this light, one can view stability as a property that avoids such extreme mode collapse: any stable generator can only change its support finitely many times.
    A natural question is whether we can achieve something stronger than stability and, yet, more tractable than breadth.
    To capture this phenomenon, we introduce the following notion of \emph{generation with strictly increasing coverage}.

\begin{definition}[Generation with Strictly Increasing Coverage]\label{def:strictly-increasing-coverage}
    Let $\cL$ be a countable collection of languages. 
    A generating algorithm $\generator = \inparen{\generator_n}$ is said to have strictly increasing coverage for $\cL$ in the limit if, for any $K \in \cL$ and enumeration of $K$, there is an $n^* \geq 1$ such that {for all $n \geq n^*$,} after seeing {$n$} elements of the enumeration, the following hold
    \begin{itemize}
        \item $\supp\inparen{\generator_n} \subseteq \supp\inparen{\generator_{n+1}},$ and

        \item either $\supp\inparen{\generator_{n}}  = K$ or
        there exists some $n' > n$ such that $\supp\inparen{\generator_n} \subsetneq \supp\inparen{\generator_{n'}}.$
    \end{itemize}
\end{definition}
    Intuitively, if a generator satisfies this property of strictly increasing coverage, then, at a high level, one may gather that it learns something new about the target language each time it sees a new string from it.

    To gain intuition about when increasing coverage is achievable, let us consider two extremes.
    On the one hand, it is not hard to see that achieving approximate breadth along with strictly increasing coverage is significantly harder than achieving approximate breadth along:
    This is because if a generator has approximate breadth, then after seeing sufficiently many strings from $K$, its support only misses a finite number of strings from $K$ and, then, if it further has strictly increasing coverage, its support eventually becomes equal to $K$ implying exact breadth which is only achievable for collections satisfying Angluin's condition ({\cref{thm:exact-breadth}}).
    On the other hand, if one is not required to have infinite coverage\footnote{For the subsequent discussion, we use the equivalent version of the definition of infinite coverage (\cref{def:infinite-coverage}) which allows
    the support of the generator to contain strings from the
    set $S_n$, which is the set of all strings enumerated so far.} (a requirement already weaker than any notion of breadth), then it is easy to achieve strictly increasing coverage: 
        consider the generator $\generator$ in \cref{lem:upper-bound-infinite-coverage}, which achieves infinite coverage for any collection $\cL$, and post-process the algorithm to have a support of size at most $t$ on round $t.$
        Since eventually $\generator$'s support has infinitely many elements (as it achieves infinite coverage), it follows that the support of the above post-processed variant increases infinitely many times, implying that the post-processed variant achieves strictly increasing coverage.

    Thus, the most interesting question is whether there is a generator that achieves infinite coverage -- a property between breadth and consistent generation --  while also having strictly increasing coverage.
    Our next result shows that there are collections for which this is indeed possible. 
    The collection we use to show this result does not satisfy the weak Angluin's condition, so one cannot achieve even the weakest notion of breadth (namely, approximate breadth or equivalently exhaustive generation) for this collection.

\begin{proposition}
    There exists a countable collection of languages $\cL$ that does not satisfy the weak Angluin's condition (\cref{def:weakAngluin}) and for which there exists a generating algorithm $\generator = \inparen{\generator_n}$ that can achieve infinite coverage (\cref{def:infinite-coverage}) and has strictly increasing coverage in the limit (\cref{def:strictly-increasing-coverage}).
\end{proposition}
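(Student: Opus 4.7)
The plan is to reuse the collection from \cref{ex:ar-progression}, namely $\cX = \Z$ and $\cL = \{L_\infty, L_a : a \in \Z\}$ where $L_\infty = \Z$ and $L_a = \{a + i : i \in \N\}$, which the example already shows does not satisfy the weak Angluin's condition. It then remains to design a generator that achieves \emph{both} infinite coverage and strictly increasing coverage for this collection.

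The key structural feature is that the languages in $\cL$ are linearly ordered by inclusion: $L_{a+1} \subsetneq L_a$ for all $a \in \Z$, and $L_a \subsetneq L_\infty$ for every $a$. The proposed generator maintains, at each round $n$, the minimum element $s_n$ of the seen set $S_n$, and outputs a distribution whose support is exactly $L_{s_n}$ (which can be sampled from in a computable way using a geometric draw over a canonical enumeration of $L_{s_n}$). Since every valid target $K$ satisfies $s_n \in K$, and $L_{s_n}$ is the smallest language in $\cL$ containing $s_n$, we have $L_{s_n} \subseteq K$ and $|L_{s_n}| = \infty$. Using the equivalent version of infinite coverage that permits overlap with $S_n$ (see footnote~10), this yields infinite coverage in the limit.

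For strictly increasing coverage, observe that $s_n$ is non-increasing in $n$, so $L_{s_n} \subseteq L_{s_{n+1}}$ for every $n$, establishing the nested support condition immediately (with $n^\star = 1$). For the dichotomy of \cref{def:strictly-increasing-coverage}, split on the target. If $K = L_a$ for some $a \in \Z$, then the adversary must eventually enumerate $a$ itself; from that round onward $s_n = a$ and so $\supp(\generator_n) = L_a = K$, giving the first disjunct. If $K = L_\infty = \Z$, then the enumeration must contain arbitrarily negative integers, so $s_n \to -\infty$; hence for every $n$ there exists some $n' > n$ with $s_{n'} < s_n$, whence $L_{s_n} \subsetneq L_{s_{n'}}$, giving the second disjunct.

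I do not expect a serious obstacle here: once the right collection and generator are identified, the argument is a direct check of definitions. The only minor subtlety is that the nested condition must hold for \emph{all} large $n$, which the monotonicity of $s_n$ delivers for free, and that the periods between strict support updates (when the adversary enumerates elements no smaller than the current minimum) are harmless because \cref{def:strictly-increasing-coverage} only requires the existence of \emph{some} later $n'$ at which strict growth occurs. Conceptually, the very same feature of $\cL$ that defeats the weak Angluin's condition (no finite tell-tale for $L_\infty$ can rule out the rays $L_a$ with $a \to -\infty$) is what allows the generator to keep ``refining'' its candidate ray as smaller elements arrive, giving strictly increasing coverage.
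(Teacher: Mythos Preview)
Your proposal is correct and takes essentially the same approach as the paper: both use the collection from \cref{ex:ar-progression} and the generator whose support at round $n$ is $L_{s_n}$ where $s_n = \min S_n$. The paper's proof is terser (it just declares the conclusion ``immediate''), whereas you spell out the monotonicity of $s_n$, the case split on $K = L_a$ versus $K = L_\infty$, and the use of the relaxed infinite-coverage definition from the footnote, all of which are correct.
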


\begin{proof}
    Consider the collection of arithmetic progressions used in 
    \cref{ex:ar-progression}. As we discussed, this collection does 
    not satisfy the weak Angluin's condition. 
    Let $S_n$ be the set of elements enumerated up to round $n$
    and let $\hat t_n$ denote the smallest element of $S_n.$
    Then, it is immediate that the generating algorithm that outputs a distribution supported on $\inbrace{\hat t_n, \hat t_n +1, \ldots}$ achieves infinite coverage and has strictly increasing coverage in the limit.
\end{proof}
\noindent {We remark} that the generating strategy 
in the above result uses information about the
structure of $\cL,$ and not just membership access to it.

\negspacePre{}\negparaspace{}
\section{Concluding Remarks}\label{sec:conclusion}
\negspacePost{}
\negspacePost{} 
{In this work, we continue the study of language generation, a nascent area introduced by \citet{kleinberg2024language}. On a conceptual level, our results -- building on prior work -- offer a resolution to the main open question of Kleinberg and Mullainathan showing that, indeed, a tension between validity and breadth is inherent in language generation, at least under all the formal notions of breadth considered in prior work \citep{kalavasis2025limitslanguagegenerationtradeoffs,charikar2024exploringfacetslanguagegeneration}.
On a technical level, we introduce  novel diagonalization-based lower bound techniques and new algorithms that achieve generation with breadth whenever possible.  Though we focus on the prompt-less setting, our techniques extend to the prompted generation setting as well \citep{kleinberg2024language}.
Our work suggests several promising directions for future work: investigating weaker notions of breadth, completing the characterizations for certain novel variants of stable generation (\cref{apx:landscape:stability}), and identifying what additional information beyond positive examples could help generators achieve both validity and breadth -- an intriguing challenge given our impossibility results.}

    \section*{Acknowledgments}
        We thank Moses Charikar and Chirag Pabbaraju, the authors of \citep{charikar2024exploringfacetslanguagegeneration}, for coordinating the arXiv submissions of their updated work \citep{charikar2024exploringfacetslanguagegenerationV2} and this work.
        We thank Jon Kleinberg for a discussion regarding the representation of the generators.
        We thank Manolis Zampetakis for feedback on a draft of this paper.
        Alkis Kalavasis was supported by the Institute for Foundations of Data Science at Yale.
        Grigoris Velegkas was supported by the AI Institute for Learning-Enabled Optimization at Scale (TILOS).
        
\newpage
 
\appendix
\printbibliography
\newpage

\section{Additional Remarks and Discussion}
    {In this section, we present additional remarks and discussions.}

    \subsection{Overview of Kleinberg and Mullainathan's Algorithm}
    \label{apx:km-algo}
         In this section, we give a high-level description of the algorithm of \citet{kleinberg2024language}. 
         Consider some fixed language collection $\cL = \inbrace{L_1, L_2, \ldots}$. 
         Now consider any enumeration the 
         adversary gives as input to the generator.
        In every round $n \in \N,$ the generation algorithm of \citet{kleinberg2024language} creates a (potentially infinite) sequence of languages $\cL' = \inbrace{L_{i_1},L_{i_2}, \ldots} \subseteq \cL$ such that the following holds:
       \begin{itemize}
           \item[\textbf{(i)}] For every language $L \in \cL'$ it holds that $L$ is consistent, \ie{}, $S_n \subseteq L,$ where $S_n$ is the set of elements enumerated until round $n$,
           \item[\textbf{(ii)}] For every language
           $L_{i_j} \in \cL'$ it holds that $L_{i_j} \subseteq L_{i_{j'}}, \forall j' \leq j.$
       \end{itemize}
       Then, it outputs an arbitrary string $x$
       such that $x \notin S_n$ and $x \in L_{i_\ell},$
       where $i_{\ell} \in \N$ is the largest number
       such that $L_{i_\ell} \in \cL'$ and $i_\ell \leq n.$ 
       The main ingredient of the proof is that for all 
       $n$ sufficiently large the target language $K$ will be part
       of $\cL'.$ Moreover, languages that come after it 
       are subsets of $K.$ Thus, it is safe to be generating elements
       from these languages.

    \subsection{{On Uniqueness of Unambiguous Generation and Exhaustive Generation}}
        {In this section, we show that unambiguous generation satisfies the uniqueness criterion and that exhaustive generation satisfies the finite non-uniqueness criterion.}

        \paragraph{Unambiguous Generation Satisfies Uniqueness.}
        To see {the former}, consider any  distinct languages $L\neq L'$.
        Suppose a generator $\generator$ unambiguously generates from $L$.
        This implies that 
        \[
            \abs{
                \supp(\generator) \triangle L
            } < \min_{L''\in \cL,~ L'' \neq L} \abs{
                \supp(\generator) \triangle L''
            }\,.
        \]
        However, setting $L''=L'$ implies that $\abs{
                \supp(\generator) \triangle L
            } < 
            \abs{
                \supp(\generator) \triangle L'
            }$ which shows that $\generator$ does not unambiguously generate from $L'$.
        This proves the following result.
        \begin{observation}
            \label{obs:unambiguous-unique}
            Unambiguous generation (\cref{def:unambiguous}) satisfies the uniqueness criterion. %
        \end{observation}

        \paragraph{Exhaustive Generation Satisfies Finite Non-Uniqueness.}
        Recall that in the formulation of exhaustive generation, the generating algorithm is a sequence of mappings from sequences of the domain to \emph{enumerations} of the domain. 
        Let $\generator(1:\infty)$ be the set containing all the items $\generator$ enumerates.
        To see the claim, consider any pair of languages $L$ and $L'$ that differ in infinitely many elements, \ie{}, $\abs{L\triangle L'}=\infty$.
            Now, if a generator $\generator$ generates exhaustively generates both $L$ and $L'$, then, by definition
            \[
                \abs{
                    L \setminus \generator(1:\infty)
                }\,,\quad
                \abs{
                    L' \setminus \generator(1:\infty)
                }\,,\quad
                \abs{
                    \generator(1:\infty)\setminus L
                }\,,\quad
                \abs{
                    \generator(1:\infty)\setminus L'
                }
                ~~<~~
                \infty\,.
                \yesnum\label{eq:finiteNonUniqueness:EG}
            \]
            This contradicts the fact that $\abs{L\triangle L'}=\infty$ since 
            \begin{align*}
                \abs{L \triangle L'} 
                    &~~=~~ 
                    \abs{L\setminus L'} + 
                    \abs{L'\setminus L}\\
                    &~~\leq~~
                        \inparen{
                            \abs{(1:\infty)\triangle L'} + 
                            \abs{L\setminus \generator(1:\infty)}
                        } + 
                        \inparen{
                            \abs{\generator(1:\infty) \triangle L} + 
                            \abs{L'\setminus \generator(1:\infty)}
                        }\\
                    &~~\leq~~3 \cdot  \inparen{
                        \abs{L\setminus \generator(1:\infty)}
                        + \abs{\generator(1:\infty)\setminus L}
                        + \abs{L'\setminus \generator(1:\infty)}
                        + \abs{\generator(1:\infty)\setminus L'}
                    }\\
                    &~~\Stackrel{\eqref{eq:finiteNonUniqueness:EG}}{<}~~ \infty\,.
            \end{align*}
        \begin{observation}
            \label{obs:exhautive-generation-finite-non-unique}
            Exhaustive generation (\cref{def:ExhaustiveGeneration}) satisfies the finite non-uniqueness criterion.
        \end{observation}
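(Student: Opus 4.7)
The plan is to prove the biconditional in \cref{def:finiteNonUniqueness} for exhaustive generation in two directions. The ``only if'' direction is essentially established in the paragraph preceding the observation: if a single $\generator$ exhaustively generates both $L$ and $L'$ in the limit, then by inspecting \cref{def:ExhaustiveGeneration} one gets $|\generator(1:\infty) \triangle L| < \infty$ and $|\generator(1:\infty) \triangle L'| < \infty$, and the triangle inequality for symmetric difference yields $|L \triangle L'| < \infty$. I would tighten this by making the bookkeeping explicit: the auxiliary set $F_n := S_n \cup \bigcup_{j < n} \generator_j(1)$ in \cref{def:ExhaustiveGeneration} is always finite (at most $2n - 1$ elements), so the coverage condition $L \subseteq F_n \cup \bigcup_i \generator_n(i)$ directly yields $|L \setminus \bigcup_i \generator_n(i)| \leq |F_n| < \infty$, which combined with the finite-hallucination condition gives the needed bound on $|L \triangle \bigcup_i \generator_n(i)|$ for sufficiently large $n$. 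The same reasoning applied to $L'$ closes the argument.

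For the converse ``if'' direction, I would exhibit a trivial generator witnessing the claim. Assume $|L \triangle L'| < \infty$. Fix any enumeration $e$ of $L$, and let $\generator$ be the generator that at every round outputs precisely the enumeration $e$, ignoring its input. For target $K = L$, the output is exactly $L$, so both conditions of \cref{def:ExhaustiveGeneration} hold trivially. For target $K = L'$, hallucinations amount to $L \setminus L' \subseteq L \triangle L'$, which is finite; coverage $\bigcup_i \generator_n(i) \cup F_n \supseteq L'$ holds once $F_n$ contains the finite set $L' \setminus L$, which happens for all sufficiently large $n$ because the adversary is required to enumerate every element of $L'$ at some finite time. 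This exhibits a single generator that satisfies exhaustive generation for both $L$ and $L'$ in the limit, as required.

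I expect the main subtlety, such as it is, to lie in carefully handling the $F_n$ term: one must observe that $F_n$ is always finite to make the ``only if'' direction clean, and that $F_n$ eventually absorbs the finitely many missing elements to make the ``if'' direction go through. Beyond this bookkeeping, the argument is short and parallels the analogous observation for approximate breadth (\cref{obs:approx-breadth-finite-non-unique}).
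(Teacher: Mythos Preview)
Your proof is correct and, for the ``only if'' direction, follows the same approach as the paper's argument in the paragraph preceding the observation (bounding $|L \triangle L'|$ via the triangle inequality through $\generator(1:\infty)$); your explicit bookkeeping with $F_n$ is exactly the justification underlying the paper's claim that $|L \setminus \generator(1:\infty)| < \infty$. You additionally supply the ``if'' direction via the constant generator outputting an enumeration of $L$, which the paper omits---a reasonable inclusion given that \cref{def:finiteNonUniqueness} is stated as a biconditional, though only the ``only if'' direction is actually invoked downstream in \cref{thm:unconditional-lowerbound-general-NonUnique}.
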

        \begin{remark}[Exhaustive Generation Does Not Satisfy the Uniqueness Criterion]
            Note that the above proof can be made constructing -- there is a generator which generates exhaustively from both $L$ and $L'$ provided $L$ and $L'$ differ in finitely many elements.
            This implies that exhaustive generation does not satisfy the uniqueness criteria.
        \end{remark}

    \subsection{Membership Oracle Problem}\label{apx:mop}
        
        In this section, we define the Membership Oracle Problem (MOP), which is required for the impossibility results of \citep{kalavasis2025limitslanguagegenerationtradeoffs}, but not required for the characterizations in our work. For more details, we refer to Definitions 5 and 6 in \citep{kalavasis2025limitslanguagegenerationtradeoffs}.
        
        \begin{restatable}[Membership Oracle Problem \citep{kalavasis2025limitslanguagegenerationtradeoffs}]{definition}{defMOP}\label{def:membership access}\label{def:mop}
            Given a generator $\generator$,
            the membership oracle problem for $\generator$, denoted as $\mathsf{MOP}(\generator)$, is defined as follows: given the description of $\generator$ and a string $x$, output $\textsf{Yes}$ if $x\in \supp(\generator)$ and output $\textsf{No}$ otherwise.
        \end{restatable}

\section{{Formal Definition of} Language Identification {in the Limit}}\label{apx:further-background}
\label{sec:furtherbackground}   
    In this section, {we provide the formal definition of language identification in the limit.} %
        
    For a fixed collection $\cL$, an adversary and an identifier play the following game: 
    The adversary chooses a language $K$ from $\cL$ without revealing it to the identifier, and it begins \emph{enumerating} the strings of $K$ (potentially with repetitions) $x_1,x_2,\dots$ over a sequence of time steps $n = 1,2,3,\dots$. 
    The adversary can repeat strings in its enumeration,
    but the crucial point is that for every string $x \in K$, there must be at least one time step $n$ at which
    it appears. At each time $n$, the identification algorithm $\algo{I}$, given the previous examples $x_1,x_2,\dots,x_n$, outputs an index $i_n$ that corresponds to its guess for the index of the true language $K$.
        Language identification in the limit is then defined as follows.
        \begin{definition}
            [Language Identification in the Limit \citep{gold1967language}]\label{def:Identification}
            Fix some $K$ from the language collection $\cL = \{L_1, L_2,\dots\}$.
                The identification algorithm  $\algo{I} = (\algo{I}_n)$ identifies $K$ in the limit if there is some $n^* \in \N$ such that for all steps $n > n^*$, the identifier’s guess $i_n$ satisfies $i_{n} = i_{n-1}$ and $L_{i_n} = K.$
                The language collection $\cL$ is identifiable in the limit if there is an identifier that identifies in the limit any $K \in \cL,$ for any enumeration of $K$.
                {In this case, we say that the identifier identifies the collection $\cL$ in the limit.}
        \end{definition}
        It is important to note that the above definition imposes some stability to the algorithm: since there can be multiple appearances of $K$ in the enumeration of $\cL$, an algorithm identifies $K$ in the limit only if it eventually \emph{stabilizes} (\ie{}, $i_n = i_{n-1}$ for $n$ larger than some $n^*$) to a correct index (\ie{}, $L_{i_n} = K$).
        A natural question is which collections of languages are identifiable in the limit. Angluin \citep{angluin1980inductive} provided a condition that characterizes such collections (see \cref{def:angliun-criterion}). %
        \begin{theorem}
        [Characterization of Identification in the Limit \citep{angluin1980inductive}]
        \label{thm:angluin-id-limit}
        The following holds for any countable collection of languages $\cL.$
        \begin{enumerate}
            \item $\cL$ is identifiable in the limit if it satisfies Angluin's condition and one has access to the tell-tale oracle.
            \item If there is an algorithm that identifies $\cL$ in the limit, then Angluin's condition is true and the tell-tale oracle can be implemented.
        \end{enumerate}
        \end{theorem}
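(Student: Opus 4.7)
The plan is to prove both directions of the characterization, corresponding to the two statements. For sufficiency, I would design an explicit identifier. For necessity, I would reverse-engineer tell-tales from the given identifier via a back-and-forth diagonalization argument.

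For the \textbf{sufficiency} direction, the identifier I have in mind works as follows. At time $n$, with observed prefix $S_n = \{x_1, \dots, x_n\}$, use the tell-tale oracle to read the first $n$ enumerated elements of $T_i$ for each $i \le n$; call this partial set $T_i^{(n)}$. Also use the membership oracle to check $x_j \in L_i$ for $j, i \le n$. Output the smallest $i \le n$ such that $T_i^{(n)} \subseteq S_n$ and $S_n \subseteq L_i$. To verify correctness, fix the target $K = L_{i^*}$, where $i^*$ is the smallest index with $L_{i^*} = K$. For any $i < i^*$, either $K \not\subseteq L_i$, in which case some witness $x \in K \setminus L_i$ eventually appears in $S_n$ and breaks the containment $S_n \subseteq L_i$; or $K \subsetneq L_i$, in which case Angluin's condition applied to $L_i$ forces $T_i \not\subseteq K$ (otherwise $K$ would be a proper subset of $L_i$ containing $T_i$), so some $t \in T_i \setminus K$ eventually appears in $T_i^{(n)}$ but never in $S_n$, breaking the containment $T_i^{(n)} \subseteq S_n$. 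For $i = i^*$, the inclusion $T_{i^*} \subseteq K$ ensures $T_{i^*}^{(n)} \subseteq S_n$ for all large $n$, and $S_n \subseteq K = L_{i^*}$ always holds. Hence the identifier stabilizes on $i^*$.

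For the \textbf{necessity} direction, given an identifier $\algo{I}$, I would define the tell-tale $T_i$ as follows. Using the membership oracle, enumerate $L_i$ canonically as $y_1, y_2, \dots$, run $\algo{I}$ on this enumeration, let $n^*$ be its stabilization time (which exists and yields an index of $L_i$ by hypothesis), and set $T_i = \{y_1, \dots, y_{n^*}\}$. Clearly $T_i \subseteq L_i$ is finite. Suppose for contradiction there is $j$ with $T_i \subseteq L_j \subsetneq L_i$. I would then construct an adversarial enumeration of $L_i$ on which $\algo{I}$ fails to stabilize, by alternating phases. Starting from the prefix $y_1, \dots, y_{n^*}$, alternate between a Mode-$L_i$ phase (append elements from $L_i$, including each $y_k$ at some stage, until $\algo{I}$ outputs an index $i'$ with $L_{i'} = L_i$) and a Mode-$L_j$ phase (append elements from $L_j \subseteq L_i$ until $\algo{I}$ outputs an index $i''$ with $L_{i''} = L_j$). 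Each phase terminates in finitely many steps, because if, say, Mode-$L_j$ did not terminate, then completing that phase to a full enumeration of $L_j$ (possible since $L_j$ is countable and we always stay in $L_j$) would yield an enumeration of $L_j$ on which $\algo{I}$ never outputs an index of $L_j$, contradicting identification of $L_j$; the analogous argument works for Mode-$L_i$. The resulting sequence is a valid enumeration of $L_i$ (every element of $L_i$ appears by our $y_k$-inclusion rule, and every appended element lies in $L_i$), yet $\algo{I}$ oscillates between indices of $L_i$ and $L_j \ne L_i$, contradicting the identification hypothesis.

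The \textbf{main obstacle} is the back-and-forth construction in the necessity direction, specifically the finite-termination of each phase; the trick is to reinterpret non-termination as constructing a bad enumeration violating the identification guarantee for either $L_i$ or $L_j$. As for implementing the tell-tale oracle, given $i$ one simulates $\algo{I}$ on the canonical enumeration of $L_i$ and progressively outputs $y_1, y_2, \dots$; this enumeration is eventually constant at $T_i$, which satisfies the definitional requirement of an enumeration of the tell-tale even though the exact value of $n^*$ need not be detected.
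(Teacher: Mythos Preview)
The paper does not prove this theorem itself (it is quoted from Angluin's 1980 paper), so I assess your argument on its own merits. Your sufficiency direction is correct and is essentially Angluin's identifier.

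Your necessity direction has a genuine gap in the back-and-forth. After the first Mode-$L_i$ phase you may have appended elements of $L_i\setminus L_j$ (indeed you require some $y_k$ to appear, and $y_k$ need not lie in $L_j$). Hence if a \emph{later} Mode-$L_j$ phase fails to terminate, extending by $L_j$-elements does not yield an enumeration of $L_j$: the prefix already contains strings outside $L_j$, so you cannot invoke ``$\algo{I}$ identifies $L_j$'' to get a contradiction. Your assertion ``we always stay in $L_j$'' holds only for the very first Mode-$L_j$ phase. The standard repair is to take $T_i$ to be the content of a \emph{locking sequence} $\sigma$: a finite sequence over $L_i$ such that $\algo{I}(\sigma\tau)$ outputs an index of $L_i$ for \emph{every} $L_i$-extension $\tau$, not merely along the canonical continuation. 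Existence is shown by the same style of diagonalization---if no finite $\sigma$ is ever locking, dovetail the $y_k$'s with output-changing $L_i$-extensions to build an enumeration of $L_i$ on which $\algo{I}$ never settles. With this $T_i$, if $T_i\subseteq L_j\subsetneq L_i$, any enumeration of $L_j$ beginning with $\sigma$ stays inside $L_i$, so by locking $\algo{I}$ outputs an index of $L_i$ forever and never identifies $L_j$: a one-shot contradiction, no alternation needed. Your tell-tale-oracle implementation has a related problem: streaming $y_1,y_2,\dots$ enumerates all of $L_i$, not a finite tell-tale; the correct oracle enumerates the content of the current locking-sequence candidate, updating the candidate whenever a destabilizing $L_i$-extension is discovered, and this process eventually stabilizes.
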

        The above tight characterization shows that language identification is information-theoretically impossible even for simple collections of languages, such as the collection of all regular languages. 
        {Crucially, access to the tell-tale oracle is necessary for identification in the limit (its existence alone is not sufficient) \cite[Theorem 2]{angluin1980inductive}.}

\section{{Comparison to \citet{charikar2024exploringfacetslanguagegeneration,charikar2024exploringfacetslanguagegenerationV2}}}\label{appendix:relatedWorks}
            See \cref{sec:related-work} for a timeline of the works \citet{charikar2024exploringfacetslanguagegeneration}, \citet{charikar2024exploringfacetslanguagegenerationV2}, and the present work.
            In the following, we map the relevant results of \citet{charikar2024exploringfacetslanguagegenerationV2} to some of our results.
            \begin{itemize}
                \item \textbf{Characterization of Generation with Exact Breadth:}
                    Their result showing that Weak Angluin’s Condition with Existence (Proposition 6.1 in their work) is necessary for exhaustive generation is comparable to the lower bound for exhaustive generation in \cref{Main}.
                    Their result showing the sufficiency of Weak Angluin’s Condition with Existence (Proposition 6.2 in their work) for exhaustive generation is comparable to the upper bound for exhaustive generation in \cref{lem:upper-bound-exhaustive-breadth-function}.
                    Their result showing the sufficiency of Weak Angluin’s Condition with Enumeration (Proposition 6.2 in their work) for exhaustive generation with only membership queries is comparable to \cref{rem:exhaustive-generation-upper-bound}.

                \item \textbf{Characterization of Exhaustive Generation:}
                    Their result showing that Angluin’s Condition is necessary for generation with exact breadth (Proposition 5.3 in their work) is comparable to the upper bound in \cref{thm:exact-breadth}.
            \end{itemize}
            Finally, as mentioned in \cref{sec:related-work}, our work provides several additional contributions for existing notions of breadth/stability beyond these shared results (see \cref{sec:results:characterizations,sec:results:characterizations,rem:beyondBreadth:1,rem:beyondBreadth:2,sec:results:stability,rem:statistical}). 
            Further, our work also introduces new notions of breadth/stability and provides results for them (see \cref{apx:landscape:breadth:hallucinations,apx:landscape:stability:hallucinations,apx:further}).

\newpage

\end{document}